\definecolor{greyC}{RGB}{180,180,180}
\definecolor{greyL}{RGB}{235,235,235}
\theoremstyle{plain}
\newtheorem{theorem}{Theorem}[section]
\newtheorem{lemma}[theorem]{Lemma}
\theoremstyle{definition}
\newtheorem{definition}[theorem]{Definition}
\theoremstyle{remark}
\newtheorem{remark}[theorem]{Remark}
\icmltitlerunning{Virtual Homogeneity Learning: Defending against Data Heterogeneity in Federated Learning}
\begin{document}

\twocolumn[
\icmltitle{Virtual Homogeneity Learning: Defending against Data Heterogeneity in Federated Learning}

% It is OKAY to include author information, even for blind
% submissions: the style file will automatically remove it for you
% unless you've provided the [accepted] option to the icml2022
% package.

% List of affiliations: The first argument should be a (short)
% identifier you will use later to specify author affiliations
% Academic affiliations should list Department, University, City, Region, Country
% Industry affiliations should list Company, City, Region, Country

% You can specify symbols, otherwise they are numbered in order.
% Ideally, you should not use this facility. Affiliations will be numbered
% in order of appearance and this is the preferred way.
\icmlsetsymbol{equal}{*}

\begin{icmlauthorlist}
\icmlauthor{Zhenheng Tang}{equal,yyy}
\icmlauthor{Yonggang Zhang}{equal,yyy}
\icmlauthor{Shaohuai Shi}{comp}
\icmlauthor{Xin He}{yyy}
\icmlauthor{Bo Han}{yyy}
\icmlauthor{Xiaowen Chu}{sch,comp,yyy}
\end{icmlauthorlist}

\icmlaffiliation{yyy}{Department of Computer Science, Hong Kong Baptist University}
\icmlaffiliation{comp}{Department of Computer Science and Engineering, The Hong Kong University of Science and Technology}
\icmlaffiliation{sch}{Data Science and Analytics Thrust, The Hong Kong University of Science and Technology (Guangzhou)}

\icmlcorrespondingauthor{Xiaowen Chu}{xwchu@ust.hk}

% You may provide any keywords that you
% find helpful for describing your paper; these are used to populate
% the "keywords" metadata in the PDF but will not be shown in the document
\icmlkeywords{Federated Learning, Transfer Learning}

\vskip 0.3in
]

% this must go after the closing bracket ] following \twocolumn[ ...

% This command actually creates the footnote in the first column
% listing the affiliations and the copyright notice.
% The command takes one argument, which is text to display at the start of the footnote.
% The \icmlEqualContribution command is standard text for equal contribution.
% Remove it (just {}) if you do not need this facility.

%\printAffiliationsAndNotice{}  % leave blank if no need to mention equal contribution
\printAffiliationsAndNotice{\icmlEqualContribution} % otherwise use the standard text.

\begin{abstract}

In federated learning (FL), model performance typically suffers from client drift induced by data heterogeneity, and mainstream works focus on correcting client drift. We propose a different approach named virtual homogeneity learning (VHL) to directly ``rectify'' the data heterogeneity. In particular, VHL conducts FL with a virtual homogeneous dataset crafted to satisfy two conditions: containing \emph{no} private information and being separable. The virtual dataset can be generated from pure noise shared across clients, aiming to calibrate the features from the heterogeneous clients. Theoretically, we prove that VHL can achieve provable generalization performance on the natural distribution. Empirically, we demonstrate that VHL endows FL with drastically improved convergence speed and generalization performance. VHL is the first attempt towards using a virtual dataset to address data heterogeneity, offering new and effective means to FL.
\end{abstract}

%Theoretically, we prove that training with the virtual dataset can bound the \mbox{generalization} risk of the natural dataset. 

\section{Introduction}
% \vspace{-4pt}
% % 1. the success and challenges of FL
% success: what is FL, what FL does, why FL is important
Federated learning (FL)~\citep{mcmahan2017communication} has emerged as an important paradigm that enables decentralized data collection and model training. In the FL scenario, a large number of clients collaboratively train a shared model without sharing their natural (or private) data~\citep{kairouz2019advances}, enabling various real-world applications~\citep{keyboard,hsu2020federated,luo2019real}.
% 2. challenges and shortcoming caused by it
However, FL faces heterogeneity challenges caused by Non-IID data distributions from different clients as well as their diversity of computing and communication capabilities~\citep{fedprox,zhao2018federated,hsu2020federated}.
Severe heterogeneity can easily cause client drift~\citep{karimireddy2019scaffold}, leading to unstable convergence~\citep{li2019convergence} and poor model performance~\citep{wang2020tackling,zhao2018federated}.

% % 2. existing methods and shortcomings
To tackle federated heterogeneity, the first FL algorithm, FedAvg~\citep{mcmahan2017communication}, proposes to conduct more local computations and fewer communications during training. Though FedAvg addresses the diversity of computing and communication, the client drift induced by the Non-IID data distribution (data heterogeneity) has significant negative impact on FedAvg~\citep{fedprox,karimireddy2019scaffold,li2019convergence,wang2020tackling,luo2021no}. To address client drift, many efforts have been devoted to designing new learning paradigms, either on the client side, e.g., local training strategy~\citep{fedprox,karimireddy2019scaffold}, or on the server side, e.g., model aggregation strategy~\citep{aggregation,aggregation2}.
However, a recent benchmark work~\citep{crosssilononiid} shows that FedAvg can outperform its variants in many experimental settings. This reveals that it is challenging for a single method to address the client drift problem in multiple scenarios, reflecting the notoriety and subtlety of client drift. Therefore, addressing client drift remains a fundamental challenge of FL~\citep{kairouz2019advances}.

Another exciting direction is to directly ``rectify'' the cause of client drift, i.e., data heterogeneity. Specifically, sharing a small portion of private data~\citep{zhao2018federated} or the private statistical information~\citep{shin2020xor, yoon2021fedmix} can make data located in different clients more homogeneous. Nonetheless, the data sharing approach exposes FL to the danger of privacy leakage. Although differential privacy is a competitive candidate for avoiding privacy leakage, employing differential privacy can cause performance degradation~\citep{tramer2021differentially}. All these challenges motivate the following fundamental question:

\emph{Is it possible to defend against data heterogeneity in FL systems by sharing data containing \textbf{no} private information?}

In this work, we provide an affirmative answer to the question by proposing a novel approach, called virtual homogeneity learning (VHL). The key insight of VHL is that introducing more homogeneous data shared across clients can reduce the data heterogeneity~\citep{sharing1}. Specifically, VHL constructs a new rectified dataset for each client by sharing a virtual homogeneous dataset among all clients, which is independent of the natural datasets.

The key challenge of VHL is how to generate the virtual dataset to benefit model performance. Intuitively, combining an out-of-distribution dataset with the original data may sacrifice the generalization performance in different aspects such as distribution shift~\citep{da1,distributionshift}, noisy labels~\citep{noisy}, and garbage data~\citep{garbage}. In practice, it is challenging to sample data from the natural distribution for constructing a virtual dataset. Hence, introducing much virtual data drawn from a different distribution will cause the training distribution different from the test distribution, i.e., distribution shift, leading to poor generalization performance on the test set~\citep{da1,distributionshift}. Thus, the distribution shift is one crucial detrimental impact of introducing virtual datasets.

Fortunately, we can access both the labeled virtual dataset, i.e., source domain, and the labeled natural dataset, i.e., target domain, so that we can mitigate the distribution shift through the lens of domain adaptation (DA)~\citep{da1,distributionshift2,dann}. In particular, we propose matching the conditional distribution of the source and target domains. Our theoretical analysis shows that matching the virtual and natural distributions conditioned on label information can achieve provable generalization performance. Moreover, this can be instantiated by pulling natural and virtual data features from the same class together, as depicted in Figure~\ref{fig:FeatureCalibration-Theory}. To show the efficacy of VHL, we apply VHL in several popular FL algorithms including FedAvg, FedProx, SCAFFOLD, and FedNova on four datasets. Our experimental results show that VHL can boost the generalization ability and the convergence speed. 

Our main contributions include:
\begin{enumerate}
    \item
    We raise a fundamental question to explore how to reduce data heterogeneity in FL by sharing data that contains no private information.
    \item 
    To answer the question, we propose virtual homogeneity learning (VHL), making the first attempt to use virtual datasets to defend against data heterogeneity. To avoid the distribution shift induced by virtual datasets, VHL calibrates natural data features with that of virtual data, inspired by the rationale of domain adaptation.
    \item
    Through comprehensive experiments\footnote{The code is publicly available: \url{https://github.com/wizard1203/VHL}}, we demonstrate that VHL can drastically benefit the convergence speed and the generalization performance of FL models. 
\end{enumerate}

\begin{figure}[t!]
% \vspace{-2pt}
\small
% \subfigtopskip=2pt
    % \setlength{\belowdisplayskip}{2pt}
    % \setlength{\abovedisplayskip}{-5pt}
    % \subfigbottomskip=2pt
    % \subfigcapskip=-1pt
   \centering
    {\includegraphics[width=0.99\linewidth]{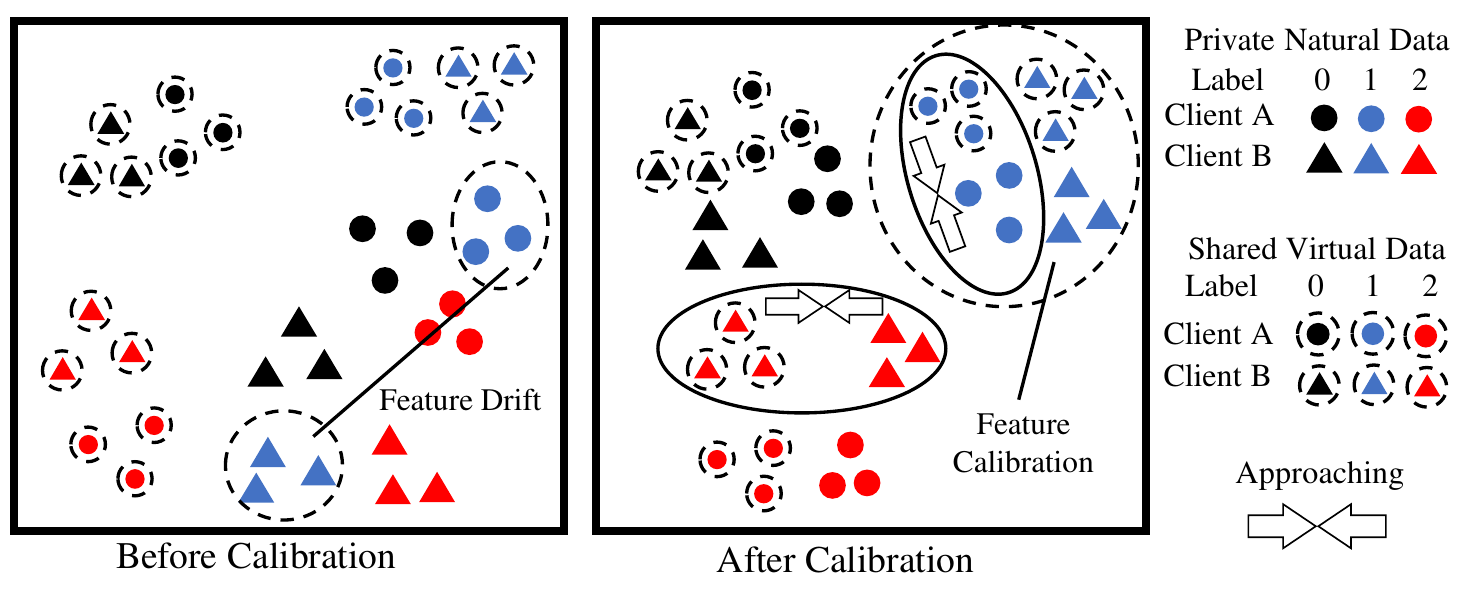}}
    \caption{Feature distribution of different clients. Circle and triangle points represent data on client A and B, respectively. Points surrounded by a dotted circle represent the virtual data. Different colors represent data of different labels. After local training (left), the features of private natural data of the same label on different clients are far away from others, but features of shared virtual data of the same label are close to others. Figure~\ref{fig:FeatureCalibration-Experiment} in section~\ref{sec:exp} shows the experimental observation of this Figure.}
    \label{fig:FeatureCalibration-Theory}
\vspace{-0.1cm}
\end{figure}

% \vspace{-4pt}
\section{Preliminaries}
% \vspace{-4pt}

%Before a detailed description of virtual homogeneity learning (VHL), we introduce preliminaries of this work. 
%VHL is proposed to facilitate \emph{federated learning} (FL) with better performance by introducing virtual datasets to defend against \emph{data heterogeneity} directly. To mitigate potential detrimental impacts of virtual datasets, i.e., distribution shift, we propose understanding and analyzing it through the lens of \emph{domain adaptation}.

\subsection{Federated Learning}
\textbf{Federated learning} aims to collaboratively train a global model parameterized by $w$ while omitting the need to access private data distributed over many clients~\citep{mcmahan2017communication, kairouz2019advances}. Formally, FL aims to minimize a global objective function $J(w)$:
\begin{equation} \label{fl_g}
    \min_{w} J(w) \triangleq \sum_{k=1}^{K} p_k J_{k}(w),
\end{equation}
where $K$ stands for the total number of clients, $p_k \geq 0$ is the weight of the $k$-th client such that $\sum_{k=1}^{K} p_k=1$, and $J_k(w)$ is the local objective defined as:
\begin{equation} \label{fl_l}
    J_{k}(w) \triangleq 
    \mathbb{E}_{(x,y) \sim \mathcal{P}_k(x,y)
    }
    \ell (f(x;w), y).
\end{equation}
Here, we denote $\mathcal{P}_k(x,y)$ as the joint distribution of data in client $k$, $\ell(\cdot, \cdot)$ as the loss function, e.g., cross-entropy loss, and $f =\phi \circ \rho$ as the prediction with $\phi$ being the feature extractor as well as $\rho$ the predictor, e.g., classifier for classification tasks.

A common optimization strategy proceeds through the communication between clients and the server round-by-round. In particular, the central server broadcasts the model $w^{r}$ to all selected clients $\mathcal{S}^{r}$ at round $r$ and then each client performs $E$ ($E \geq 1$) local updates, e.g., for the $k$-th client:
\begin{equation} \nonumber \label{l_ud}
    w_{k, j+1}^{r} \leftarrow w_{k, j}^{r} - \eta_{k, j} \nabla J_{k}(w_{k, j}^{r}), j = 0, 1, \cdots, E-1,
\end{equation}
where $w_{k, j}^{r}$ represents the $j$-th updates, i.e., $w_{k, 0}^{r} = w^{r}$, and $\eta_{k, j}$ is the learning rate. In the end of round $r$, all selected clients send the optimized parameter back to the server, and the central server performs a certain aggregation operation:
\begin{equation} \label{aggre1}
    w^{r+1} \leftarrow \text{AGGREGATE}(\cup_{k \in \mathcal{S}^{r}} w_{k, E-1}^{r}),
\end{equation}
where the $\text{AGGREGATE}$ operation is typically instantiated in a simple yet effective averaging manner~\citep{mcmahan2017communication}:
\begin{equation} \nonumber \label{avg}
    w^{r+1} \leftarrow \sum_{k \in \mathcal{S}^{r}} p_{k} w_{k, E-1}^{r},
    \ 
    p_{k} = \frac{n_{k}}{\sum_{i \in \mathcal{S}^{r}} n_{i}},
\end{equation}
with $n_k$ being the number of samples on the $k$-th client.

% and the central server performs a simple yet effective averaging operation to obtain new global model~\citep{mcmahan2017communication}:
% \begin{equation} \nonumber \label{avg}
%     w^{r+1} \leftarrow \sum_{k \in \mathcal{S}^{r}} p_{k} w_{k, E-1}^{r},
%     \ 
%     p_{k} = \frac{n_{k}}{\sum_{i \in \mathcal{S}^{r}} n_{i}},
% \end{equation}
% with $n_k$ being the number of samples on the $k$-th client.

% \textbf{Data heterogeneity} is one of the fundamental challenges in FL~\citep{kairouz2019advances}. Specifically, the distribution of data in the $k$-th client $\mathcal{P}_{k}$ may be significantly different from that of any other clients, leading to inconsistent objective~\citep{wang2020tackling} and unstable convergence~\citep{karimireddy2019scaffold}. Theoretically, according to~\citep{li2019convergence}, the data heterogeneity can be defined as:
% \begin{definition}\label{def:NonIIDbyOptimum}
% (Quantifying the degree of Data heterogeneity~\citep{li2019convergence}.)
%     Let $J^\star $ and $J_k^\star$ be the minimum values of $J$ (in Eq.~\ref{fl_g}) and $J_k$ (in Eq.~\ref{fl_l}), respectively. Then, the data heterogeneity is defined as $\Gamma = J^\star - \sum_{k=1}^K p_k J_k^\star, \text{and } \ p_k=\frac{n_k}{\sum_{i}n_i}$. If $\Gamma$ gets larger, it means the higher degree of Non-IID On the other hand, if $\Gamma$ goes to zero, then the data is i.i.d.
% \end{definition}
% We will use this definition throughput the paper.

% \vspace{-4pt}
\subsection{Domain Adaptation}
% \vspace{-4pt}
\textbf{Domain adaptation} approaches aim to solve the problem where the training and test data are drawn from different distributions~\citep{da1}. To perform domain adaptation, three assumptions are imposed on how distributions change across domains, i.e.., distribution shifts stem from: $\mathcal{P}(x)$, i.e., covariate shift~\citep{distributionshift,px}, $\mathcal{P}(y|x)$, i.e., conditional shift~\citep{conditionalshift,mingming}, or both, i.e., dataset shift~\citep{datashift,condition}.

One widely adopted strategy is to minimize the distribution discrepancy between the source distribution $\mathcal{P}_S$ and the target distribution $\mathcal{P}_T$ by finding a feature extractor $\phi$, such that the source and target domains have similar distributions in the learned feature space~\citep{dann,conditionalshift2,joint}. For the covariate shift assumption, the marginal distribution discrepancy is minimized for reducing generalization risk on the target domain~\citep{dann}:
\begin{equation} \nonumber
    d
    \left(\mathcal{P}_S\left(\phi\left(x\right)\right), \mathcal{P}_T\left(\phi\left(x\right)\right)
    \right),
\end{equation}
where $d$ is a measure of distribution discrepancy.
%, $\phi_S = \phi_S(x_S), \phi_T = \phi_T(x_T)$ stand for representations of variables $x_S \sim \mathcal{P}_S(x), x_T \sim \mathcal{P}_T(x)$. 
For the conditional shift assumption, the distribution discrepancy is usually conditioned on the label~\citep{mingming}:
\begin{equation}  \nonumber
    d\left(
    \mathcal{P}_S\left(\phi\left(x\right)|y\right), 
    \mathcal{P}_T\left(\phi\left(x\right)|y\right)
    \right),
\end{equation}
where $y$ denotes the label. For the most challenging dataset shift, the joint distribution discrepancy is usually employed:
\begin{equation}  \nonumber
    d\left(
    \mathcal{P}_S\left(\phi\left(x\right),y\right), 
    \mathcal{P}_T\left(\phi\left(x\right),y\right)
    \right).
\end{equation}

In this paper, we regard the virtual distribution as the source domain and the natural distribution as the target domain.

% \vspace{-4pt}
\section{Virtual Homogeneity Learning}
% \vspace{-4pt}
This section gives a detailed description of how virtual heterogeneity learning (VHL) promotes federated learning (FL) by mitigating data heterogeneity with a virtual dataset. 

% \vspace{-4pt}
\subsection{Heterogeneity Mitigation}
% \vspace{-4pt}
Clients in FL systems usually collect data independently, where the data distribution can vary with clients~\citep{kairouz2019advances}. Hence, data heterogeneity stems from the discrepancy in the data held by the clients. A straightforward solution to mitigate data heterogeneity is to send an extra common dataset to each client and minimize the local objective with both the local data and the shared data.

Unfortunately, the underlying assumption of data sharing is that these extra data are drwan from the natural distribution, because using arbitrary noisy or garbage data to train models produces poor generalization performance~\citep{noisy,garbage}. However, it is not realistic due to the privacy concern. That is, collecting and sharing data from the natural distribution are challenging and will expose FL to the danger of privacy leakage.

To address the challenge, we propose introducing a virtual dataset to reduce data heterogeneity such that the shared virtual data contains no private information and avoids the performance degradation of FL models on the natural dataset. Specifically, we first introduce a dataset collected independently of the natural data to bypass the private information concerns. Then, we expect that models learned from the virtual dataset can perform well on the natural distribution.

\subsection{Feature Calibration}
The key to promoting FL with a virtual dataset is calibrating the natural feature distribution and the virtual feature distribution. It is effortless to collect data independent of the natural data. For example, the virtual dataset can be generated from pure noise, e.g., Gaussian noise and structural noise drawn from untrained style-GAN~\citep{stylegan}. Thus, the critical challenge is how to ensure that the knowledge learned from the virtual dataset can be transferred to the natural dataset. Drawing inspiration from domain adaptation, we consider the virtual dataset as the source domain and the natural dataset as the target domain and perform domain adaptation to reduce the generalization risk of natural distribution. Built upon the rationale of domain adaptation, we thus propose calibrating these two distributions in the feature space.

The virtual data are crafted independent of the natural data, leading to dataset shift~\citep{datashift,condition} rather than covariate shift~\citep{distributionshift} or conditional shift~\citep{conditionalshift}. Note that, the virtual data (domain) is different from the heterogeneous domain studied in~\citep{feng1}, because the virtual dataset is arbitrarily constructed, e.g., pure noise. To tackle the challenging dataset shift, advanced methods~\citep{condition,joint,joint2} follow the rational of domain adaptation to match the joint distribution. Inspired by these approaches, we propose matching the virtual and natural distributions conditioned on the label information to mitigate the adverse impact induced by distribution shift. Thanks to the flexibility of constructing virtual dataset, we can collect virtual data such that the virtual and natural distribution have the same label distributions. Consequently, the joint distribution discrepancy minimization is reduced to the problem of conditional distribution minimization. Specifically, we introduce a conditional distribution mismatch penalty to alleviating the dataset shift:
\begin{equation} \label{fl_l_bar}
 \begin{split} 
    \mathbb{E}_{
	(x,y) \sim \mathcal{P}_k}
	\ell & \left(\rho \circ \phi \left(x\right), y\right) + 
	\mathbb{E}_{
	(x,y) \sim \mathcal{P}_v
	}
	\ell \left(\rho \circ \phi \left(x\right), y\right)
     \\ 
    + \ 
     & \lambda  \mathbb{E}_{y}
     d
     \left(
     \mathcal{P}_k\left(\phi\left(x\right)|y\right), 
     \mathcal{P}_v\left(\phi\left(x\right)|y\right)
     \right),
\end{split}   
\end{equation}
where $\mathcal{P}_v$ denotes the virtual distribution, e.g., a Gaussian distribution, $\rho \circ \phi=f$ stands for the local model, $\mathcal{P}_k\left(\phi\left(x\right)|y\right)$ is the feature distribution obtained by applying the feature extractor $\phi$ to a random variable $x \sim \mathcal{P}_k(x|y)$ given label $y$, and $\lambda$ is a hyperparameter. Similarly, $\mathcal{P}_v\left(\phi\left(x\right)|y\right)$ is the feature distribution obtained on $x \sim \mathcal{P}_v(x|y)$. The penalty $d
     \left(
     \mathcal{P}_k\left(\phi\left(x\right)|y\right), 
     \mathcal{P}_v\left(\phi\left(x\right)|y\right)
     \right)$ 
encourages the feature extractor to make the data sampled from different distributions, i.e., $\mathcal{P}_k$ and $\mathcal{P}_v$, having the same conditional feature distribution.

\subsection{Analysis}
The insight of VHL is depicted in Figure~\ref{fig:FeatureCalibration-Theory}. After local training of each round, the features of natural data in each category on different clients are far away from others, but the features of virtual data in each category are relatively close to others. Thus, pulling the natural feature towards the virtual feature with the same label can contribute to local models having similar features for each category, which is consistent with our empirical observation shown in Figure~\ref{fig:FeatureCalibration-Experiment}.

To provide a theoretical justification on the effectiveness of the proposed local objective function with distribution mismatch penalty, we analyze the relationship between the generalization performance and the misalignment between domains theoretically for classification tasks. To the best of our knowledge, we are the first to study the relationship for classification tasks. For example, the theoretical conclusions derived in~\citep{joint2} are mainly focused on regression problems and only empirical supports are given in~\citep{condition,joint}.

In light of the margin theory~\citep{margin2} that maximizing the margin between data points and the decision boundary achieves strong generalization performance, we first introduce the definition of \emph{statistical margin} \footnote{The margin definition is used in~\citep{margin3,margin4,margin5} for studying the generalization in the presence of Gaussian noise and adversarial noise.} to measure the generalization performance before stating the main theorem.
\begin{definition}\label{def:margin}
(Statistical margin.)
    We define statistical margin for a classifier $f$ on a distribution $\mathcal{P}$ with a distance metric $m$: $\textit{SM}_m(f,\mathcal{P})=\mathbb{E}_{(x,y) \sim \mathcal{P}} \inf_{f(x') \neq y} m(x',x)$.
\end{definition}

The defined statistical margin quantifies the degree of generalization performance. Hence, we can use it to quantify models' generalization performance on the target distribution, where the models are learned from the virtual dataset. In particular, the problem of achieving strong generalization performance is rephrased to maximizing the statistical margin:
\begin{equation} \label{goal}
    \mathbb{E}_{f:=\textit{VHL}(\mathcal{P}_v(x,y))} \textit{SM}_m(f,\mathcal{P}(x,y)),
\end{equation}
where $f:=\textit{VHL}(\mathcal{P}_v(x,y))$ means that the model $f$ is learned on the distribution $\mathcal{P}_v(x,y)$ using Eq.~\ref{fl_l_bar}, $\textit{SM}_m(f,\mathcal{P}(x,y))$ represents that the generalization performance of $f$ is evaluated on the distribution $\mathcal{P}(x,y)$.

Built upon Definition~\ref{def:margin} and Eq.~\ref{goal}, we are ready to state the following theorem (proof can be found in Appendix~\ref{appendix:Proof}).
\begin{theorem} \label{theo}
Let $f=\phi \circ \rho$ be a neural network decompose of a feature extractor $\phi$ and a classifier $\rho$, $\mathcal{P}(x,y)$ and $\mathcal{P}_v(x,y)$ are two separable distributions with identical label distributions. Then, learning $f$ with the proposed distribution mismatch penalty, i.e., Eq.~\ref{fl_l_bar} elicits an model with bounded statistical margin, i.e., Definition~\ref{def:margin}.
\end{theorem}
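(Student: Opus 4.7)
The plan is to combine three ingredients: separability of $\mathcal{P}$ and $\mathcal{P}_v$, the conditional feature alignment enforced by the penalty in Eq.~\ref{fl_l_bar}, and a margin argument for the classifier $\rho$. The goal is to argue that once the features of $\mathcal{P}$ and $\mathcal{P}_v$ agree conditional on $y$, any margin that $\rho$ attains around the virtual feature support automatically transfers to the natural feature support, and from there to an input-space statistical margin as required by Definition~\ref{def:margin}.

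First, I would unpack Eq.~\ref{fl_l_bar} into three quantitative consequences of a (near-)minimizer: (i) a small source risk $\mathbb{E}_{\mathcal{P}_v}\ell(f(x),y)$, which, combined with separability of $\mathcal{P}_v$, yields a strictly positive classifier margin $\gamma_\rho$ around the image $\phi(\mathrm{supp}\,\mathcal{P}_v)$ --- this is essentially the standard margin theorem of \citep{margin2} applied in feature space; (ii) a small target risk $\mathbb{E}_{\mathcal{P}_k}\ell(f(x),y)$, so that with high probability $\rho(\phi(x))=y$ for a natural sample; and (iii) a small conditional discrepancy $\mathbb{E}_y\,d(\mathcal{P}_k(\phi(x)|y),\mathcal{P}_v(\phi(x)|y))$, which, because $\mathcal{P}$ and $\mathcal{P}_v$ have identical label marginals, can be read as: for $\mathcal{P}$-almost every labeled point $(x,y)$, the feature $\phi(x)$ lies within distance $\varepsilon_{\mathrm{align}}$ of the class-$y$ virtual feature support. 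Consequence (i) plus (iii) then give that $\phi(x)$ is at distance at least $\gamma_\rho-\varepsilon_{\mathrm{align}}$ from any point where $\rho$ predicts a different label.

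The second half is the pull-back to input space. For a test pair $(x,y)\sim\mathcal{P}$ with $f(x)=y$, any $x'$ with $f(x')\neq y$ satisfies $\|\phi(x')-\phi(x)\|\ge\gamma_\rho-\varepsilon_{\mathrm{align}}$ by the feature-space margin above, so under a Lipschitz assumption on $\phi$ (or by taking the metric $m$ to be the feature-space pull-back, which is the convention implicit in \citep{margin3,margin4,margin5}) we obtain $m(x',x)\ge(\gamma_\rho-\varepsilon_{\mathrm{align}})/L$. Taking expectations over $(x,y)\sim\mathcal{P}$ then bounds $\textit{SM}_m(f,\mathcal{P})$ from below by a quantity that depends only on the separability gap of $\mathcal{P}_v$ and the residual alignment error, yielding Eq.~\ref{goal}.

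The main obstacle I anticipate is step three: pulling the margin back through $\phi$ in a way that does not require $\phi$ to be globally injective. A generic feature extractor may collapse different inputs to the same feature, so a naive reverse-Lipschitz bound fails. I expect the cleanest route is to either (a) state the result under the metric $m$ induced by $\phi$, which makes the pull-back tautological, or (b) invoke the separability of $\mathcal{P}$ itself to rule out the pathological preimages in a neighborhood of each natural sample. The conditional alignment term is what makes either choice workable, since it is precisely the quantity controlling how far the decision boundary of $\rho$ can sit from the natural feature cluster of label $y$.
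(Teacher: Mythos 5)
Your overall route is the paper's: obtain a margin on the virtual side from separability, transfer it to the natural side via the conditional alignment penalty, and measure everything in feature space so that no inverse of $\phi$ is needed. Your resolution (a) of the pull-back obstacle --- taking $m$ to be the metric induced by $\phi$ --- is exactly what the paper does implicitly when it rewrites $\textit{SM}_m(f,\mathcal{P}(x,y))$ as $\textit{SM}_m(\rho,\mathcal{P}(\phi(x),y))$ at the start of its appendix, so you correctly identified and settled the one place where an input-space argument would break.

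The genuine gap is in your step (iii). A small value of $\mathbb{E}_y\, d\left(\mathcal{P}_k(\phi(x)|y),\mathcal{P}_v(\phi(x)|y)\right)$ does not license the reading ``for $\mathcal{P}$-almost every $(x,y)$, $\phi(x)$ lies within $\varepsilon_{\mathrm{align}}$ of the class-$y$ virtual support'': for any transport-type discrepancy, a small \emph{average} cost still allows a small fraction of natural features to sit arbitrarily far from the virtual cluster, so the uniform quantity $\gamma_\rho-\varepsilon_{\mathrm{align}}$ that you subtract pointwise does not exist under the stated hypotheses. The paper avoids any pointwise claim: it takes $d$ to be the Wasserstein distance whose ground metric is the same $m$ appearing in Definition~\ref{def:margin} (this compatibility between the penalty's $d$ and the margin's $m$ is essential and must be made explicit), constructs an optimal coupling $\mathcal{J}^{*}_y$ of the two conditional feature distributions, and applies the triangle inequality for $m$ inside the infimum to obtain $|\textit{SM}_m(\rho,\mathcal{P})-\textit{SM}_m(\rho,\mathcal{P}_v)|\le\mathbb{E}_y\, d(\mathcal{P}(\phi|y),\mathcal{P}_v(\phi|y))$ directly in expectation (Lemma~\ref{lemma}); since the statistical margin is itself an expectation, this suffices, and your conclusion survives once step (iii) is replaced by this coupling argument. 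Two smaller omissions: the paper's decomposition also carries an empirical-to-population term $\textit{SM}_m(\rho,\mathcal{P}_v)-\textit{SM}_m(\rho,\tilde{D})$ for the finite virtual training set, which your purely population-level risk argument skips; and extracting a strictly positive $\gamma_\rho$ from a small cross-entropy risk alone is not automatic --- the paper, like you, leans on separability of $\mathcal{P}_v$ and defers this step to the margin literature rather than proving it.
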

\begin{remark}
Theorem~\ref{theo} shows that models learned with Eq.~\ref{fl_l_bar} can bound the statistical margin, learning to provable generalization performance. Moreover, larger statistical margin of the virtual distribution $\mathcal{P}_v$ will further provides a tighter bound of the statistical margin of the natural distribution, according to the proof of Theorem~\ref{theo} (details can be found in Appendix~\ref{appendix:Proof}).
\end{remark}

Theorem~\ref{theo} matches the underlying insights of previous effective methods~\citep{condition,joint} that aligning the joint distribution between the source and target domains can guarantee the generalization performance on target domains.

% \vspace{-4pt}
\subsection{Overview}
% \vspace{-4pt}
To perform VHL, we first craft a virtual dataset independent of the natural dataset and being separable. The conclusion is drawn from Theorem~\ref{theo}. Details can be found in Appendix~\ref{appendix:Proof}. In particular, we use an \emph{untrained} style-GAN to generate noise for each category. Visualization can be found in Appendix~\ref{appendix:GenerateData}. We also explore other methods (i.e., with a simple generative model and up-sampling with pure Gaussian noises) to generate virtual datasets in Sec.~\ref{sec:AblationStudy}. Then, we train local models with the local objective function Eq.~\ref{fl_l_bar}.

Algorithm~\ref{algo:VHL} summarizes the training procedure of applying VHL to FedAvg, highlighting modifications to FedAvg. The server works almost the same as FedAvg, except that it sends the virtual dataset to all selected clients at the beginning. Different from FedAvg, clients optimize its local model based on both natural data and virtual data. Since only the utilized training data and the objective function are modified, other federated learning algorithms, e.g., FedProx, SCAFFOLD, could be effortlessly combined with VHL.

\begin{algorithm}[!t]
% 	\caption{VHL: Virtual Homogeneity Learning}
	\caption{FedAvg with \textbf{VHL} }
	\label{algo:VHL}
% 	\small
	\textbf{server input: } initial ${w}^0$,  maximum communication round $ R  $ \\
	\textbf{client $k$'s input: } local epochs $ E$
	\begin{algorithmic}
% 		\small
		\STATE {\bfseries Initialization:} server distributes the initial model ${w}^0$ to all clients, 
		\colorbox{green!20}{as well as the virtual dataset $\tilde{D}$}.
        \STATE
        \STATE \textbf{Server\_Executes:}
            \FOR{each round   $ r=0,1, \cdots, R$}
        		\STATE server samples a set of clients $\mathcal{S}_r \subseteq \left \{1, ..., K \right \}$
                \STATE server \textbf{communicates} ${w}_r$ to all clients $k \in \mathcal{S}$
        		\FOR{ each client $ k \in \mathcal{S}^{r}$ \textbf{ in parallel do}}
                       \STATE $\small {w}_{k, E-1}^{r+1} \leftarrow \text{ClientUpdate}(k, {w}^{r})$ 
        		\ENDFOR
             \STATE $\small {w}^{r+1} \leftarrow \sum_{k=1}^K p_{k} {w}_{k,E-1}^{r}$
        	\ENDFOR
       \STATE
        \STATE \textbf{Client\_Training($\small k, {w}$):}
            \FOR{each local epoch $j$ with $j=0,\cdots, E-1 $}
              \STATE \colorbox{green!20}{$\small {w}_{k,j+1} \leftarrow {w}_{k,j} - \eta_{k,j} \nabla_{{w}}\bar{J}_{k}(w) $, i.e., Eq.~\ref{fl_l_bar}}
            \ENDFOR
        \STATE \textbf{Return} $w$ to server
        % \RETURN{ $w$ to server}
\end{algorithmic}
\vspace{-0.1cm}
\end{algorithm}
\vspace{-0.1cm}

\begin{figure*}[!t]
% \small
    \setlength{\abovedisplayskip}{-1pt}
    \subfigbottomskip=2pt
    \subfigcapskip=1pt
    \setlength{\abovecaptionskip}{0.cm}
  \centering
     \subfigure[FedAvg without VHL]{\includegraphics[height=0.27\textwidth,width=0.3\textwidth]{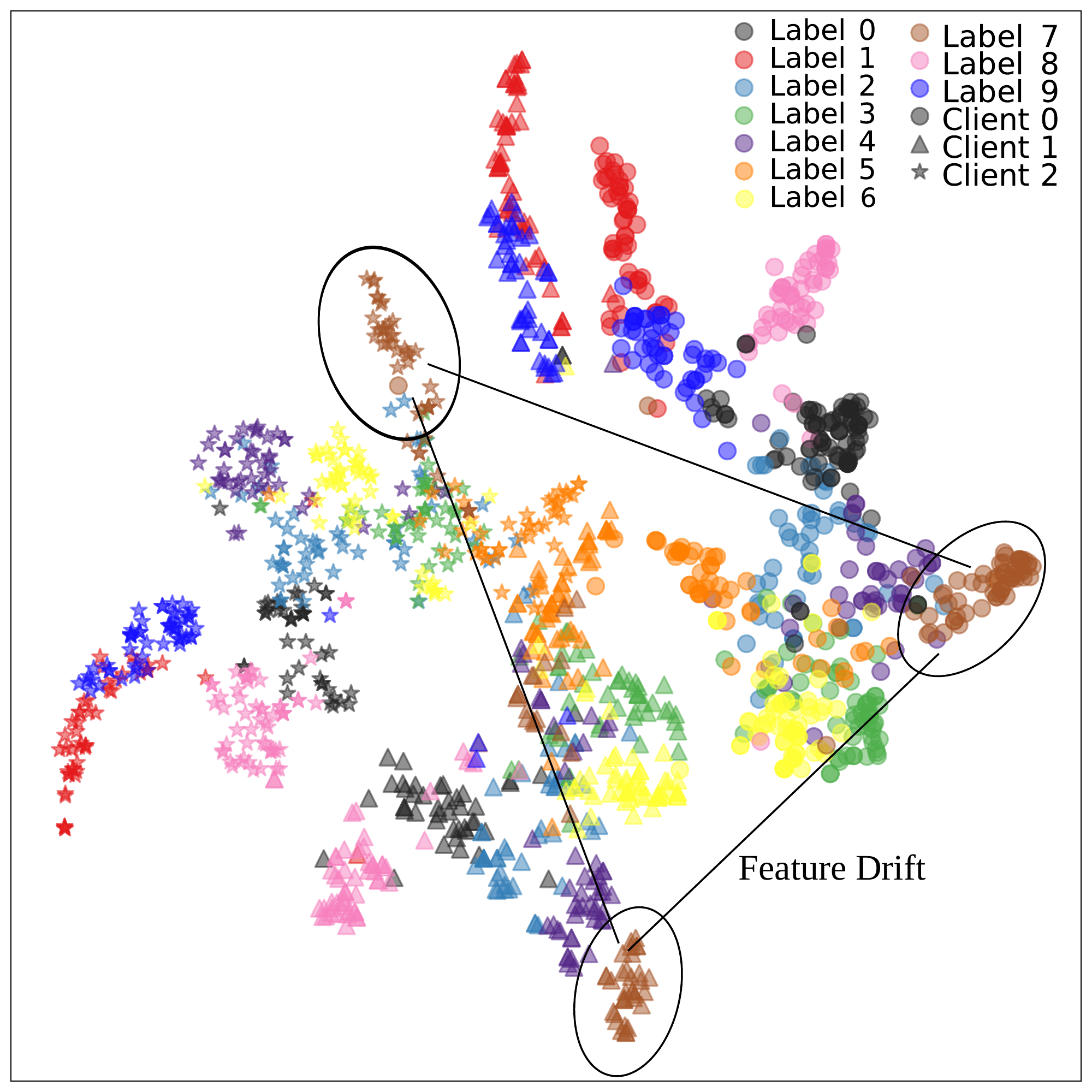}}
     \subfigure[FedAvg with Naive VHL]{\includegraphics[height=0.27\textwidth,width=0.3\textwidth]{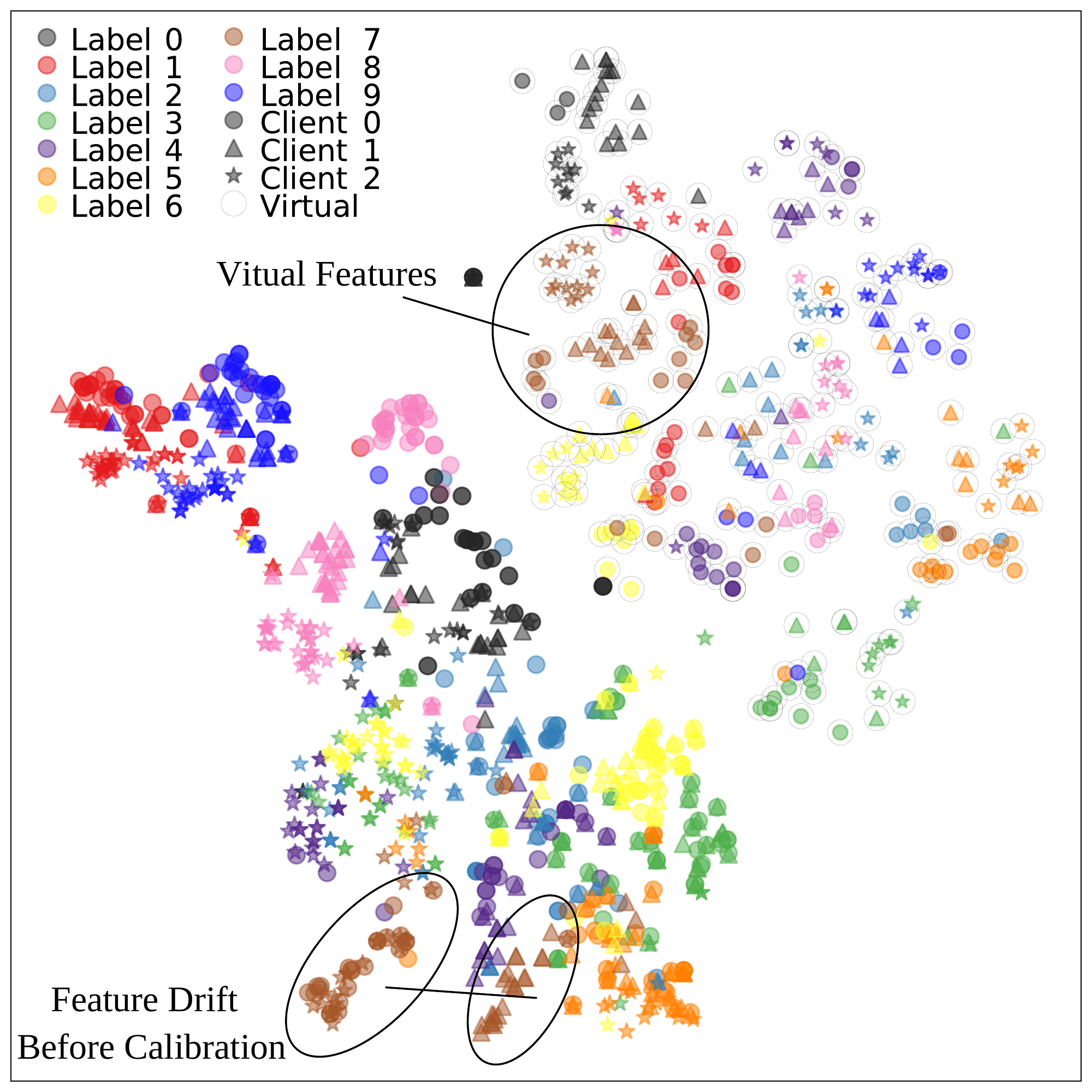}}
    \subfigure[FedAvg with VHL]{\includegraphics[height=0.27\textwidth,width=0.3\textwidth]{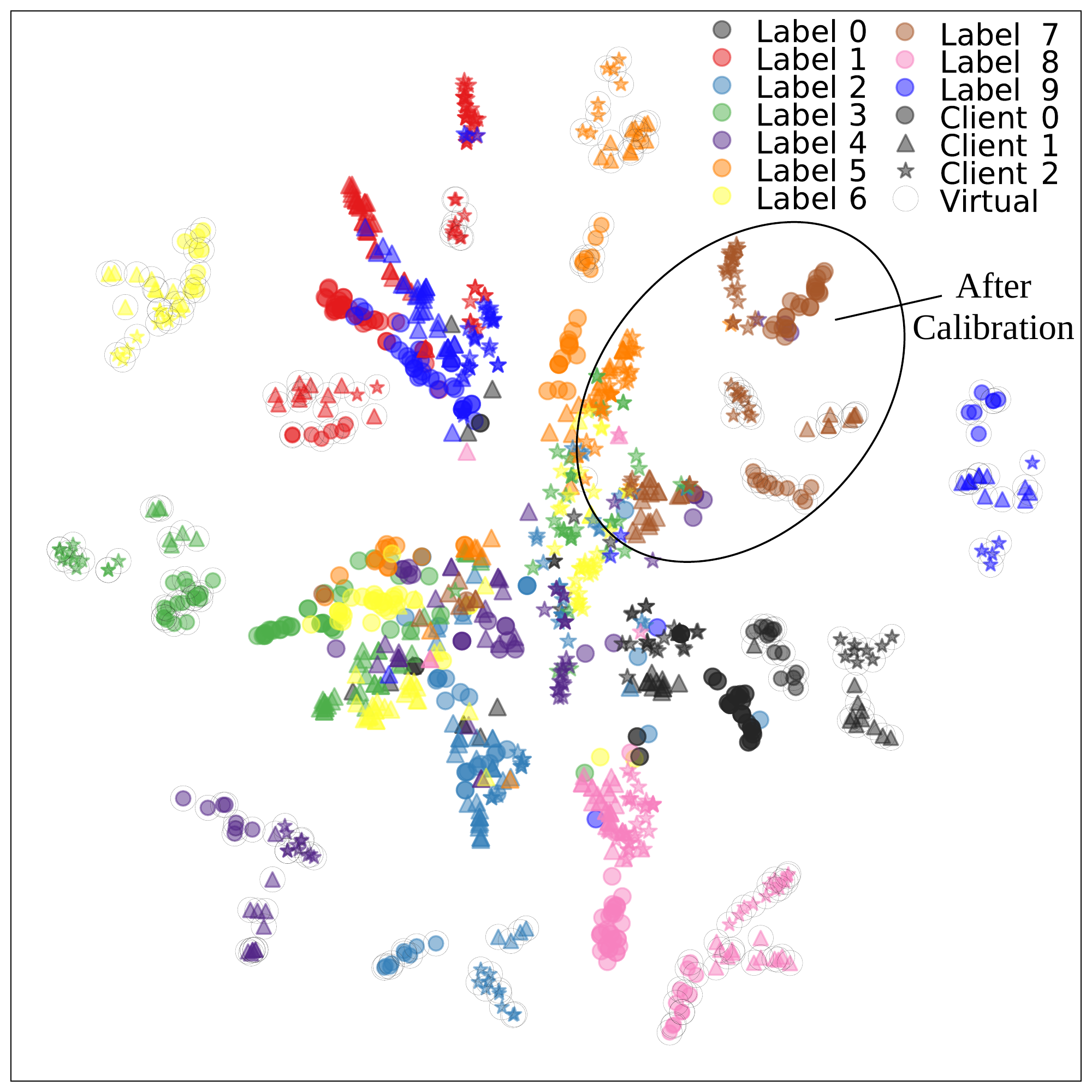}}
\caption{The t-SNE Visualization of empirical feature distribution of 3 different clients training with FedAvg, at $299$-th communication round. Different colors represent different classes of data, different shapes represent different clients, and the dotted circle means the virtual data. The \textbf{Naive VHL} means training with both the private natural and shared virtual data without feature calibration.}
\label{fig:FeatureCalibration-Experiment}
% \vspace{-0.0cm}
\end{figure*}

% \vspace{-4pt}
\section{Related Work}
% \vspace{-4pt}
We merely introduce the most related works in this section due to the limited space and leave more detailed discussion and literature review in Appendix~\ref{appendix:MoreRelated}.

\textbf{Mitigating Client Drift.}
A series of works focus on adding regularization to calibrate the optimization direction of local models, restricting local models from being too far away from the server model, including FedProx~\citep{fedprox}, FedDyn~\citep{acar2021federated}, SCAFFOLD~\citep{karimireddy2019scaffold}, MOON~\citep{li2021model}, and FedIR~\citep{hsu2020federated}. From the optimization point of view, some methods propose to correct the updates from clients, accelerating and stabilizing the convergence, like
FedNova~\citep{wang2020tackling}, FedAvgM~\citep{LDA}, 
FedAdaGrad, FedYogi, and FedAdam~\citep{reddi2020adaptive}.

\textbf{Sharing Data.}
Some works propose to utilize GANs~\citep{sharing1, long2021g} or adversarial learning~\citep{FLviaSyntheticData} to generate synthetic data based on the information of the raw data. Nonetheless, all above methods could expose the information of the raw data at some degree, i.e. intermediate features, statistic information, raw data.

The statistical information of the raw data~\citep{yoon2021fedmix,shin2020xor} is also employed to learn and share synthetic data.
The profiles of the raw images could be still seen from the synthetic images, exposing the raw data information at a degree. Thus, the intermedia activations~\citep{hao2021towards} or logits~\citep{li2019fedmd,chang2019cronus,luo2021no} of the raw data are exploited to enhance FL. However, these methods may expose the accurate high-level semantic information of raw data, making clients under the risks of exposing the private data. Specifically, the raw data may be reconstructed by feature inversion methods~\citep{zhao2021what}. Table~\ref{tab:demystifySharingData} in Appendix~\ref{appendix:MoreRelated} summarizes these sharing data approach, but we omitted the comparison with works because of the above privacy concerns .

% The statistical information of the raw data is also employed to learn and share synthetic data.
% %And some works utilize the statistic of the raw data information to learn and share synthetic data. 
% The profiles of the raw images could be still seen from the synthetic images, exposing the raw data information at a degree. XorMixFL~\citep{shin2020xor} proposes to collect other clients' XOR-encoded data samples that are decoded only using each client's own data samples, also exposing raw data information to other clients. FedMIX~\citep{yoon2021fedmix} proposes to let clients send and receive the averaged local data of each batch. Strictly speaking, this method exposes the statistic information of local data, i.e. the mean values.

% Some works propose to share the intermedia activations~\citep{hao2021towards} or logits~\citep{li2019fedmd,chang2019cronus,luo2021no} to enhance FL. However, these methods may expose the accurate high-level semantic information of raw data, making clients under the risks of exposing the private data. Specifically, the raw data may be reconstructed by feature inversion methods~\citep{zhao2021what}. Table~\ref{tab:demystifySharingData} in Appendix~\ref{appendix:MoreRelated} summarizes these sharing data approach, but we omitted the comparison with works because of the mentioned risk of exposing the raw data.

% \vspace{-4pt}
\section{Experimental Studies}\label{sec:exp}
% \vspace{-4pt}

\subsection{Experiment Setup}
% \vspace{-4pt}

\textbf{Federated Datasets and Models.}
To verify the effectiveness of VHL, we exploit a popular FedML framework~\citep{chaoyanghe2020fedml} to conduct experiments over various datasets including CIFAR-10~\citep{Krizhevsky09learningmultiple}, FMNIST~\citep{xiao2017fashion}, SVHN~\citep{SVHN}, and CIFAR-100~\citep{Krizhevsky09learningmultiple}. We use ResNet-18~\citep{he2016deep} for CIFAR-10, FMNIST and SVHN, and ResNet-50 for CIFAR-100. To emulate data distribution in FL, we partition the datasets with a commonly used Non-IID partition method, Latent Dirichlet Sampling~\citep{LDA}. Following existing works~\citep{li2021model, luo2021no}, each dataset is partitioned with two different Non-IID degrees using $a = 0.1$ and $a = 0.05$. 
%Furthermore, we also consider the impact of the number of local epochs and the number of clients, expanding the application scope of VHL.

Moreover, to verify the effect of VHL on other kinds of Non-IID data distributions, we conduct experiments on CIFAR-10 with 10 clients using another two kinds of Non-IID partition methods: (1) 2-classes partition~\citep{mcmahan2017communication}: each client only has 500 data samples of 2 classes; (2) subset partition~\citep{zhao2018federated}: all clients have data of all 10 classes, but each client has one dominant class with 4950 samples, and each of the remaining classes has 5 or 6 samples, which is an extremely imbalanced distribution.

% \subsection{Other Non-IID Partition Methods}\label{appendix:MoreFacets}

\textbf{Baselines.}
We conduct FedAvg~\citep{mcmahan2017communication} and recent effective FL algorithms that are proposed to address the client drift problem, including FedProx~\citep{fedprox}, SCAFFOLD~\citep{karimireddy2019scaffold}, and FedNova~\citep{wang2020tackling}, with or without VHL. The detailed hyper-parameters of each algorithm in each setting are reported in Appendix~\ref{appendix:DetailedExp}.

% We compare our methods with FedAvg~\citep{mcmahan2017communication} and recent effective FL algorithms that are proposed to address the client drift problem, including FedProx~\citep{fedprox}, SCAFFOLD~\citep{karimireddy2019scaffold}, and FedNova~\citep{wang2020tackling}. We first run experiments on 10-client and 100-client settings with the four training algorithms, which try to align the results with the original papers. Then we apply our VHL into these four algorithms to show how VHL significantly improves the performance in different FL algorithms. The detailed hyper-parameters of each algorithm and each setting are reported in Appendix ~\ref{appendix:DetailedExp}.

\textbf{Generating Virtual Datasets.}
We use an \emph{un-pretrained} StyleGAN-v2~\citep{Karras2019stylegan2} without using any training data to generate the virtual dataset. Specifically, the server uses the StyleGAN to generate images with different latent styles and noises which are sampled from different Gaussian Distributions. We show these virual noise in Figure~\ref{fig:NoiseStyleGan32x32c10} and ~\ref{fig:NoiseStyleGan32x32c100} in Appendix~\ref{appendix:GenerateData}. The generated virtual images are then distributed to all clients at the beginning of training. We also explore other methods (i.e., generated with a simple generative model and up-sampling with pure Gaussian noises) to generate virtual datasets in section~\ref{sec:AblationStudy}. More generation details of virtual datasets are listed in Appendix~\ref{appendix:GenerateData}.

\begin{table}[t]
\centering
\caption{Results with/without VHL on CIFAR-10.} 
% \vspace{1pt}
\small{
\begin{tabular}{c|ccc}
\toprule[1.5pt]
                  & ACC $\uparrow$ & ROUND  $\downarrow$      & Speedup $\uparrow$      \\
                  \cline{2-4} 
\multirow{-2}{*}{} & \multicolumn{3}{c}{w/ (w/o) \textbf{VHL}} \\
\midrule[1pt]
\multicolumn{4}{c}{centralized training ACC = \textbf{92.88\%} (92.53\%) } \\
\midrule[1pt]
\multicolumn{4}{c}{\cellcolor{greyL} $a=0.1, E=1, K=10$ ( Target ACC = 79\% )} \\
\midrule[1pt]
FedAvg             & \textbf{87.82} (79.98) & \textbf{128} (287) & \textbf{$\times$ 2.2} ($\times$ 1.0) \\
FedProx             & \textbf{87.30} (83.56) & \textbf{128} (188) & \textbf{$\times$ 2.2} ($\times$ 1.5) \\
SCAFFOLD             & \textbf{84.87} (83.58) & \textbf{90} (291) & \textbf{$\times$3.2} ($\times$ 1.0) \\
FedNova             & \textbf{87.56} (81.35) & \textbf{128} (351) & \textbf{$\times$ 2.2} ($\times$ 0.8)\\
\midrule[1pt]
\multicolumn{4}{c}{\cellcolor{greyL} $a=0.05, E=1, K=10$  ( Target ACC =  69\% )} \\
\midrule[1pt]
FedAvg             & \textbf{79.23} (69.02) & \textbf{112} ( 411 ) & \textbf{$\times$ 3.7} ($\times$ 1.0)  \\
FedProx             & \textbf{80.84} (78.66) & \textbf{151} ( 201 ) & \textbf{$\times$ 2.7} ($\times$ 2.0)  \\
SCAFFOLD             & \textbf{55.73} (38.55) & Nan (Nan) & \textbf{ Nan } (Nan)  \\
FedNova             & \textbf{80.59} (64.78) & \textbf{247} (Nan) & \textbf{$\times$ 1.66} (Nan)\\
\midrule[1pt]
\multicolumn{4}{c}{\cellcolor{greyL} $a=0.1, E=5, K=10$  ( Target ACC =  84\% )} \\
\midrule[1pt]
FedAvg             & \textbf{89.93} (84.79) & \textbf{91} (255
) & \textbf{$\times$ 2.8} ($\times$ 1.0)  \\
FedProx             & \textbf{86.41} (82,18) & \textbf{255} (Nan) & \textbf{$\times$ 1.0} (Nan)  \\
SCAFFOLD             & \textbf{87.27} (86.20) & \textbf{45} (66) & \textbf{$\times$ 5.7} ($\times$ 2.0) \\
FedNova             & \textbf{90.24} (86.09) & \textbf{67} (127) & \textbf{$\times$ 3.8} ($\times$ 1.0) \\
\midrule[1pt]
\multicolumn{4}{c}{\cellcolor{greyL}  $a=0.1, E=1, K=100$  ( Target ACC =  49\% )} \\
\midrule[1pt]
FedAvg             & \textbf{70.20} (49.61) & \textbf{385} (957) & \textbf{$\times$ 2.5} ($\times$ 1.0)  \\
FedProx             & \textbf{73.90} (49.97) & \textbf{325} (842) & \textbf{$\times$ 2.9} ($\times$ 1.1)  \\
SCAFFOLD             & \textbf{59.66} (52.24) & \textbf{479} (664) & \textbf{$\times$ 2.0} ($\times$ 1.4)  \\
FedNova             & \textbf{61.59} (46.53) & \textbf{ 554} (Nan) & \textbf{$\times$ 1.7} (Nan)  \\
\bottomrule[1.5pt] 
\end{tabular}
}
\begin{tablenotes}
    % \small
	\item  ``ROUND'' represents the communication rounds that need to attain the target accuracy. The notion $\downarrow$ ($\uparrow$) indicates smaller (larger) values are preferred. ``Nan'' means that the target accuracy is never attained during the whole training process.
\end{tablenotes}
\vspace{-0.3cm}
\label{tab:MainResult-cifar10}
\end{table}

\begin{table}[t]
\centering
% \caption{The best test accuracy and communication round to attain the target test accuracy with/without VHL on FMNIST.}
\caption{Results with/without VHL on FMNIST.} 
% \vspace{1pt}
\small{
\begin{tabular}{c|ccc}
\toprule[1.5pt]
                  & ACC $\uparrow$ & ROUND $\downarrow$       & Speedup  $\uparrow$      \\
                  \cline{2-4} 
\multirow{-2}{*}{} & \multicolumn{3}{c}{w/ (w/o) \textbf{VHL}} \\
\midrule[1pt]
\multicolumn{4}{c}{centralized training ACC = \textbf{93.80\%} (93.70\%) } \\
\midrule[1pt]
\multicolumn{4}{c}{\cellcolor{greyL} $a=0.1, E=1, K=10$ ( Target ACC = 86\% )} \\
\midrule[1pt]
FedAvg             & \textbf{92.05} (86.81) & \textbf{52} (119) & \textbf{$\times$ 2.3} ($\times$ 1.0)  \\
FedProx             & \textbf{90.68} (87.12) & \textbf{31} (135) & \textbf{$\times$ 3.8} ($\times$ 0.9)  \\
SCAFFOLD             & \textbf{90.27} (86.21) & \textbf{14} (143) & \textbf{$\times$ 8.5} ($\times$ 0.8)  \\
FedNova             & \textbf{91.88} (86.99) & \textbf{52} (83) & \textbf{$\times$ 2.3} ($\times$ 1.4) \\
\midrule[1pt]
\multicolumn{4}{c}{\cellcolor{greyL} $a=0.05, E=1, K=10$  ( Target ACC =  78\% )} \\
\midrule[1pt]
FedAvg             & \textbf{89.06} (78.57) & \textbf{53} (425) & \textbf{$\times$ 8.0} ($\times$ 1.0)  \\
FedProx             & \textbf{87.76} (81.96) & \textbf{30} (41) & \textbf{$\times$ 14.2} ($\times$ 10.4)  \\
SCAFFOLD             & \textbf{80.68} (76.08) & \textbf{58} (Nan) & \textbf{$\times$ 7.3 } (Nan)  \\
FedNova             & \textbf{87.25} (79.06) & \textbf{30} (538) & \textbf{$\times$ 14.2} ($\times$ 0.8) \\
\midrule[1pt]
\multicolumn{4}{c}{\cellcolor{greyL} $a=0.1, E=5, K=10$  ( Target ACC =  87\% )} \\
\midrule[1pt]
FedAvg             & \textbf{91.52} (87.45) & \textbf{51} (278) & \textbf{$\times$ 5.5} ($\times$ 1.0 )  \\
FedProx             & \textbf{88.27} (86.07) & \textbf{74} (Nan) & \textbf{$\times$ 3.8} (Nan)  \\
SCAFFOLD             & \textbf{91.82} (87.10) & \textbf{20} (105) & \textbf{$\times$ 13.9} ($\times$ 2.7)  \\
FedNova             & \textbf{91.86} (87.53) & \textbf{51} (193) & \textbf{$\times$ 5.5} ($\times$ 1.4) \\
\midrule[1pt]
\multicolumn{4}{c}{\cellcolor{greyL}  $a=0.1, E=1, K=100$  ( Target ACC =  90\% )} \\
\midrule[1pt]
FedAvg             & \textbf{91.14} (90.11) & \textbf{436} (658) & \textbf{$\times$ 1.5} ($\times$ 1.0)  \\
FedProx             & \textbf{91.37} (90.71) & \textbf{283} (491) & \textbf{$\times$ 2.3} ($\times$ 1.3)  \\
SCAFFOLD             & \textbf{87.91} (85.99) & Nan (Nan) & Nan (Nan)  \\
FedNova             & \textbf{88.34} (87.09) & Nan (Nan) & Nan (Nan) \\
\bottomrule[1.5pt] 
\end{tabular}
}
\vspace{-0.3cm}
\label{tab:MainResult-fmnist}
\end{table}

% \vspace{-4pt}
\subsection{Experimental Results}
% \vspace{-4pt}

\textbf{Main Results.} We use two metrics, the best accuracy during training and the number of communication rounds to achieve target accuracy, to compare the performance of different algorithms. The target accuracy is set as the best accuracy of FedAvg. The results on CIFAR-10, FMNIST, SVHN, and CIFAR-100 are shown in Tables~\ref{tab:MainResult-cifar10}, ~\ref{tab:MainResult-fmnist}, ~\ref{tab:MainResult-SVHN}, and ~\ref{tab:MainResult-cifar100}, respectively. We also compare the convergence speed in Figure ~\ref{fig:Convergence-MainResults} (a) on CIFAR-10\footnote{We leave more convergence figures in appendix~\ref{appendix:VisualizationFeature}}. The results show that VHL not only improves the generalization ability, but also accelerates the convergence. We also measure the client drift~\citep{karimireddy2019scaffold}, $ \frac{1}{|\mathcal{S}^r|}\sum_{i \in \mathcal{S}^r}  \| \bar{w} - w_i\|$, in Figure~\ref{fig:Convergence-MainResults} (b), where we report the client drift for the first 500 rounds as the early stage of training has obvious client drift problems. Note that we exclude FedNova from comparison as its client drift is very severe. The Figure~\ref{fig:Convergence-MainResults} (b) show that VHL could greatly alleviate client drift, thus accelerating convergence.

\textbf{Impacts of Non-IID Degree.} As shown in Tables \ref{tab:MainResult-cifar10}, \ref{tab:MainResult-fmnist}, \ref{tab:MainResult-SVHN}, and \ref{tab:MainResult-cifar100}, for all datasets with high Non-IID degree ($a=0.05$), the VHL gains more performance improvement than the case of lower Non-IID degree ($a=0.1$), which verifies that VHL could effectively defend against data heterogeneity.

\textbf{Different Non-IID Partition Methods.} The experiment results of another two kinds of Non-IID partition methods are listed in Table~\ref{tab:OtheNonIID}. Compared with LDA parition with $a=0.1$, these two kinds of Non-IID distribution make the generalization performance of FL drops much more. For these more difficult tasks, VHL gains much more performance improvement than other algorithms, which demonstrates that VHL could well defend different kinds of data heterogeneity.
\begin{table}[!t]
\centering
\caption{Experiment results of other Non-IID partition methods.} 
\vspace{1pt}
% \small{
% \footnotesize{
\scriptsize{
    \begin{tabular}{c|cccc}
    \toprule[1pt]
     &  \multicolumn{4}{c}{Test Accuracy  w/(w/o) \textbf{VHL}} \\ \cline{2-5}
    \multirow{-2}{*}{\makecell{Partition}} & \makecell{FedAvg} &   \makecell{FedProx} & SCAFFOLD & FedNova    \\
    \hline
    2-classes  & \textbf{71.88}/41.74 & \textbf{75.43}/54.21 & \textbf{66.02}/46.67 & \textbf{73.46}/43.38  \\
    Subset & \textbf{50.29}/38.33 & \textbf{39.41}/33.66  & \textbf{43.77}/33.35 &  \textbf{38.22}/32.18 \\
    \bottomrule
\end{tabular}
}
\label{tab:OtheNonIID}
\vspace{-0.3cm}
\end{table}

\textbf{Different Number of Clients.} The 10-client setting simulates the cross-silo FL in which clients have larger dataset, and the 100-client setting simulates cross-device FL in which clients have smaller dataset. As shown in Tables \ref{tab:MainResult-cifar10}, \ref{tab:MainResult-fmnist}, \ref{tab:MainResult-SVHN}, and \ref{tab:MainResult-cifar100}, VHL boosts the generalization performance and significantly reduces the communication cost to achieve the target accuracy in both scenarios.

\textbf{Different Local Epochs.} Results in Tables \ref{tab:MainResult-cifar10}, \ref{tab:MainResult-fmnist}, \ref{tab:MainResult-SVHN}, and \ref{tab:MainResult-cifar100} show that when increasing local training epochs, VHL also benefits FL. However, as shown in Table \ref{tab:MainResult-cifar100}, for CIFAR-100 when $E=5$, VHL does not work well. We suppose that reasons exist in two aspects: 1) As shown in Table~\ref{tab:MainResult-cifar100}, the test accuracy of centralized training using VHL is worse than the normal training. The larger number of classes need larger model capacity. The ResNet-50 cannot attain a high test accuracy on CIFAR-100, not to say 200 classes with the virtual dataset. 2) The larger local epochs means that clients will train more local iterations, which may cause clients to have a drifted feature expression of the virtual data, leading to the failure of calibration.

\textbf{Visualization of Feature Distribution.}
To understand how VHL helps generalize the model, we exploit t-SNE~\citep{T-SNE} to show the feature distributions of clients at the $299$-th communication round. To highlight the generalization performance, we show the features on test datasets, where merely 3 clients are used considering the better visualization. And we also report feature distributions at different communication rounds in Appendix~\ref{appendix:VisualizationFeature}. in Figure~\ref{fig:FeatureCalibration-Experiment}. From Figure~\ref{fig:FeatureCalibration-Experiment} (a), we can see that FedAvg makes most features from different labels at the same client be very close, and it makes the features from the same class be separate. It indicates that different clients have very different feature representation for the same or similar inputs in FedAvg, which results in poor generalization ability of the trained model. As shown in Figure~\ref{fig:FeatureCalibration-Experiment} (b), after local training with the virtual data without feature calibration (called \textbf{Naive VHL}), we can see the features of virtual data are consistent in all clients as the virtual data is shared, but the feature distributions of the original data are only slightly better than the naive FedAvg. Thus, we would like to pull the features of the original dataset close to the virtual dataset so that the corrected training data has a relatively low data heterogeneity. As shown in Figure~\ref{fig:FeatureCalibration-Experiment} (c), after we applied the feature calibration in VHL, the features from the same class at different clients are gathered closely and become compact. It indicates that training with VHL can help different clients learn homogeneous natural features for the same classes.

\begin{table}[t]
\small
% \footnotesize
\centering
% \caption{The best test accuracy and communication round to attain the target test accuracy with/without VHL on SVHN.} 
\caption{Results with/without VHL on SVHN.} 
% \vspace{1pt}
\small{
\begin{tabular}{c|ccc}
\toprule[1.5pt]
                  & ACC $\uparrow$ & ROUND  $\downarrow$       & Speedup $\uparrow$      \\
                  \cline{2-4} 
\multirow{-2}{*}{} & \multicolumn{3}{c}{w/ (w/o) \textbf{VHL}} \\
\midrule[1pt]
\multicolumn{4}{c}{centralized training ACC = 95.01\% (\textbf{95.27\%)} } \\
\midrule[1pt]
\multicolumn{4}{c}{\cellcolor{greyL} $a=0.1, E=1, K=10$ ( Target ACC = 88\% )} \\
\midrule[1pt]
FedAvg             & \textbf{93.49} (88.56) & \textbf{75} (251) & \textbf{$\times$ 3.3} ($\times$ 1.0)  \\
FedProx             & \textbf{91.70} (86.51) & \textbf{271} (Nan) & \textbf{$\times$ 0.9} (Nan)  \\
SCAFFOLD             & \textbf{87.54} (80.61) & Nan (Nan) & Nan (Nan)  \\
FedNova             & \textbf{93.35} (89.12) & \textbf{75} (251) & \textbf{$\times$ 3.3} ($\times$ 1.0)  \\
\midrule[1pt]
\multicolumn{4}{c}{\cellcolor{greyL} $a=0.05, E=1, K=10$  ( Target ACC =  82\% )} \\
\midrule[1pt]
FedAvg             & \textbf{92.26} (82.67) & \textbf{94} (357) & \textbf{$\times$ 3.8} ($\times$ 1.0)  \\
FedProx             & \textbf{89.30} (78.57) & \textbf{320} (Nan) & \textbf{$\times$ 1.1} (Nan)  \\
SCAFFOLD             & \textbf{83.89} (74.23) & \textbf{147} (Nan) & \textbf{$\times$ 2.4} (Nan)  \\
FedNova             & \textbf{91.82} (82.22) & \textbf{128} (741) & \textbf{$\times$ 2.8} ($\times$ 0.5)  \\
\midrule[1pt]
\multicolumn{4}{c}{\cellcolor{greyL} $a=0.1, E=5, K=10$  ( Target ACC =  87\% )} \\
\midrule[1pt]
FedAvg             & \textbf{90.52} (87.92) & \textbf{145} (131) & \textbf{$\times$ 0.9} ($\times$ 1.0)  \\
FedProx             & \textbf{87.20} (78.43) & \textbf{351} (Nan) & \textbf{$\times$ 0.4} (Nan)  \\
SCAFFOLD             & \textbf{88.04} (81.07) & \textbf{210} (Nan) & \textbf{$\times$ 0.6} (Nan)  \\
FedNova             & \textbf{90.99} (88.17) & \textbf{75} (162) & \textbf{$\times$ 1.7} ($\times$ 0.8)  \\
\midrule[1pt]
\multicolumn{4}{c}{\cellcolor{greyL}  $a=0.1, E=1, K=100$  ( Target ACC =  89\% )} \\
\midrule[1pt]
FedAvg             & \textbf{92.05} (89.44) & \textbf{362} (618) & \textbf{$\times$ 1.7} ($\times$ 1.0)  \\
FedProx             & \textbf{92.08} (89.51) & \textbf{356} (618) & \textbf{$\times$ 1.7} ($\times$ 1.0)  \\
SCAFFOLD             & 89.21 (\textbf{89.55}) & 968 (\textbf{643}) & $\times$ 0.6 ( \textbf{$\times$ 1.0 })  \\
FedNova             & \textbf{92.01} (82.08) & \textbf{676} (Nan) & \textbf{$\times$ 0.9} (Nan)  \\
\bottomrule[1.5pt] 
\end{tabular}
}
\vspace{-0.3cm}
\label{tab:MainResult-SVHN}
\end{table}

\begin{table}[t]
\centering
% \caption{The best test accuracy and communication round to attain the target test accuracy with/without VHL on CIFAR-100.} 
\caption{Results with/without VHL on CIFAR-100.} 
% \vspace{1pt}
\small{
\begin{tabular}{c|ccc}
\toprule[1.5pt]
                  & ACC $\uparrow$ & ROUND  $\downarrow$       & Speedup $\uparrow$      \\
                  \cline{2-4} 
\multirow{-2}{*}{} & \multicolumn{3}{c}{w/ (w/o) \textbf{VHL}} \\
\midrule[1pt]
\multicolumn{4}{c}{ centralized training ACC = 71.90 \% (\textbf{74.25 \%}) } \\
\midrule[1pt]
\multicolumn{4}{c}{\cellcolor{greyL} $a=0.1, E=1, K=10$ ( Target ACC = 67\% )} \\
\midrule[1pt]
FedAvg             & \textbf{70.04} (67.95) & \textbf{384} (497) & \textbf{$\times$ 1.3} ($\times$ 1.0)  \\
FedProx             & \textbf{68.29} (65.29) & \textbf{617} (Nan) & \textbf{$\times$ 0.8} (Nan)  \\
SCAFFOLD             & \textbf{67.88} (67.14) & \textbf{294} (766) & \textbf{$\times$ 1.7} ($\times$ 0.6)  \\
FedNova             & \textbf{69.58} (68.26) & \textbf{384} (472) & \textbf{$\times$ 1.3} ($\times$ 1.1)  \\
\midrule[1pt]
\multicolumn{4}{c}{\cellcolor{greyL} $a=0.05, E=1, K=10$  ( Target ACC =  62\% )} \\
\midrule[1pt]
FedAvg             & \textbf{65.61} (62.07) & \textbf{354} (514) & \textbf{$\times$ 1.5} ($\times$ 1.0)  \\
FedProx             & \textbf{64.39} (61.52) & \textbf{482} (Nan) & \textbf{$\times$ 1.1} (Nan)  \\
SCAFFOLD             & \textbf{60.67} (59.04) & Nan (Nan) & Nan (Nan)  \\
FedNova             & \textbf{66.45} (60.35) & \textbf{320} (Nan) & \textbf{$\times$ 1.6} (Nan)  \\
\midrule[1pt]
\multicolumn{4}{c}{\cellcolor{greyL} $a=0.1, E=5, K=10$  ( Target ACC =  69\% )} \\
\midrule[1pt]
FedAvg             & \textbf{69.85} (69.81) & \textbf{327} (283) & \textbf{$\times$ 0.9} ($\times$ 1.0)  \\
FedProx             & \textbf{63.83} (62.62) & Nan (Nan) & Nan (Nan)  \\
SCAFFOLD             & 69.43 (\textbf{70.68}) & 291 (\textbf{171}) & $\times$ 1.0 (\textbf{$\times$ 1.7})  \\
FedNova             & 68.86 (\textbf{70.05}) & Nan (\textbf{292}) & Nan (\textbf{$\times$ 1.0})  \\
\midrule[1pt]
\multicolumn{4}{c}{\cellcolor{greyL}  $a=0.1, E=1, K=100$  ( Target ACC =  48\% )} \\
\midrule[1pt]
FedAvg             & \textbf{53.45} (48.33) & \textbf{717} (967) & \textbf{$\times$ 1.3} ($\times$ 1.0)  \\
FedProx             & \textbf{52.68} (48.14) & \textbf{717} (955) & \textbf{$\times$ 1.3} ($\times$ 1.0)  \\
SCAFFOLD             & \textbf{54.93} (51.63) & \textbf{656} (827) & \textbf{$\times$ 1.5} ($\times$ 1.2)  \\
FedNova             & \textbf{53.50} (48.12) & \textbf{797} (967) & \textbf{$\times$ 1.2} ($\times$ 1.0)  \\
\bottomrule[1.5pt] 
\end{tabular}
}
\vspace{-0.3cm}
\label{tab:MainResult-cifar100}
\end{table}

\begin{figure}[h!]
   \centering
    \subfigtopskip=2pt
    \setlength{\belowdisplayskip}{2pt}
    \setlength{\abovedisplayskip}{-5pt}
    \setlength{\abovecaptionskip}{-2pt}
    \subfigbottomskip=-2pt
    \subfigcapskip=-1pt
    \subfigure[Test Accuracy]{\includegraphics[width=0.23\textwidth]{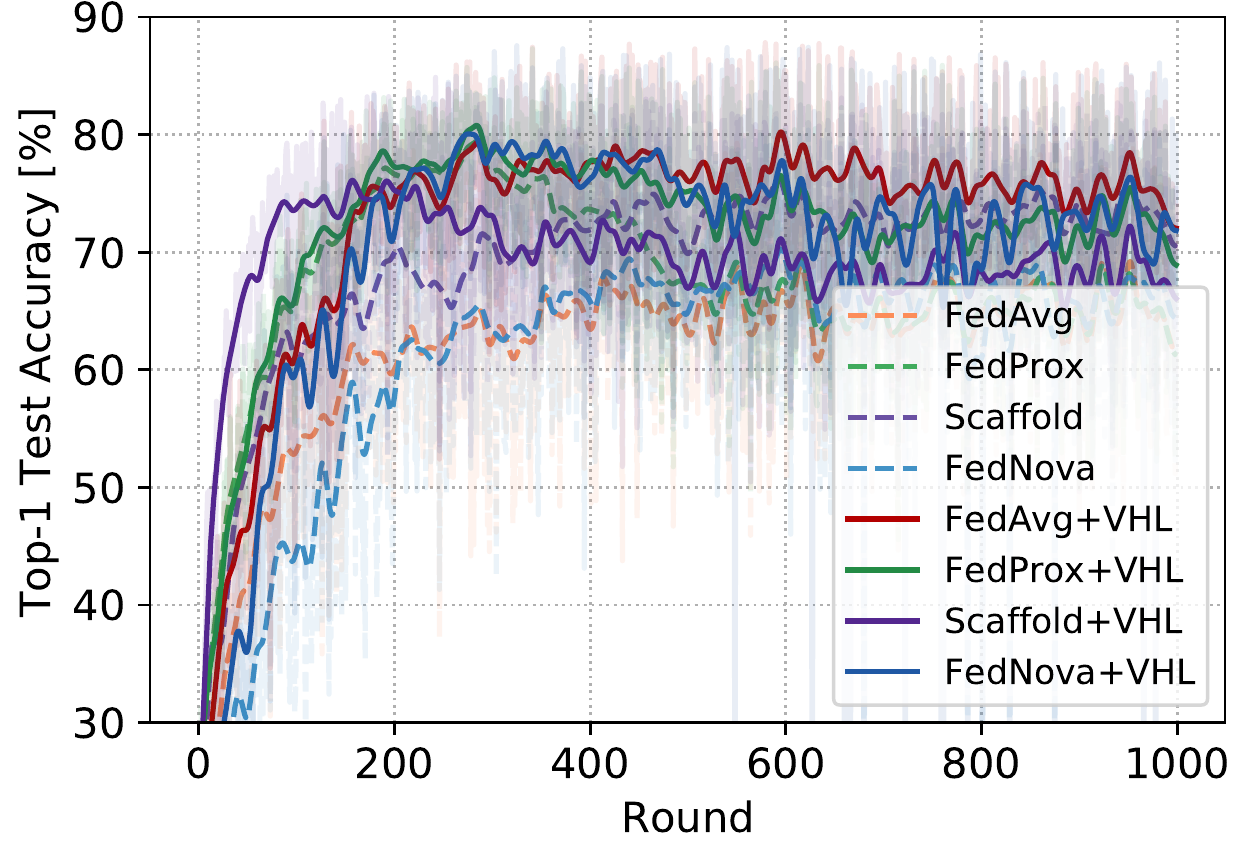}}
    \subfigure[Model Divergence]{\includegraphics[width=0.23\textwidth]{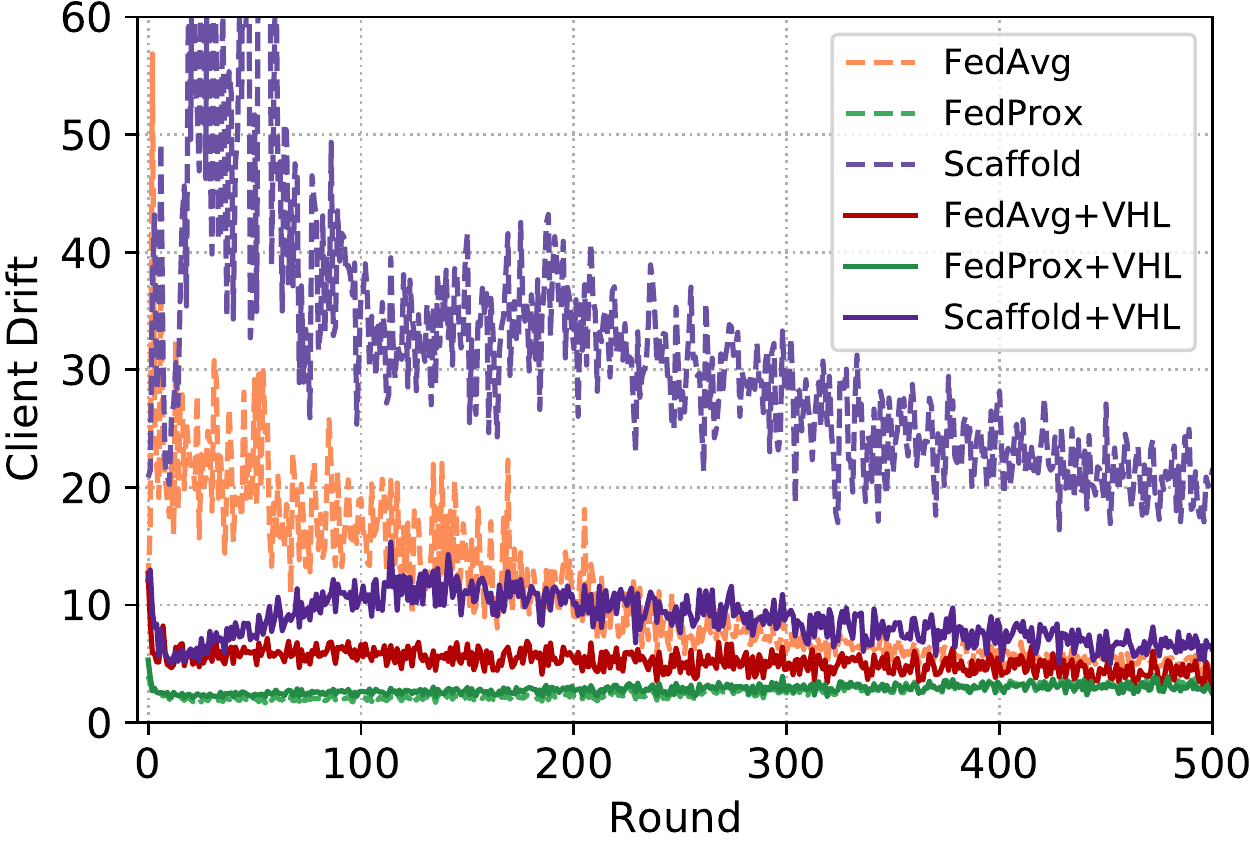}}
    \caption{CIFAR10 with $a=0.1$, $E=1$, $K=10$. Grey lines represent the test accuracy, and solid lines represents smoothed version of the grey lines for better visualization. }
    \label{fig:Convergence-MainResults}
\vspace{-0.1cm}
\end{figure}

% \begin{figure}[h!]
%   \centering
% % \!\!\!\!\!\!\!\!
%      \subfigure[CIFAR-10 ]{\includegraphics[width=0.24\textwidth]{Convergence/normal-resnet18_v2-cifar10-0.1-10-1.pdf}}
%      \subfigure[FMNIST]{\includegraphics[width=0.23\textwidth]{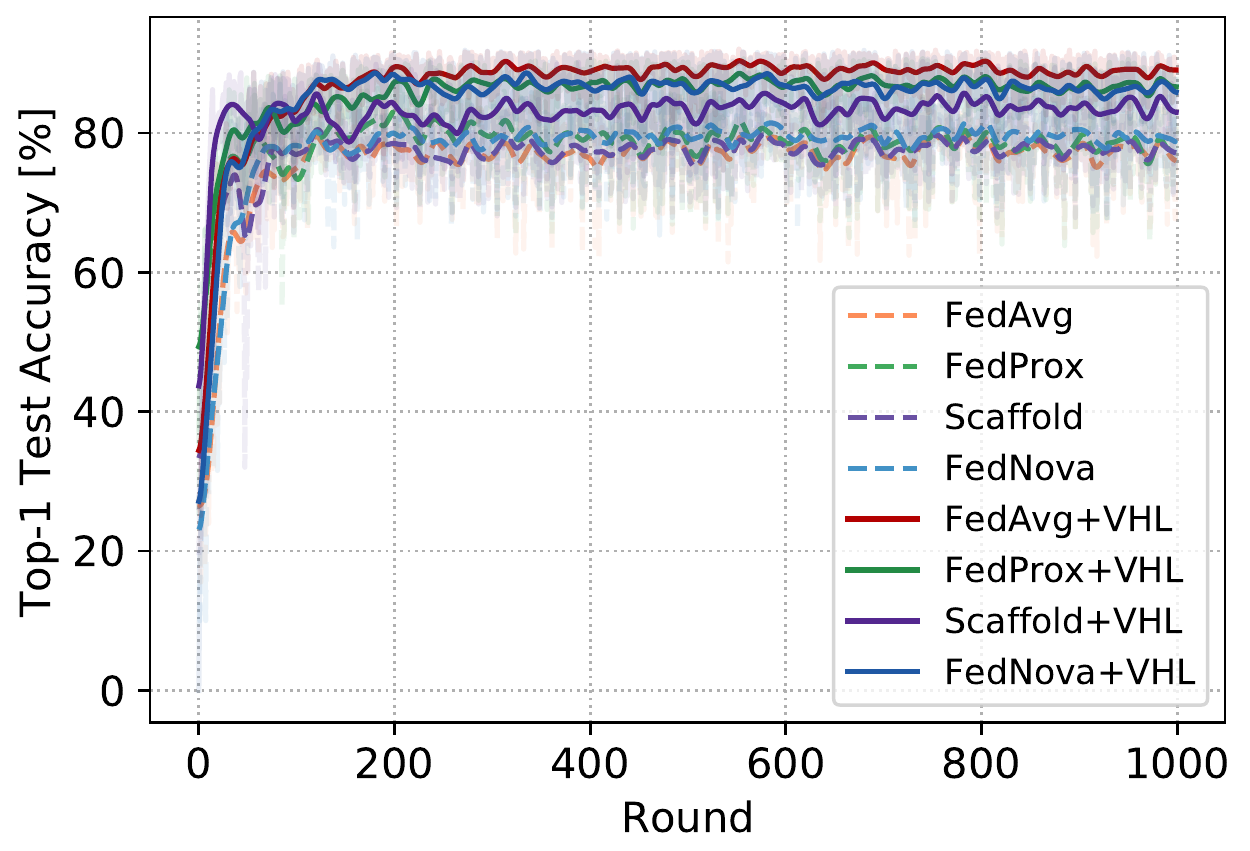}}
%     \subfigure[SVHN]{\includegraphics[width=0.23\textwidth]{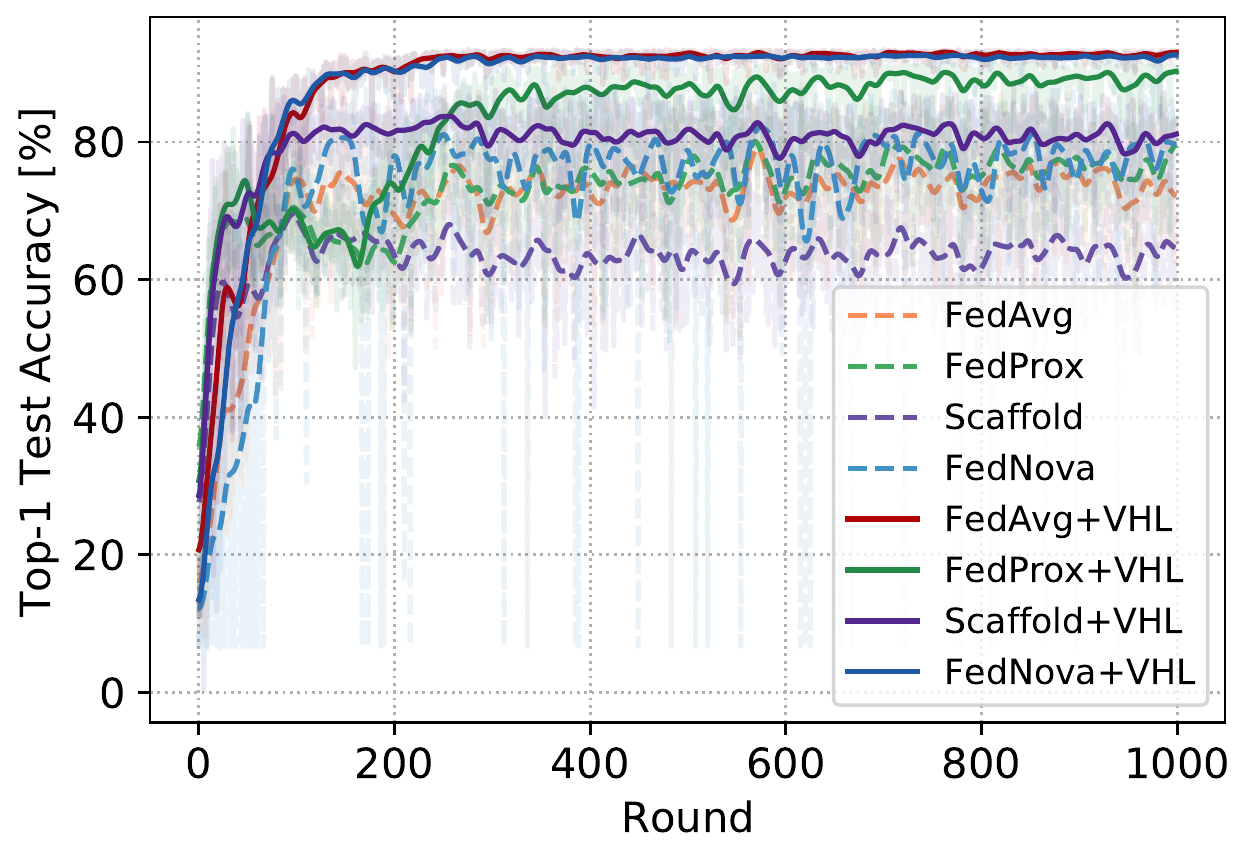}}
%     \subfigure[CIFAR-100]{\includegraphics[width=0.23\textwidth]{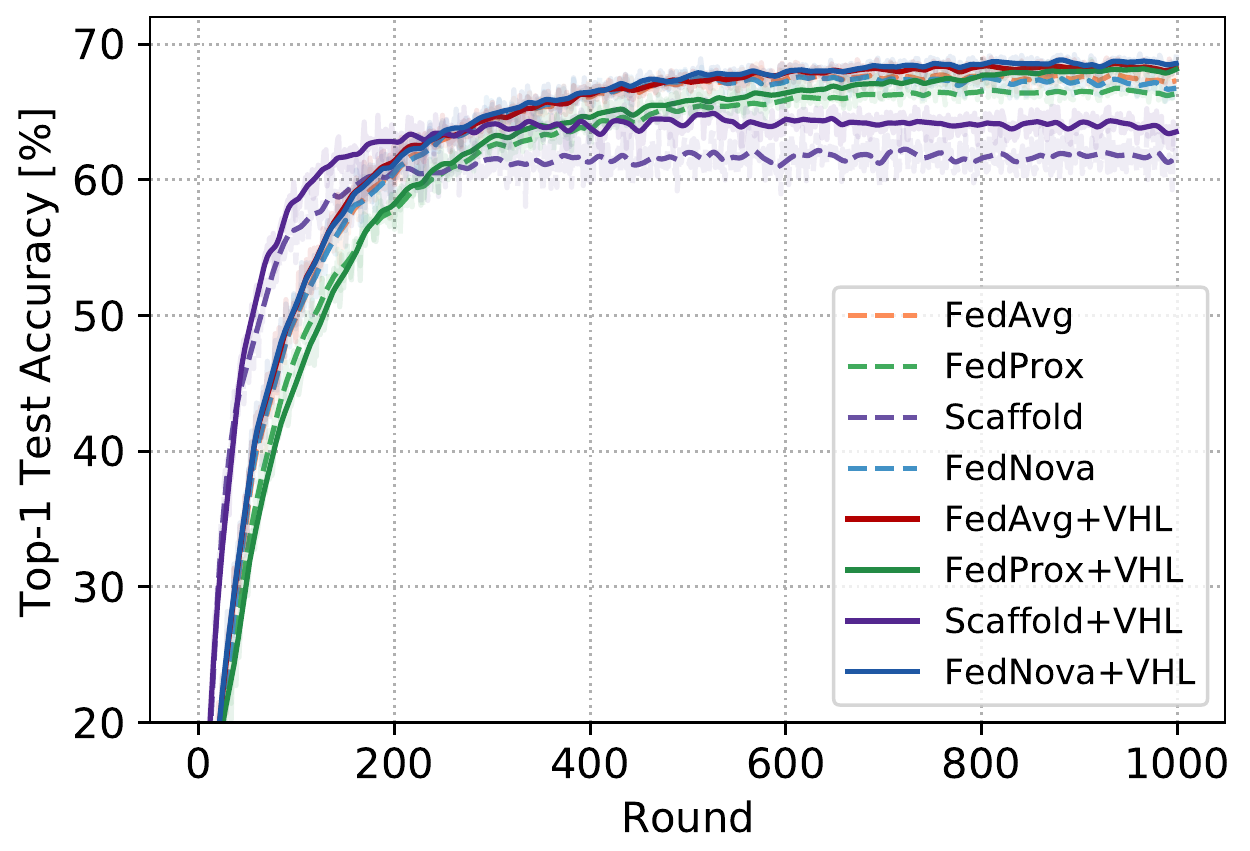}}
%     \caption{Convergence speed with $a=0.1$, $E=1$, $K=10$.}
%     \label{fig:Convergence-MainResults}
% % \vspace{-0.5cm}
% \end{figure}

% \vspace{-4pt}
\subsection{Other Facets of VHL} \label{sec:DiffFacetsVHL}
% \vspace{-4pt}

In this section, we dive into VHL deeper by further studying other facets related to it. In addition, we conduct experiments on CIFAR to investigate the intriguing property of VHL. All of these experimental results are listed in Table~\ref{tab:FurtherStudy}.

\textbf{Transfer Learning from Virtual Dataset.}
The deep learning model pretrained on virtual datasets could also perform well on realistic datasets~\citep{baradad2021learning}. Thus, to understand whether the benefits of VHL come from only conducting transfer learning from the virtual dataset, we pretrain the server model with the virtual dataset, and then conduct vanilla FedAvg. We call this simple algorithm as virtual federated transfer learning (VFTL). The results of this experiment are shown in Table~\ref{tab:FurtherStudy}. Due to the limited space, we show the convergence curve of this experiment in the Figure~\ref{fig:Ablation-Convergence} (a) in Appendix~\ref{appendix:MoreFacets}. The experimental results show that the model performance could not benefit from such a pretraining on virtual dataset. But the Figure~\ref{fig:Ablation-Convergence} (a) shows the VFTL could provide some convergence speedup at the beginning of training. We suppose such an initial speedup comes from the better feature extractor of the pretrained model. This phenomenon is compatible with the recently finding that the loading pretrained model could accelerate FL training~\citep{he2021fedcv}.

% The density of this virtual feature distribution is xxxxxxxxxxxxxxxxxxxx, in which different mean vector is defined as the center of class xxxx.

% \subsubsection{Naive Training with Virtual Dataset}

% \textbf{Naive Training with Virtual Dataset}
\textbf{Naive VHL.}
We study the effect of simply training with the natural data together with the virtual dataset, without any extra algorithm designs. In this experiment, clients will sample both natural data together with the virtual data, without feature calibration.

The experiment results are listed in Table~\ref{tab:FurtherStudy}, showing that even simply adding the virtual dataset could also benefit FL. We conjecture that the benefit of this may come from several aspects: 1) As the data heterogeneity decreases, the client drift is alleviated; 2) The client models could learn low-level features from the virtual data, thus there exists the effect of transfer learning from the virtual dataset; 3) By adding the virtual dataset, the clients now have more classes than before, which can effectively alleviate the negative impact of local dominant classes on classifiers, which is also studied by~\citep{luo2021no, Fed2}.

\textbf{Virtual Feature Alignment.}
To validate the effect of feature calibration, we completely eliminate the effect of the virtual data by removing the virtual dataset, and sampling new features from different Gaussian distributions. Then, we conduct feature calibration based on these shared features. We name this simple method as \textit{Virtual Feature Alignment} (VFA). Besides aligning the features, we can also employ approaches like style normalization~\citep{stylejin} to mitigate the domain gap. As shown in Table~\ref{tab:FurtherStudy}, feature calibration benefits FL even only with the random features. Although the performance of VFA falls behind the VHL, it is valuable to explore VFA, due to its computation efficiency.

\begin{table}[h!]
\centering
\caption{Experiment results of further study of VHL on CIFAR10.} 
% \vspace{-1pt}
\scriptsize{
\begin{tabular}{c|cccc}
\toprule[1.5pt]
  \!\!& {\small FedAvg} & \small FedProx &\small SCAF. &\small FedNova \\
\midrule[1.5pt]
Baselines          & 79.98 & 83.56 & 83.58 & 81.35  \\
VHL          & 87.82 & 87.30 & 84.87 & 87.56  \\
\midrule[1pt]
\multicolumn{5}{c}{\cellcolor{greyL} Other Facets of VHL} \\
\midrule[1pt]
VFTL  & 80.38 & 82.20 & 83.83 & 80.63  \\
\midrule[0.5pt]
Naive VHL & 86.50 & 85.66 & 85.70 & 85.74  \\
\midrule[0.5pt]
VFA          & 85.14 & 84.75 & 85.31 & 86.59   \\
\midrule[1pt]
\multicolumn{5}{c}{\cellcolor{greyL} Ablation Study } \\
\midrule[1pt]
Pure Noise          & 87.01 & 86.46 & 85.57 & 87.81  \\
Simple-CNN          & 84.87 & 85.30 & 84.15 & 85.25  \\
Tiny-ImageNet          & 84.05 & 83.62 & 81.57 & 85.41  \\
%\bottomrule[0.5pt] 
%Share Labels          & 87.24 & 85.90 & 86.25 & 88.36  \\
\bottomrule[0.5pt] 
$B_v = 64$        & 87.36 & 86.36 & 82.39 & 87.20   \\
$B_v = 128$        & 87.82 & 87.30 & 84.87 & 87.56   \\
$B_v = 256$         & 88.95 & 86.82 & 84.68 & 87.89  \\
$B_v = 384$         & 89.69 & 86.59 & 85.87 & 88.73  \\
\bottomrule[0.5pt] 
$\lambda = 0.1$        & 87.02 & 86.12 & 84.25 & 87.02  \\
$\lambda = 0.2$        & 87.04 & 86.02 & 84.41 & 87.65  \\
$\lambda = 0.5$        & 87.02 & 85.99 & 84.29 & 87.15  \\
$\lambda = 1.0$        & 87.82 & 87.30 & 84.87 & 87.56  \\
$\lambda = 2.0$        & 87.87 & 86.86 & 84.81 & 88.71  \\
$\lambda = 5.0$        & 83.52 & 88.34 & 85.58 & 88.47  \\
$\lambda = 10.0$        & 88.65 & 88.41 & 85.29 & 88.39  \\
\bottomrule[0.5pt] 
$h_{shallow}$        & 86.10 & 85.59 & 85.19 & 84.95  \\
$h_{middle}$        & 87.06 & 86.30 & 87.97 & 87.27  \\
$h_{deep}$        & 89.26 & 87.54 & 86.00 & 88.42  \\
$h_{last}$        & 87.82 & 87.30 & 84.87 & 87.56  \\
\midrule[1pt]
\multicolumn{5}{c}{\cellcolor{greyL} Model Capacity} \\
\midrule[1pt]
Res10-Baselines          & 83.55 & 82.76 & 82.91 & 83.14  \\
Res10-VHL          & 87.39 & 86.08 & 85.09 & 88.03  \\
\midrule[0.5pt]
Res18-Baselines         & 79.98 & 83.56 & 83.58 & 81.35  \\
Res18-VHL          & 87.82 & 87.30 & 84.87 & 87.56  \\
\midrule[0.5pt]
Res34-Baselines          & 82.73 & 84.04 & 81.11 & 82.48   \\
Res34-VHL           & 87.92 & 88.05 & 84.80 & 88.74  \\
\bottomrule[1.5pt] 
\end{tabular}
}
\vspace{-0.1cm}
\label{tab:FurtherStudy}
\end{table}

% \vspace{-4pt}
\subsection{Ablation Study} \label{sec:AblationStudy}
% \vspace{-4pt}

\textbf{VHL with Other Datasets.}
We study the effect of replacing the virtual dataset generated by the StyleGAN to other datasets. The first dataset is generated by upsampling the pure Gaussian Noise, which is shown in Figure~\ref{fig:Gaussian_Noise} in the Appendix~\ref{appendix:GenerateData}. The second dataset is generated by a simple CNN rather than StyleGAN shown in Figure~\ref{fig:cifar_conv_decoder} in the Appendix~\ref{appendix:GenerateData}. And the third dataset is a 10-class subset of Tiny-ImageNet~\citep{le2015tiny}.

As shown in Table~\ref{tab:FurtherStudy}, VHL with the simple pure noise could benefit FL, reducing the difficulty of generating the virtual dataset. And it shows that using a Tiny-ImageNet as our virtual dataset could not outperform using pure noise, which is a bit surprising. Because the realistic dataset may have more plentiful semantic information than the virtual dataset. The possible reason is that, the Tiny-ImageNet is more difficult to be separated than the generated virtual dataset, thus the feature calibration cannot work well. To verify this hypothesis, We report the curve of the training accuracy on the virtual dataset in Figure~\ref{fig:Ablation-Convergence} (c) in Appendix~\ref{appendix:MoreFacets}.

\textbf{Sampling More Virtual Data.}
% NOTE THAT: this subsetction should be placed to another setction, e.g., different facets of VHKL.
We note that under the high Non-IID degree, some clients would own much more data samples than the virtual data, reducing the effect of feature calibration. Therefore, we consider increasing the sampling weight of the virtual data, making clients to see more virtual data to strengthen the effect of calibration. We conduct VHL with CIFAR-10 with different batch size $B_v$ to verify the effect of sampling more virtual data. The experiment results are shown in Table~\ref{tab:FurtherStudy} and Figure~\ref{fig:Ablation-Convergence} (a) in  Appendix~\ref{appendix:MoreFacets}, demonstrating that sampling more virtual data could strengthen the effect of VHL.

\textbf{Different Calibration Weight.} We adjust different calibration weights $\lambda$ as sensitivity test. The experimental results are listed in Table~\ref{tab:FurtherStudy}, showing that VHL is not sensitive to the calibration weight.

\textbf{Using Features of Different Layers.}
To study the impact of features from different layers, we conduct calibration based on different output layers of ResNet-18. We find that using the features from the $4$-th layer of ResNet-18 has the best performance. We suppose that calibration based on too shallow layer may be not enough to impact the subsequent layers feature shift. However, the calibration based on the last layer may interrupt the normal classification between the natural data and the virtual data, because which have completely different high-level semantic information.

\textbf{Model Capacity.} Adding more data means deep learning models may need more model capacity, so we investigate the sensitivity of VHL to model capacity. We conduct experiments on CIFAR-10 with different models, including ResNet-10, ResNet-18 and ResNet-34. Results in Table~\ref{tab:FurtherStudy} show that VHL could perform well on all three models of different model capacity.

% \vspace{-4pt}
\subsection{Limitations} \label{sec:limitations}
% \vspace{-4pt}

\textbf{Guarantee the Diversity of the Virtual Dataset.} In our method, the virtual dataset needs the same number of classes as the original datasets, so that the label alignment could be implemented. However, when meeting numerous classes of original datasets, generating a virtual dataset with enough diversity may cost more calculation resources.

% \textbf{Model Capacity.} In VHL, the learning task of all clients consists of not only original datasets, but also the new virtual datasets. Therefore, we may need a stronger model than before.

\textbf{Extra Calculation.} VHL needs to conduct training on the virtual data, causing the extra calculation cost. VFA is a possible direction to reduce the calculation, as mentioned in Sec.~\ref{sec:DiffFacetsVHL}. We compare the calculation cost of different algorithms in Table~\ref{tab:CalculationCost} in Appendix~\ref{appendix:CalculationCost}.

\section{Conclusion}
In this paper, we find that FL could remarkably benefit from a virtual dataset containing no information related to the natural dataset, shedding light on the data heterogeneity mitigation from a virtual data perspective. Through the calibration on the homogeneous virtual data, FL could attain significant performance improvement and much less client drift. Our contribution not only exists in the improvements of model performance in FL, but also in inspiring a new viewpoint to FL with many intriguing experimental phenomena. We hope the future works could unearth more things about VHL and exploit it to enhance FL or other machine learning tasks.

\section*{Acknowledgements}
% \textbf{Acknowledgments.}
This work was supported in part by Hong Kong CRF grant C2004-21GF, Hong Kong RIF grant R6021-20, and Hong Kong Research Matching Grant RMGS2019\_1\_23.
YGZ and BH were supported by the RGC Early Career Scheme No. 22200720, NSFC Young Scientists Fund No. 62006202, and Guangdong Basic and Applied Basic Research Foundation No. 2022A1515011652.

% We summarize the possible future directions as below:

% Our contribution not only exists in the improvements of model performance in FL, but also in inspiring a new interesting angle to FL with many interesting experiment phenomena..

% \textbf{Transfer Learning from Virtual Datasets.} As Table~\ref{tab:FurtherStudy} shows, we find that without the class alignment, even naive training with noise, the model performance and convergence speed could be improved.

% \textbf{The Need of Better Definition of Non-IID Degree.} As pointed by Theorem xxxxx\ref{}, the definition of Non-IID degree in current literature is not well-defined, due to which could be hacked by our virtual dataset.

% \clearpage
% \newpage

% Introduce FedAvg

% In the unusual situation where you want a paper to appear in the
% references without citing it in the main text, use \nocite
\nocite{langley00}

\bibliography{cite}
\bibliographystyle{icml2022}

\newpage
\appendix
\onecolumn

% \section*{Appendix}

% \section{Do \emph{not} have an appendix here}

% \section{Notations and Proof}\label{appendix:notation}
\section{Proof}\label{appendix:Proof}

\subsection{Proof of Theorem~\ref{theo}}
Recall that the model $f$ can be decomposed as $f=\phi \circ \rho$, the statistical margin for $f$ then becomes:
\begin{equation}
    \textit{SM}_m(f,\mathcal{P}(x,y)) = 
    \textit{SM}_m(\rho,\mathcal{P}(\phi(x),y)).
\end{equation}
In what follows, we use $\mathcal{P}(\phi,y)$ to represent $\mathcal{P}(\phi(x),y)$ for brevity.

Before giving detailed proof, we introduce a lemma:
\begin{lemma} \label{lemma}
Let $\mathcal{P}$ and $\mathcal{P}_v$ be two distributions with identical label distributions, $d\left(\cdot, \cdot\right)$ be the Wasserstein distance of two distributions, i.e., 
\begin{equation}  \nonumber
    d
    \left(
    \mathcal{P}\left(\phi|y\right), \mathcal{P}_v\left(\phi|y\right) \right) 
    = \inf_{J \in \mathcal{J}(\mathcal{P}(\phi|y),\mathcal{P}_v(\phi|y))} \mathbb{E}_{(\phi, \phi') \sim J} \ m(\phi,\phi'),
\end{equation}
where $\mathcal{J}(\mathcal{P}(\phi|y),\mathcal{P}_v(\phi|y))$ is the set of joint distributions of $\mathcal{P}(\phi|y)$ and $\mathcal{P}_v(\phi|y)$. 
Then, we have
\begin{equation}  \nonumber
\mathbb{E}_{y}
d(\mathcal{P}(\phi|y), \mathcal{P}_v(\phi|y))
\geq
|\mathbb{E}_{\rho:=\textit{VHL}(\mathcal{P}_v(\phi,y))}
[\textit{SM}_m(\rho,\mathcal{P}(\phi,y)) - \textit{SM}_m(\rho,\mathcal{P}_v(\phi,y))]|.
\end{equation}
\end{lemma}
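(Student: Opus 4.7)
My plan is to rewrite each statistical margin as the expectation of a $1$-Lipschitz ``distance-to-decision-region'' function, and then bound the gap between the two margins using Kantorovich--Rubinstein duality applied pointwise in $y$. The identical-label-distribution assumption then lets me integrate out $y$ on both sides without an extra mismatch term, and a final Jensen step pulls the expectation over the random classifier $\rho$ outside the absolute value.

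First, I would unfold the definition of the statistical margin. For any fixed classifier $\rho$, let $\mathcal{B}_y(\rho) = \{\phi : \rho(\phi) \neq y\}$, so that
\begin{equation*}
\textit{SM}_m(\rho,\mathcal{P}(\phi,y)) = \mathbb{E}_{(\phi,y)\sim \mathcal{P}} \, g_y(\phi), \qquad g_y(\phi) := \inf_{\phi' \in \mathcal{B}_y(\rho)} m(\phi,\phi').
\end{equation*}
A direct application of the triangle inequality for the metric $m$ shows that $g_y$ is $1$-Lipschitz with respect to $m$, uniformly in $y$. This is the key analytic fact that lets me couple the margin gap to a Wasserstein distance.

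Next, I would invoke Kantorovich--Rubinstein duality for $W_1$ with ground metric $m$: for every $y$ in the (common) label support,
\begin{equation*}
d\bigl(\mathcal{P}(\phi|y),\mathcal{P}_v(\phi|y)\bigr) \;\ge\; \bigl|\mathbb{E}_{\phi\sim\mathcal{P}(\phi|y)} g_y(\phi) - \mathbb{E}_{\phi\sim\mathcal{P}_v(\phi|y)} g_y(\phi)\bigr|.
\end{equation*}
Taking expectation over $y$, using that $\mathcal{P}$ and $\mathcal{P}_v$ share the same label marginal, and then applying Jensen's inequality to move the absolute value outside the $y$-expectation, I obtain
\begin{equation*}
\mathbb{E}_{y}\, d\bigl(\mathcal{P}(\phi|y),\mathcal{P}_v(\phi|y)\bigr) \;\ge\; \bigl|\textit{SM}_m(\rho,\mathcal{P}(\phi,y)) - \textit{SM}_m(\rho,\mathcal{P}_v(\phi,y))\bigr|.
\end{equation*}

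Finally, since the left-hand side does not depend on $\rho$ and the inequality holds for every realization of the random classifier produced by $\textit{VHL}$, one more Jensen step over the distribution of $\rho$ yields
\begin{equation*}
\mathbb{E}_{y}\, d\bigl(\mathcal{P}(\phi|y),\mathcal{P}_v(\phi|y)\bigr) \;\ge\; \bigl|\mathbb{E}_{\rho}\bigl[\textit{SM}_m(\rho,\mathcal{P}(\phi,y)) - \textit{SM}_m(\rho,\mathcal{P}_v(\phi,y))\bigr]\bigr|,
\end{equation*}
which is the claim. The only delicate step I anticipate is verifying the $1$-Lipschitz property of $g_y$ in full generality (it relies on $m$ being a genuine metric on the feature space, so the triangle inequality is available) and confirming that the chosen $d$ is indeed $W_1$ with the same ground metric $m$ that appears in the definition of $\textit{SM}_m$; if $d$ were any other discrepancy, Kantorovich--Rubinstein would not apply and an additional comparison inequality would be needed.
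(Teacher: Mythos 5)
Your proof is correct and is essentially the paper's argument: the paper establishes the same two-sided inequality by constructing an optimal coupling $\mathcal{J}^{*}_{y}$ between $\mathcal{P}(\phi|y)$ and $\mathcal{P}_v(\phi|y)$ and applying the triangle inequality for $m$ to the margin infimum, which is exactly a from-scratch proof of the easy direction of the Kantorovich--Rubinstein bound for the $1$-Lipschitz function $g_y$ that you invoke. The only cosmetic differences are that you cite the duality rather than re-deriving it via the coupling, and you make explicit the final Jensen step over $\rho$ that the paper leaves implicit.
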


\begin{proof}
We construct an optimal transport $\mathcal{J}^{*}_y$ between the conditional distributions $\mathcal{P}|y$ and $\mathcal{P}_v|y$. Using the auxiliary distribution, we have
\begin{equation} \label{lemma1}
\begin{split}
    \textit{SM}_m(\rho,\mathcal{P}(\phi,y))
    &= 
    \mathbb{E}_{(\phi,y) \sim \mathcal{P}} \inf_{\rho(\phi') \neq y} m(\phi',\phi)
    = 
    \mathbb{E}_{y} \mathbb{E}_{\phi \sim \mathcal{P}|y} \inf_{\rho(\phi') \neq y} m(\phi',\phi)
    = 
    \mathbb{E}_{y} \mathbb{E}_{(\phi,\phi'') \sim \mathcal{J}^{*}_y} \inf_{\rho(\phi') \neq y} m(\phi',\phi) \\
    &\leq
    \mathbb{E}_{y} \mathbb{E}_{(\phi,\phi'') \sim \mathcal{J}^{*}_y} \inf_{\rho(\phi') \neq y} [m(\phi',\phi'') + m(\phi'',\phi)] \\
    &=
    \mathbb{E}_{y} \mathbb{E}_{(\phi,\phi'') \sim \mathcal{J}^{*}_y} \inf_{\rho(\phi') \neq y} m(\phi'',\phi') + 
    \mathbb{E}_{y} \mathbb{E}_{(\phi,\phi'') \sim \mathcal{J}^{*}_y} m(\phi'',\phi) \\
    &=
    \mathbb{E}_{(\phi'',y) \sim \mathcal{P}_v}  \inf_{\rho(\phi') \neq y} m(\phi'',\phi') + 
    \mathbb{E}_{y}d(\mathcal{P}(\phi|y), \mathcal{P}_v(\phi|y)) \\
    &=
    \textit{SM}_m(\rho,\mathcal{P}_v(\phi,y)) + 
    \mathbb{E}_{y}d(\mathcal{P}(\phi|y), \mathcal{P}_v(\phi|y)).
\end{split}
\end{equation}
Similarly, we have
\begin{equation} \label{lemma2}
    \textit{SM}_m(\rho,\mathcal{P}_v(\phi,y)) 
    \leq
     \textit{SM}_m(\rho,\mathcal{P}(\phi,y))
    + 
    \mathbb{E}_{y}d(\mathcal{P}(\phi|y), \mathcal{P}_v(\phi|y)).
\end{equation}
Combining Eq.~\ref{lemma1} and Eq.~\ref{lemma2}, we have
\begin{equation}
    \mathbb{E}_{y}
d(\mathcal{P}(\phi|y), \mathcal{P}_v(\phi|y))
\geq
|\mathbb{E}_{\rho:=\textit{VHL}(\mathcal{P}_v(\phi,y))}
[\textit{SM}_m(\rho,\mathcal{P}(\phi,y)) - \textit{SM}_m(\rho,\mathcal{P}_v(\phi,y))]|.
\end{equation}
This completes the proof.
\end{proof}

In addition, we decompose $\textit{SM}_m(f,\mathcal{P}(\phi,y))$ as follows:
\begin{equation} \nonumber
    \textit{SM}_m(f,\mathcal{P}(\phi,y)) 
    = \textit{SM}_m(f,\mathcal{P}(\phi,y))
    - \textit{SM}_m(f,\mathcal{P}_v(\phi,y))
    + \textit{SM}_m(f,\mathcal{P}_v(\phi,y))
    - \textit{SM}_m(f,\tilde{D}(\phi,y))
    + \textit{SM}_m(f,\tilde{D}(\phi,y)),
\end{equation}
where $\tilde{D}(\phi,y)$ stands for the training set containing $n_v$ samples drawn from $\mathcal{P}_v(\phi,y)$. In what follows, we omit $(\phi,y)$ for each joint distribution, e.g., using $\mathcal{P}$ for $\mathcal{P}(\phi,y)$, and denote $\mathbb{E}_{\rho:=\textit{VHL}(\mathcal{P}(\phi,y))}$ as $\mathbb{E}_{\rho \leftarrow \mathcal{P}}$ for brevity.

\begin{theorem} \label{theo_ap}
Let $f=\phi \circ \rho$ be a neural network decompose of a feature extractor $\phi$ and a classifier $\rho$, $\mathcal{P}(x,y)$ and $\mathcal{P}_v(x,y)$ are natural and virtual distributions with identical label distributions. Then, learning $f$ with the proposed distribution mismatch penalty, i.e., Eq.~\ref{fl_l_bar} elicits an model with bounded statistical margin, i.e., Definition~\ref{def:margin}.
\end{theorem}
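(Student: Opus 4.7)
The plan is to bound $\textit{SM}_m(f,\mathcal{P})$ from below via the three-term decomposition already displayed just before the theorem,
\[
\textit{SM}_m(f,\mathcal{P}) = \bigl[\textit{SM}_m(f,\mathcal{P}) - \textit{SM}_m(f,\mathcal{P}_v)\bigr] + \bigl[\textit{SM}_m(f,\mathcal{P}_v) - \textit{SM}_m(f,\tilde{D})\bigr] + \textit{SM}_m(f,\tilde{D}),
\]
then to upper-bound the absolute values of the first two brackets and lower-bound the third. All bounds are taken in expectation over the randomness of VHL training, consistent with the way Lemma A.1 is stated.

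For the first bracket, Lemma A.1 applies verbatim: the expected gap between the natural and virtual statistical margins is at most the conditional Wasserstein discrepancy $\mathbb{E}_y d\bigl(\mathcal{P}(\phi|y),\mathcal{P}_v(\phi|y)\bigr)$, which is precisely the $\lambda$-weighted penalty in the VHL objective (Eq.~\ref{fl_l_bar}). Minimizing that penalty during training therefore directly controls the first bracket.

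For the second bracket, I would reuse the triangle-inequality/coupling argument that underlies Lemma A.1 with the pair $(\mathcal{P}_v,\tilde{D})$ in place of $(\mathcal{P},\mathcal{P}_v)$: the functional $\mathcal{Q} \mapsto \textit{SM}_m(\rho,\mathcal{Q})$ is $1$-Lipschitz in Wasserstein distance, so the bracket is bounded by $W(\mathcal{P}_v,\tilde{D})$, which vanishes as the virtual sample size $n_v$ grows. For the third term, the separability hypothesis on $\mathcal{P}_v$ (enforced by construction via per-class noise clusters) combined with the fact that VHL explicitly minimizes the classification loss on $\tilde{D}$ yields a strictly positive empirical margin $\gamma_v := \textit{SM}_m(f,\tilde{D}) > 0$. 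Combining the three pieces gives
\[
\mathbb{E}\,\textit{SM}_m(f,\mathcal{P}) \;\geq\; \gamma_v \;-\; \mathbb{E}_y d\bigl(\mathcal{P}(\phi|y),\mathcal{P}_v(\phi|y)\bigr) \;-\; W(\mathcal{P}_v,\tilde{D}),
\]
which is the claimed bounded (and, under the remark's assumption that $\gamma_v$ is large, positive) lower bound on the statistical margin on the natural distribution.

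The main obstacle is step three: translating "separability of $\mathcal{P}_v$ plus successful cross-entropy minimization on $\tilde{D}$" into a quantitative lower bound on the feature-space margin $\textit{SM}_m(f,\tilde{D})$. The statistical margin is an average nearest-different-class distance, whereas the VHL loss directly controls only prediction correctness, so bridging this gap likely requires an additional Lipschitz or smoothness assumption on the classifier $\rho$ together with a direct appeal to the geometric separation engineered into the virtual dataset; this is precisely the dependency that the remark after the theorem acknowledges when it observes that a larger margin on $\mathcal{P}_v$ tightens the bound on $\mathcal{P}$. A secondary technical nuisance is the slow empirical Wasserstein rate in high dimension, which I would handle by carrying out the analysis in the (relatively low-dimensional) feature space produced by $\phi$ rather than in raw input space.
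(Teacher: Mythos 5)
Your proposal follows essentially the same route as the paper's proof: the identical three-term decomposition, Lemma~\ref{lemma} applied exactly as in the paper to control the natural-versus-virtual gap by the conditional Wasserstein penalty in Eq.~\ref{fl_l_bar}, and separability of $\mathcal{P}_v$ together with loss minimization to handle the empirical margin $\textit{SM}_m(\rho,\tilde{D})$ and the $\tilde{D}$-versus-$\mathcal{P}_v$ gap. If anything, your treatment of the last two terms --- the explicit Wasserstein--Lipschitz bound $W(\mathcal{P}_v,\tilde{D})$ and the honest flagging of the separability-to-margin step --- is more detailed than the paper's, which simply defers both to the cited margin-theory literature and the separability hypothesis.
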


\begin{proof}
Built upon the decomposition and lemma~\ref{lemma}, we can bound the statistical margin $\mathbb{E}_{\rho \leftarrow \mathcal{P}_v} \textit{SM}_m(\rho,\mathcal{P})$:
\begin{equation}
\begin{split}
\mathbb{E}_{\rho \leftarrow \mathcal{P}_v}
\textit{SM}_m(\rho,\mathcal{P})
& \geq
\mathbb{E}_{\rho \leftarrow \mathcal{P}_v}
\textit{SM}_m(\rho,\tilde{D}) 
 -
|\mathbb{E}_{\rho \leftarrow \mathcal{P}_v}
[\textit{SM}_m(\rho,\mathcal{P}_v) - \textit{SM}_m(\rho,\tilde{D})]| 
 -
|\mathbb{E}_{\rho \leftarrow \mathcal{P}_v}
[\textit{SM}_m(\rho,\mathcal{P}) - 
\textit{SM}_m(\rho,\mathcal{P}_v)]| \\
& \geq
\mathbb{E}_{\rho \leftarrow \mathcal{P}_v}
\textit{SM}_m(\rho,\tilde{D}) 
 -
|\mathbb{E}_{\rho \leftarrow \mathcal{P}_v}
[\textit{SM}_m(\rho,\mathcal{P}_v) - \textit{SM}_m(\rho,\tilde{D})]| 
 -
\mathbb{E}_{y}
d(\mathcal{P}(\phi|y), \mathcal{P}_v(\phi|y))
\end{split}
\end{equation}
The first and second term $\mathbb{E}_{\rho \leftarrow \mathcal{P}_v}
\textit{SM}_m(\rho,\tilde{D}), |\mathbb{E}_{\rho \leftarrow \mathcal{P}_v}
[\textit{SM}_m(\rho,\mathcal{P}_v) - \textit{SM}_m(\rho,\tilde{D})]| $ can be bounded by maximizing the statistical margin~\citep{margin4}, requiring that the distribution $\mathcal{P}_v$ is separable. Specifically, the first term reveals that the statistical margin on the training set should be large, and the second term further requires that the statistical margin should be large on the (virtual) distribution. The last term $\mathbb{E}_{y}
d(\mathcal{P}(\phi|y), \mathcal{P}_v(\phi|y))$ is bounded by the conditional distribution, i.e., Eq~\ref{fl_l_bar}. Hence, the statistical margin elicited by Eq.~\ref{fl_l_bar} is bounded, leading to bounded generalization performance.
\end{proof}

\section{More Related work}\label{appendix:MoreRelated}

\subsection{Federated Learning}

Federated Learning (FL) is first proposed by~\citep{mcmahan2017communication} as a distributed algorithm to protect users' data privacy while collaboratively training a global model. The heterogeneous data distribution across all clients severely damage the convergence of federated learning and final performance~\citep{zhao2018federated, li2019convergence,kairouz2019advances,Resampling}. When training with the heterogeneous data, local and global models are much more unstable than centralized training~\citep{karimireddy2019scaffold}. And there exists a more severe divergence between local models of FL than that of distributed training with IID data~\citep{fedprox, karimireddy2019scaffold}. This inconsistency is called~\textbf{client drift}. 

To mitigate the client drift, hyper-parameters may need to be carefully adjusted~\citep{wang2021field}, like learning rate and local training iterations, which directly decide how fast local model move and thus how far drift they could be. However, too small learning rate also means slower convergence. Thus, how to simultaneously achieve faster training speed and a more gentle client drift is important. To this end, many research works try to design more effective federated learning algorithms to address client drift problem.

\textbf{Model Regularization.}
This line of research focus on adding regularization to calibrate the optimization direction of local models, restricting local models from being too far away from the server model. FedProx~\citep{fedprox} adds a penalty of the L2 distance between local models to the server model.

SCAFFOLD~\citep{karimireddy2019scaffold} utilizes the history information to reduce the ``client variance'', thus decreasing the client drift. MOON~\citep{li2021model} performs contrastive learning between the server model and client models, calibrating clients' learned representation.

\textbf{Optimization Schemes.}
From the optimization point of view, some methods propose to correct the updates from clients, accelerating and stabilizing the convergence.
FedNova ~\citep{wang2020tackling} propose to normalize the local updates to eliminate the inconsistency between the local and global optimization objective functions.
~\citep{LDA} proposes FedAvgM, which exploits the history updates of server model to avoid the overfits on the selected clients at current round.
A recent work~\citep{reddi2020adaptive} proposes FEDOPT, which generalizes the centralized optimization methods into FL scenario, like FedAdaGrad, FedYogi, FedAdam.

\textbf{Sharing Data.}
Because the original cause of client drift is the data heterogeneity, another line of research focuses on generating more data that carries some information of the natural data, and sharing them to all clients. It has been found that sharing a part of natural data could significantly benefit federated learning~\citep{zhao2018federated}, yet which sacrifices the privacy of clients' data.

A series of works~\citep{5670948, NIPSAsimpleDPDataRelease, CompressiveLearninWithDP, johnson2018towards, DataSynthesisDPMRF} add noise on data queries to implement sharing data with privacy guarantee at some degree. In addition, sharing incomplete data is also a promising direction, where the generation process can be similar to ~\citep{kaiwen}.

FD~\citep{sharing1} proposes to let clients collectively train a generative model to locally reproduce the data samples of all devices, augmenting their local data to become IID. 
G-PATE~\citep{long2021g} leverages GAN to generate data, with private aggregation among different discriminators to ensure privacy guarantees at some degree. ~\citep{FLviaSyntheticData} utilized the adversarial learning to generate data samples based on the raw data to help FL.

Fed-ZDAC~\citep{hao2021towards} utilize the intermedia activations and BN layer statistic of the raw data information to learn and share synthetic data. The profiles of the raw images could be still seen from the synthetic images, exposing the raw data information at a degree. XorMixFL~\citep{shin2020xor} proposes to collect other clients' XOR-encoded data samples that are decoded only using each client's own data samples, also exposing raw data information to other clients. FedMIX~\citep{yoon2021fedmix} proposes to let clients send and receive the averaged local data of each batch. Strictly speaking, this method exposes the statistic information of local data, i.e. the mean values.

FedMD~\citep{li2019fedmd}, Cronus~\citep{ chang2019cronus} and CCVR~\citep{luo2021no} transmit the logits information of data samples to enhance FL, which may expose the accurate high-level semantic information of raw data.

FedDF~\citep{FedDF} utilizes other data and conduct knowledge distillation based on these data, to transfer knowledge of models between server and clients. The most difference between FedDF and VHL is that our model will be directly trained on the virtual data, which is seen as a part of the new objective function. And VHL does not require the shared dataset to be meaningful and has transferability with raw data.

To highlight our novelty, we compare and demystify different FL algorithms related to the sharing data in Table~\ref{tab:demystifySharingData}. We would like to clarify that we are the first to raise and analyze the question of how to introduce virtual datasets (even pure noise) containing absolutely no private information to promote FL. Specifically, related works, i.e., the first 9 lines in Table~\ref{tab:demystifySharingData}, have risks of exposing private data, while the 10-th related work in Table~\ref{tab:demystifySharingData}, FedDF, requires a distribution similar to the private distribution, see Theorem 5.1 in the FedDF. In contrast, we consider \textbf{arbitrary} datasets with \textbf{no private information}.

We omitted the comparison with works in Table~\ref{tab:demystifySharingData}, because of the above differences. Because FedDF does not expose the information of the raw data, we compare VHL with the FedDF. We have conducted experiments using the same settings as in FedDF. The results are listed in Table~\ref{tab:ResultsFedDF}. For FedDF, the distribution similar to the private distribution (FedDF + real) improves the accuracy, while noise (FedDF + noise) degrades the accuracy. VHL could perform much better with the noise dataset.

% We demystify different FL algorithms with the virtual datasets about sharing data in Table~\ref{tab:demystifySharingData}.

\begin{table*}[t]
\centering
\caption{Demystifying different FL algorithms related to the sharing data.} 
\vspace{1pt}
\centering
\small{
\begin{tabular}{c|ccc}
\toprule[1.5pt]
                  &  Shared Data &  Relation with Private Raw Data & Objective of the Algorithm     \\
\midrule[1pt]
% PMW~\citep{5670948}   & New Sythetic Data  &  DP on Raw Data & Database Privacy \\
% MWEM~\citep{NIPSAsimpleDPDataRelease}   & New Sythetic Data  & DP on Raw Data  & Database Privacy \\
FD~\citep{sharing1}  & Sythetic Data  &  Generated Based on Raw Data &  Server Model Performance \\
G-PATE~\citep{long2021g} & Sythetic Data   &  Generated Based on Raw Data &  Server Model Performance\\
~\citep{FLviaSyntheticData}  & Sythetic Data   &  Generated Based on Local Model &  Server Model Performance\\
Fed-ZDAC~\citep{hao2021towards} & Intermediate Features   &  Features of Raw Data &  Server Model Performance\\
XorMixFL~\citep{shin2020xor} & STAT. of raw Data   &  -- &  Server Model Performance \\
FedMIX~\citep{yoon2021fedmix} & STAT. of raw Data   &  -- &  Server Model Performance\\
FedMD~\citep{li2019fedmd}  &  logits &  Features of Raw Data &     Personalized FL     \\
CCVR~\citep{luo2021no}  &  logits  &  Features of Raw Data &     Server Model Performance     \\
Cronus~\citep{ chang2019cronus}  &  Intermediate Features &  Features of Raw Data  &    Defend Poisoning Attack      \\
FedDF~\citep{FedDF} & New Data   &  Has Transferability with Raw Data  &  Server Model Performance\\
VHL (ours) & New Data   &  No limitation  &  Server Model Performance\\
\bottomrule[1.5pt] 
\end{tabular}
}
\begin{tablenotes}
    % \small
	\item Note: ``STAT.'' means statistic information, like mean or standard deviation. 
\end{tablenotes}
\vspace{-0.5cm}
\label{tab:demystifySharingData}
\end{table*}

\begin{table*}[t]
\centering
\caption{Comparisons with FedDF.} 
\vspace{1pt}
\small{
    \begin{tabular}{c|cccc}
    \toprule[1pt]
    Non-IID Degree & FedAvg & FedDF + real & FedDF + noise  & FedAvg + VHL  \\
    \hline
    $\alpha=0.1$  & 62.22 &  71.36  & -- & \textbf{75.02} \\
    $\alpha=1.0$  & 76.01 & 80.69  & 46.9  &  \textbf{83.13} \\
    \bottomrule
\end{tabular}
}
\vspace{-0.5cm}
\label{tab:ResultsFedDF}
\end{table*}

% ~\citep{standford2019privacy} 

\subsection{Virtual Dataset}

Generative learning~\citep{croce2020gan, goodfellow2020generative, Karras2019stylegan2} attains great progress in recent years, making it possible to synthesis images, texts or sound that cannot be distinguished from the real data. And the Unreal Engine~\citep{UnrealCV} or other advanced computer graphics techniques also could generate vivid virtual images.

Recently, some researchers exploit the virtual datasets to tackle many problems in machine learning tasks. ~\citep{baradad2021learning} proposes that the neural networks could benefit from noise data. Specifically, the deep learning models pretrained on the noise data, are capable of generalizing well on real-world datasets. Apollo Synthetic Dataset ~\citep{huang2019apolloscape} is a virtual dataset for training better autonomous driving models. ~\citep{miyato2018virtual} exploits virtual labels for adversarial training.
VHL plays as a role to connect the federated learning with the virtual dataset.

\section{Details of Experiment Configuration}\label{appendix:DetailedExp}

\subsection{Hyper-parameters}
% For most experiments, we use the learning rate as 0.01. However, some ot
The learning rate configuration has been listed in Table~\ref{tab:LearningRate}. We report the best results and their learning rates (searched in $\left \{0.0001, 0.001, 0.01, 0.1 \right \}$).

And for all experiments, we use learning-rate decay of 0.992 per round. The batch size of the real data is set as 128, which also serves as the batch size of the virtual data. We use momentum-SGD as optimizers for all experiments, with momentum of 0.9, and weight decay of 0.0001. For different FL settings, when $K=10$ or $100$, and $E=1$, the maximum communication round is 1000. For $K=10$ and $E=5$, the maximum communication round is 400 (due to the $E=5$ increase the calculation cost). The number of clients selected for calculation is 5 per round for $K=10$, and 10 for $K=100$.

\begin{table*}[t]
\centering
\caption{Learning rate of all experiments.} 
\vspace{1pt}
\scriptsize{
\begin{tabular}{ccc|cccc}
\toprule[1.5pt]
  \!\! & &  & {\small FedAvg} & \small FedProx &\small SCAFFOLD &\small FedNova \\
\cmidrule[1.5pt]{1-7}
\multirow{8}{*}{CIFAR-10} &  \multirow{2}{*}{$a=0.1$, $E=1$, $K=10$}  &  Baselines          & 0.01 & 0.01 & 0.01 & 0.01  \\
        &        &  VHL          & 0.01 & 0.01 & 0.01 & 0.01  \\
\cmidrule[1pt]{2-7}
        &  \multirow{2}{*}{$a=0.05$, $E=1$, $K=10$}  &  Baselines          & 0.01 & 0.01 & 0.001 & 0.01  \\
        &        &  VHL          & 0.01 & 0.01 & 0.001 & 0.01  \\
\cmidrule[1pt]{2-7}
        &  \multirow{2}{*}{$a=0.1$, $E=5$, $K=10$}  &  Baselines          & 0.01 & 0.01 & 0.01 & 0.01  \\
        &        &  VHL          & 0.01 & 0.01 & 0.01 & 0.01  \\
\cmidrule[1pt]{2-7}
        &  \multirow{2}{*}{$a=0.1$, $E=1$, $K=100$}  &  Baselines          & 0.01 & 0.01 & 0.0001 & 0.001  \\
        &        &  VHL          & 0.01 & 0.01 & 0.0001 & 0.001  \\
\cmidrule[1.5pt]{1-7}
\multirow{8}{*}{FMNIST} &  \multirow{2}{*}{$a=0.1$, $E=1$, $K=10$}  &  Baselines          & 0.01 & 0.01 & 0.01 & 0.01  \\
        &        &  VHL          & 0.01 & 0.01 & 0.01 & 0.01  \\
\cmidrule[1pt]{2-7}
        &  \multirow{2}{*}{$a=0.05$, $E=1$, $K=10$}  &  Baselines          & 0.01 & 0.01 & 0.0001 & 0.0001  \\
        &        &  VHL          & 0.01 & 0.01 & 0.0001 & 0.001  \\
\cmidrule[1pt]{2-7}
        &  \multirow{2}{*}{$a=0.1$, $E=5$, $K=10$}  &  Baselines          & 0.01 & 0.01 & 0.01 & 0.01  \\
        &        &  VHL          & 0.01 & 0.01 & 0.01 & 0.01  \\
\cmidrule[1pt]{2-7}
        &  \multirow{2}{*}{$a=0.1$, $E=1$, $K=100$}  &  Baselines          & 0.01 & 0.01 & 0.001 & 0.001  \\
        &        &  VHL          & 0.01 & 0.01 & 0.001 & 0.001  \\
\cmidrule[1.5pt]{1-7}
\multirow{8}{*}{SVHN} &  \multirow{2}{*}{$a=0.1$, $E=1$, $K=10$}  &  Baselines          & 0.01 & 0.01 & 0.001 & 0.01  \\
        &        &  VHL          & 0.01 & 0.01 & 0.001 & 0.01  \\
\cmidrule[1pt]{2-7}
        &  \multirow{2}{*}{$a=0.05$, $E=1$, $K=10$}  &  Baselines          & 0.01 & 0.01 & 0.001 & 0.01  \\
        &        &  VHL          & 0.01 & 0.01 & 0.01 & 0.01  \\
\cmidrule[1pt]{2-7}
        &  \multirow{2}{*}{$a=0.1$, $E=5$, $K=10$}  &  Baselines          & 0.01 & 0.001 & 0.001 & 0.001  \\
        &        &  VHL          & 0.01 & 0.001 & 0.001 & 0.001  \\
\cmidrule[1pt]{2-7}
        &  \multirow{2}{*}{$a=0.1$, $E=1$, $K=100$}  &  Baselines          & 0.001 & 0.001 & 0.0001 & 0.0001  \\
        &        &  VHL          & 0.001 & 0.001 & 0.0001 & 0.001  \\
\cmidrule[1.5pt]{1-7}
\multirow{8}{*}{CIFAR-100} &  \multirow{2}{*}{$a=0.1$, $E=1$, $K=10$}  &  Baselines          & 0.01 & 0.01 & 0.01 & 0.01  \\
        &        &  VHL          & 0.01 & 0.01 & 0.01 & 0.01  \\
\cmidrule[1pt]{2-7}
        &  \multirow{2}{*}{$a=0.05$, $E=1$, $K=10$}  &  Baselines          & 0.01 & 0.01 & 0.01 & 0.01  \\
        &        &  VHL          & 0.01 & 0.01 & 0.01 & 0.01  \\
\cmidrule[1pt]{2-7}
        &  \multirow{2}{*}{$a=0.1$, $E=5$, $K=10$}  &  Baselines          & 0.01 & 0.01 & 0.01 & 0.01  \\
        &        &  VHL          & 0.01 & 0.01 & 0.01 & 0.01  \\
\cmidrule[1pt]{2-7}
        &  \multirow{2}{*}{$a=0.1$, $E=1$, $K=100$}  &  Baselines          & 0.01 & 0.01 & 0.01 & 0.01  \\
        &        &  VHL          & 0.01 & 0.01 & 0.01 & 0.01  \\
\bottomrule[1.5pt] 
\end{tabular}
}
\label{tab:LearningRate}
\end{table*}

\subsection{Implementation}

We sample the virtual data along with the natural data together. Therefore, in each epoch, the number of sampled virtual data is the $B_v / B_d$ times as much as the natural data, in which $B_v$ and $B_d$ are batch sizes of virtual data and natural data respectively.

We utilize the supervised contrastive loss~\citep{khosla2020supervised} to pull natural features toward the virtual features. In addition, to avoid that the virtual features moving towards the natural features, which may cause the homogeneous virtual features are biased by the natural features, we detach the virtual features before sending them into the supervised contrastive loss.

\subsection{Generating Virtual Dataset}\label{appendix:GenerateData}

% \subsection{How to Generate Virtual Dataset}

% We use an \textbf{un-pretrained} StyleGAN-v2~\citep{Karras2019stylegan2} without using any training data to generate the virtual dataset. Specifically, the server generates image outputs with latent styles and Gaussian noises to generate outputs through the StyleGAN-v2 model. The outputs are then distributed to all clients at the beginning of training. We will also explore other methods (i.e., generated with a simple CNN and up-sampling with pure Gaussian noises) to generate virtual datasets in Section~\ref{sec:DiffFacetsVHL}

We use $C_v$ to represent the number of classes of the virtual dataset. And $C_v=10$ for 10-classes datasets, $C_v=100$ for CIFAR-100\footnote{Note that the $C_v$, number of classes of virtual dataset, has no need to be equal to the $C_d$, the number of classes of the natural dataset. In this paper, we set $C_v = C_d$ for convenience.}.

No matter which generating method we use, we sample noise from $C_v$  different Gaussian distributions with different means but the same standard deviation, representing different classes of the virtual data. The server collects the outputs of the StyleGAN-v2, and then distributes them to all clients at the beginning. Thus, there is no extra calculation of clients to generate datasets.

For the upsampling from the pure noise, we firstly sample noise point as a $8 \times 8$ size image, and conduct upsampling to attain a $32 \times 32$ size image. Through upsampling, these noise images could form some color blocks, instead of pure noise points in the images. Thus, the client models could learn some basic vision features from them.

For the simple CNN generator, we sample the 64-dim noises and feed them into the CNN. Then this simple generator outputs the $32 \times 32$ size images. This simple CNN consists of for transpose convolutional layers for upsampling and 3 convolutional layers for downsampling. Because an untrained simple CNN does not have enough output diversity, we use 10 CNNs with different initial weights, to generate the virtual dataset as the data of 10 different classes.

We display these generated virtual datasets as Figure~\ref{fig:NoiseStyleGan32x32c10}, ~\ref{fig:NoiseStyleGan32x32c100}, ~\ref{fig:Gaussian_Noise} and ~\ref{fig:cifar_conv_decoder}. One can see that different images of different labels are easy to be distinguished. And the images generated by Style-GAN of the same class are the most diverse.

\begin{figure*}[htb!] 
    \setlength{\abovedisplayskip}{-2pt}
    \subfigbottomskip=-1pt
    \subfigcapskip=1pt
    \setlength{\abovecaptionskip}{-2pt}
   \centering
    \subfigure{\includegraphics[width=0.19\linewidth]{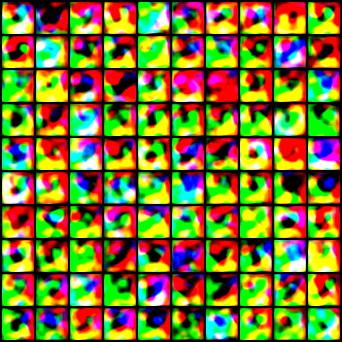}}
    \subfigure{\includegraphics[width=0.19\linewidth]{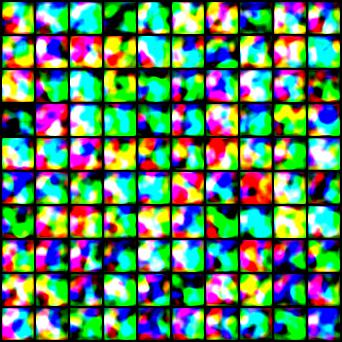}}
    \subfigure{\includegraphics[width=0.19\linewidth]{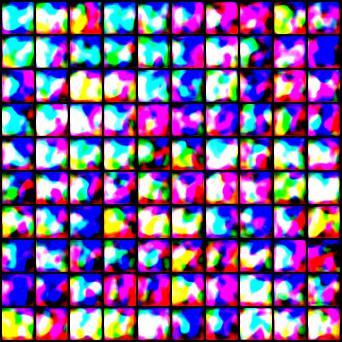}}
    \subfigure{\includegraphics[width=0.19\linewidth]{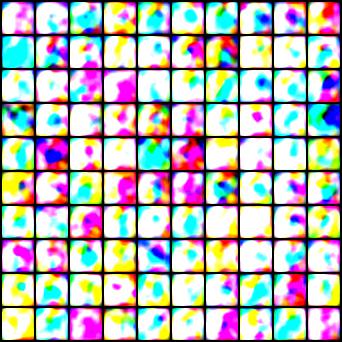}}
    \subfigure{\includegraphics[width=0.19\linewidth]{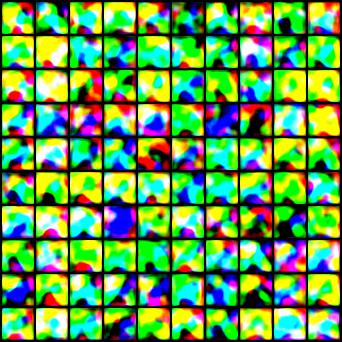}}
    \subfigure{\includegraphics[width=0.19\linewidth]{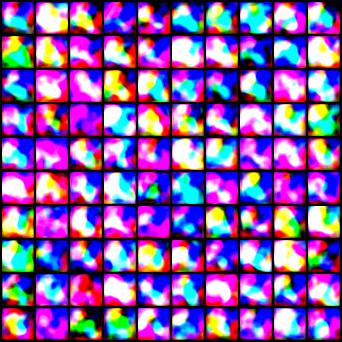}}
    \subfigure{\includegraphics[width=0.19\linewidth]{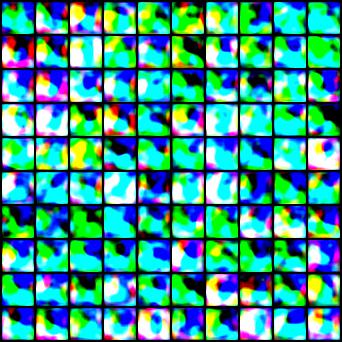}}
    \subfigure{\includegraphics[width=0.19\linewidth]{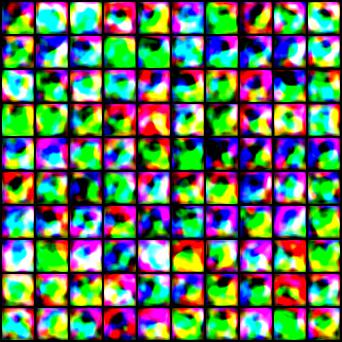}}
    \subfigure{\includegraphics[width=0.19\linewidth]{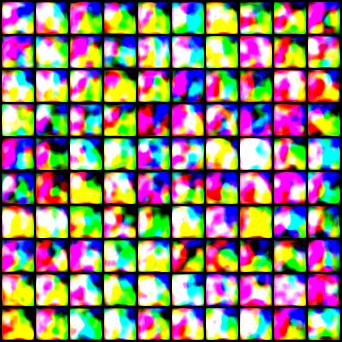}}
    \subfigure{\includegraphics[width=0.19\linewidth]{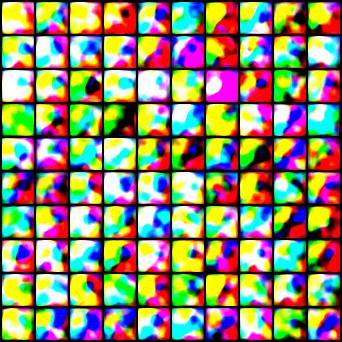}}
    \caption{Generated Virtual Data for CIFAR-10, FMNIST and SVHN. Each figure has 90 data samples of the same class.} 
    \label{fig:NoiseStyleGan32x32c10}
\vspace{-0.3cm}
\end{figure*}

\begin{figure*}[htb!] 
    \setlength{\abovedisplayskip}{-2pt}
    \setlength{\abovecaptionskip}{-2pt}
    \subfigbottomskip=-1pt
    \subfigcapskip=1pt
   \centering
    \subfigure{\includegraphics[width=0.19\linewidth]{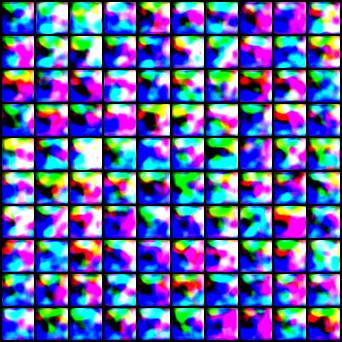}}
    \subfigure{\includegraphics[width=0.19\linewidth]{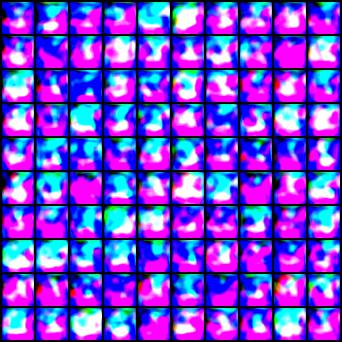}}
    \subfigure{\includegraphics[width=0.19\linewidth]{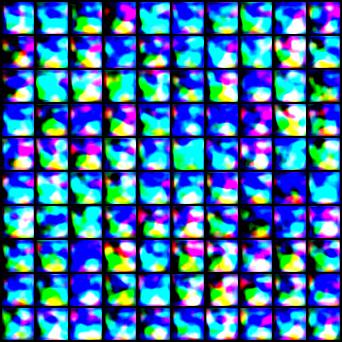}}
    \subfigure{\includegraphics[width=0.19\linewidth]{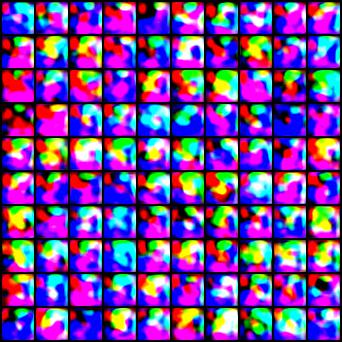}}
    \subfigure{\includegraphics[width=0.19\linewidth]{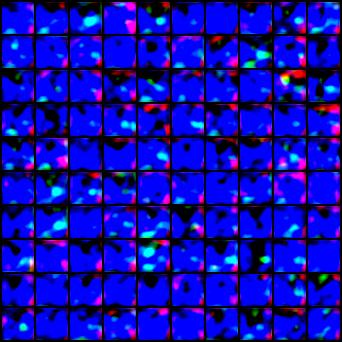}}
    \subfigure{\includegraphics[width=0.19\linewidth]{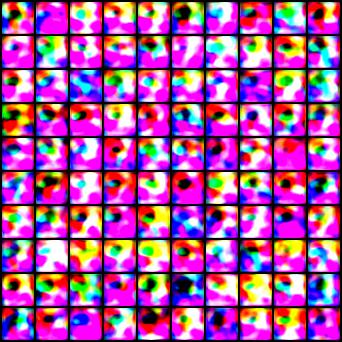}}
    \subfigure{\includegraphics[width=0.19\linewidth]{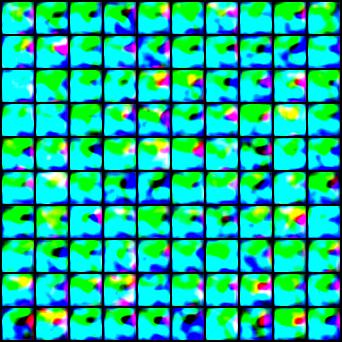}}
    \subfigure{\includegraphics[width=0.19\linewidth]{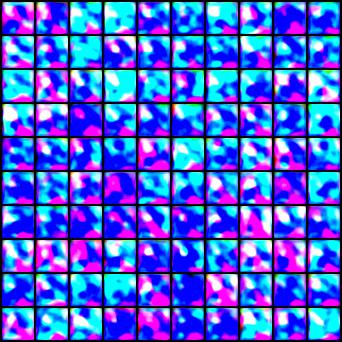}}
    \subfigure{\includegraphics[width=0.19\linewidth]{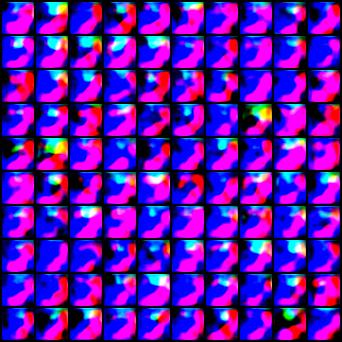}}
    \subfigure{\includegraphics[width=0.19\linewidth]{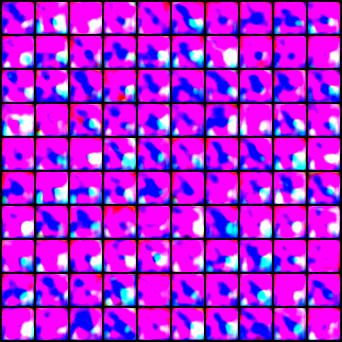}}
    % \subfigure{\includegraphics[width=0.19\linewidth]{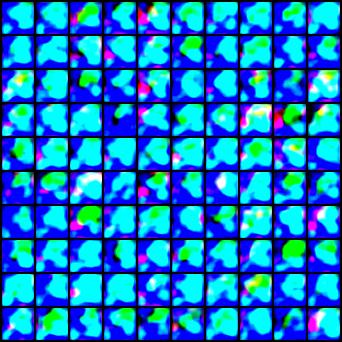}}
    % \subfigure{\includegraphics[width=0.19\linewidth]{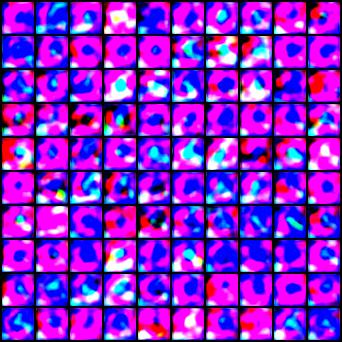}}
    % \subfigure{\includegraphics[width=0.19\linewidth]{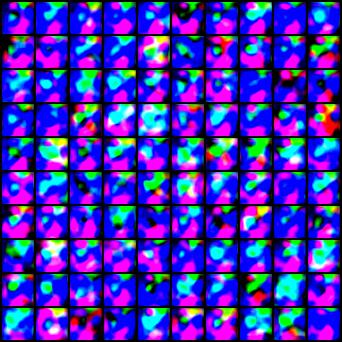}}
    % \subfigure{\includegraphics[width=0.19\linewidth]{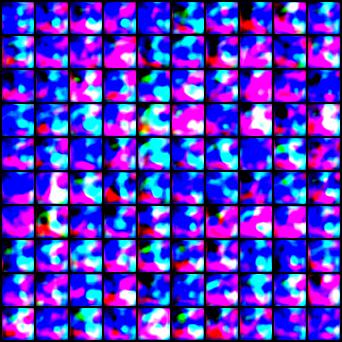}}
    % \subfigure{\includegraphics[width=0.19\linewidth]{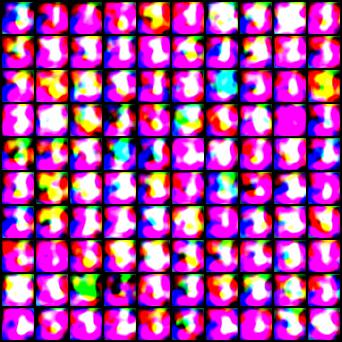}}
    % \subfigure{\includegraphics[width=0.19\linewidth]{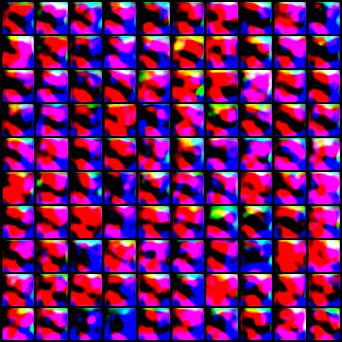}}
    % \subfigure{\includegraphics[width=0.19\linewidth]{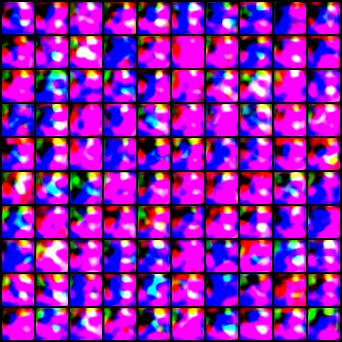}}
    % \subfigure{\includegraphics[width=0.19\linewidth]{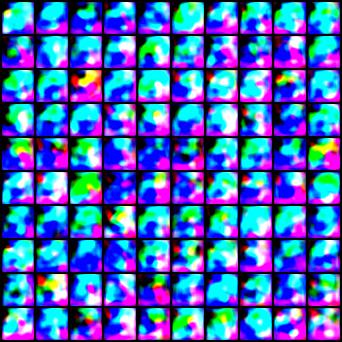}}
    % \subfigure{\includegraphics[width=0.19\linewidth]{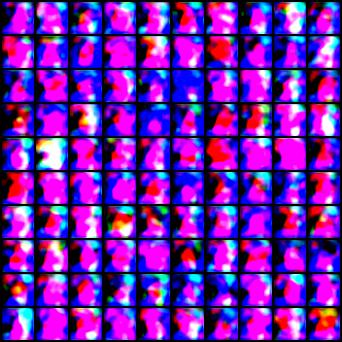}}
    % \subfigure{\includegraphics[width=0.19\linewidth]{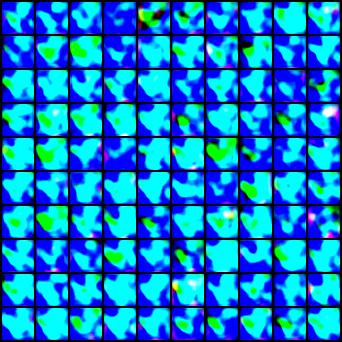}}
    \caption{Generated Virtual Data for CIFAR-100. Due to space limitation, we only show 10 out of 100 classes.} 
    \label{fig:NoiseStyleGan32x32c100} 
\vspace{-0.3cm}
\end{figure*}

\begin{figure*}[htb!] 
    \setlength{\abovedisplayskip}{-2pt}
    \setlength{\abovecaptionskip}{-2pt}
    \subfigbottomskip=-1pt
    \subfigcapskip=1pt
   \centering
    \subfigure{\includegraphics[width=0.19\linewidth]{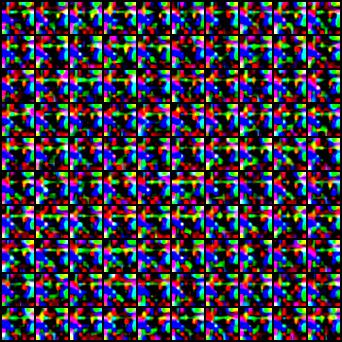}}
    \subfigure{\includegraphics[width=0.19\linewidth]{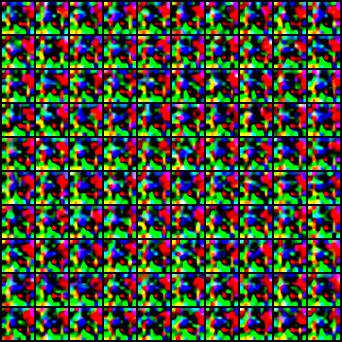}}
    \subfigure{\includegraphics[width=0.19\linewidth]{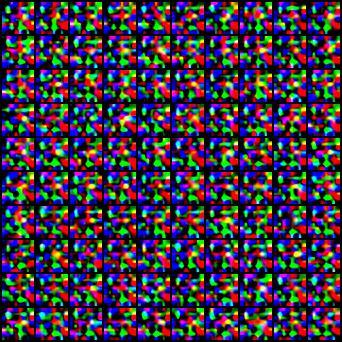}}
    \subfigure{\includegraphics[width=0.19\linewidth]{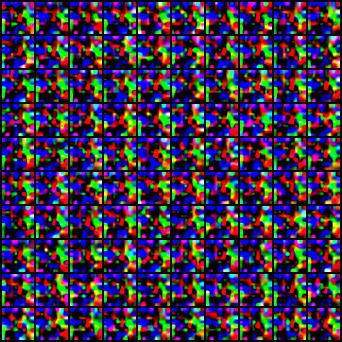}}
    \subfigure{\includegraphics[width=0.19\linewidth]{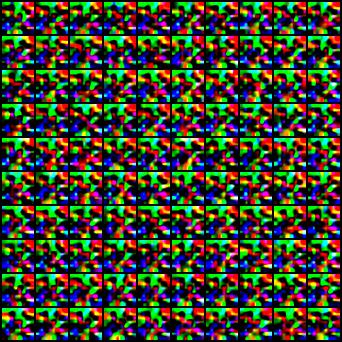}}
    \subfigure{\includegraphics[width=0.19\linewidth]{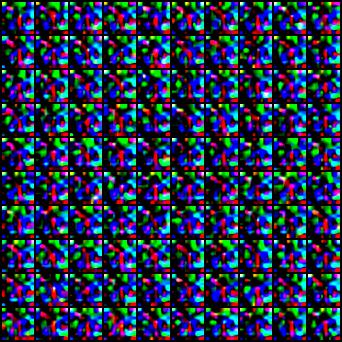}}
    \subfigure{\includegraphics[width=0.19\linewidth]{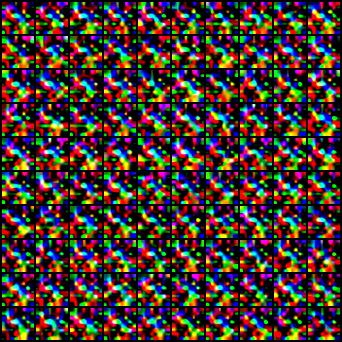}}
    \subfigure{\includegraphics[width=0.19\linewidth]{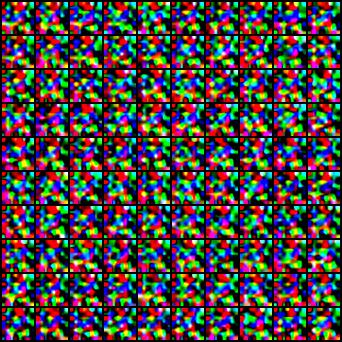}}
    \subfigure{\includegraphics[width=0.19\linewidth]{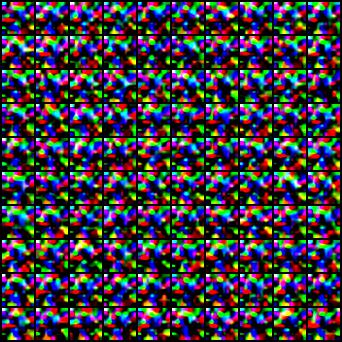}}
    \subfigure{\includegraphics[width=0.19\linewidth]{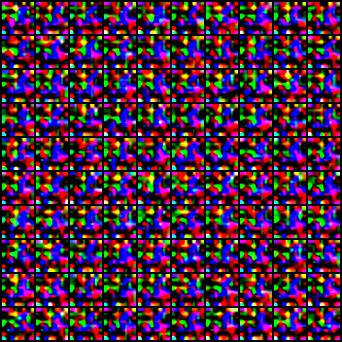}}
    \caption{Generated Gaussian Noise with upsampling.} 
    \label{fig:Gaussian_Noise} 
\vspace{-0.3cm}
\end{figure*}

\begin{figure*}[htb!] 
    \setlength{\abovedisplayskip}{-2pt}
    \setlength{\abovecaptionskip}{-2pt}
    \subfigbottomskip=-1pt
    \subfigcapskip=1pt
   \centering
    \subfigure{\includegraphics[width=0.19\linewidth]{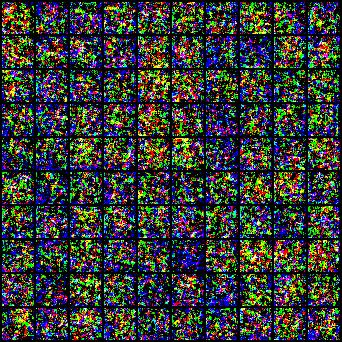}}
    \subfigure{\includegraphics[width=0.19\linewidth]{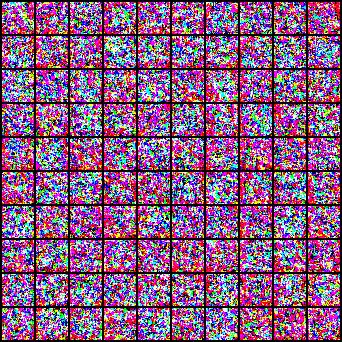}}
    \subfigure{\includegraphics[width=0.19\linewidth]{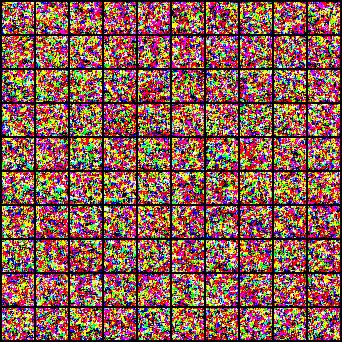}}
    \subfigure{\includegraphics[width=0.19\linewidth]{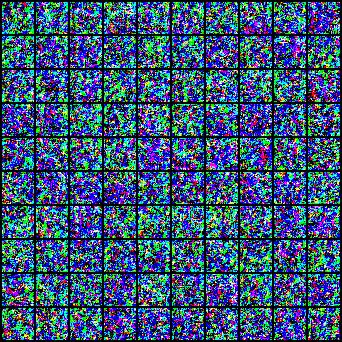}}
    \subfigure{\includegraphics[width=0.19\linewidth]{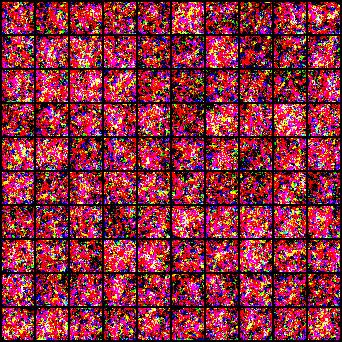}}
    \subfigure{\includegraphics[width=0.19\linewidth]{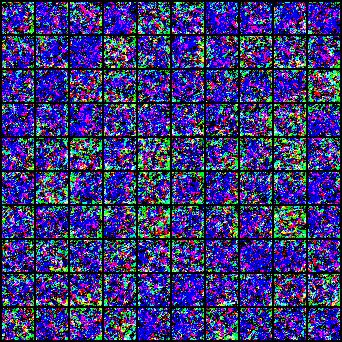}}
    \subfigure{\includegraphics[width=0.19\linewidth]{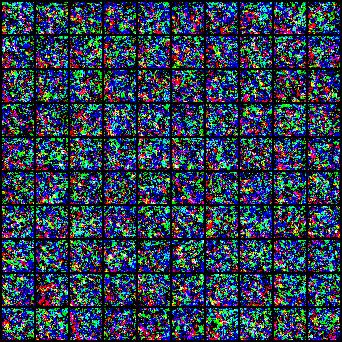}}
    \subfigure{\includegraphics[width=0.19\linewidth]{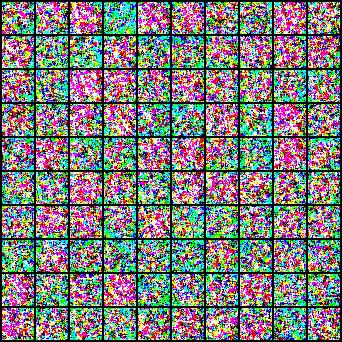}}
    \subfigure{\includegraphics[width=0.19\linewidth]{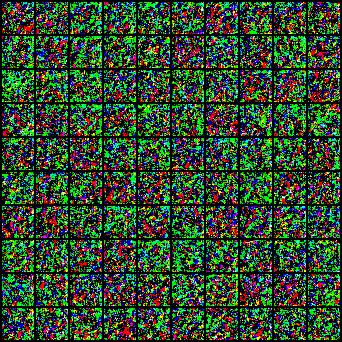}}
    \subfigure{\includegraphics[width=0.19\linewidth]{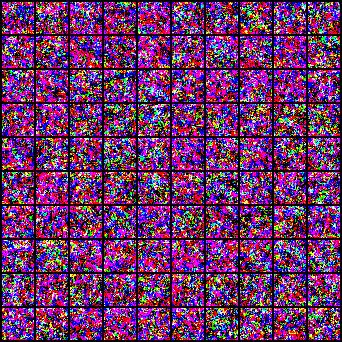}}
    \caption{Generated virtual dataset using ten simple decoders.} 
    \label{fig:cifar_conv_decoder} 
\vspace{-0.3cm}
\end{figure*}

We show all convergence curves in Figure~\ref{fig:Convergence-cifar10}, ~\ref{fig:Convergence-fmnist}, ~\ref{fig:Convergence-SVHN} and ~\ref{fig:Convergence-CIFAR100}. One can see that nearly all experiments could benefit convergence acceleration from VHL.

\section{More Experiment Results and Discussion}

\subsection{Feature Shifts between Clients}\label{appendix:VisualizationFeature}

We show more t-SNE visualization of feature distribution between client models in Figure~\ref{fig:Apppendix-Clients-Feature-fedavg}, ~\ref{fig:Apppendix-Clients-Feature-NaiveVHL} and ~\ref{fig:Apppendix-Clients-Feature-VHL}, and them of the server model in Figure~\ref{fig:Apppendix-Server-Feature-fedavg}, ~\ref{fig:Apppendix-Server-Feature-NaiveVHL} and ~\ref{fig:Apppendix-Server-Feature-VHL}.

One can see that the clients can quickly learn to distinguish virtual data and attain the consensus on the feature expression of the virtual data of the same class (see Figure~\ref{fig:Apppendix-Clients-Feature-NaiveVHL} (a) and Figure~\ref{fig:Apppendix-Clients-Feature-VHL} (a)). Therefore, the subsequent local training based on the homogeneous virtual feature could benefit the consistent expression on the natural data.

One interesting observation is that only training with the virtual data also benefits FL. And clients also can more quickly learn consistent feature expression though such Naive VHL than through FedAvg (see  Figure~\ref{fig:Apppendix-Clients-Feature-NaiveVHL} (c) and Figure~\ref{fig:Apppendix-Clients-Feature-VHL} (c)).

Another interesting finding is that, after local training, the features from one client are easily to be clustered together by t-SNE (see Figure~\ref{fig:Apppendix-Clients-Feature-fedavg} (a) and (b)), which means that these client models are greatly different. This happens even at the $999$-th communication round for FedAvg (see Figure~\ref{fig:Apppendix-Clients-Feature-fedavg} (d)).

\begin{figure*}[htb!] 
    \setlength{\abovedisplayskip}{-2pt}
    \setlength{\abovecaptionskip}{-2pt}
    \subfigbottomskip=-1pt
    \subfigcapskip=1pt
  \centering
    \subfigure[$9$-th round. ]{\includegraphics[width=0.24\linewidth]{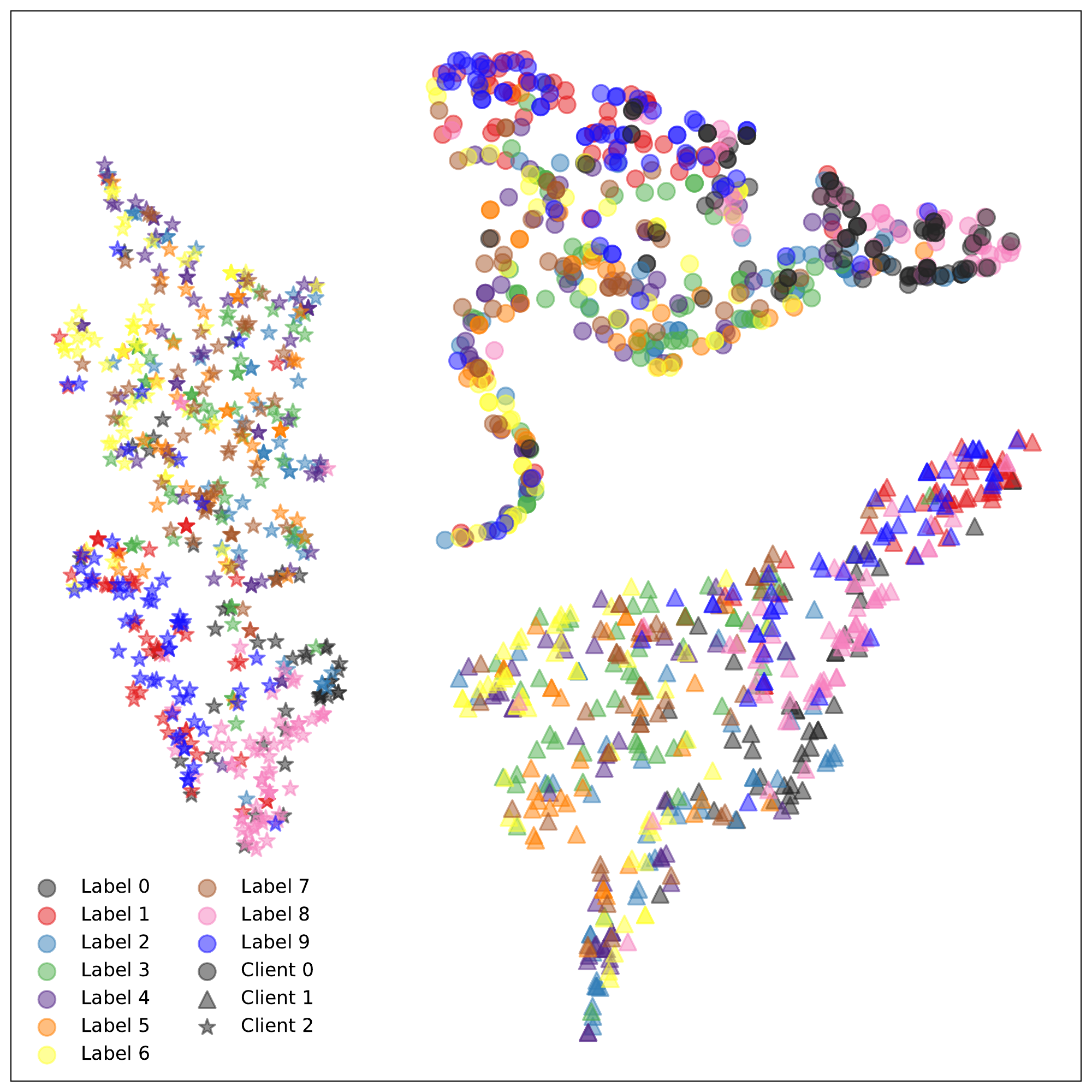}}
    \subfigure[$99$-th round. ]{\includegraphics[width=0.24\linewidth]{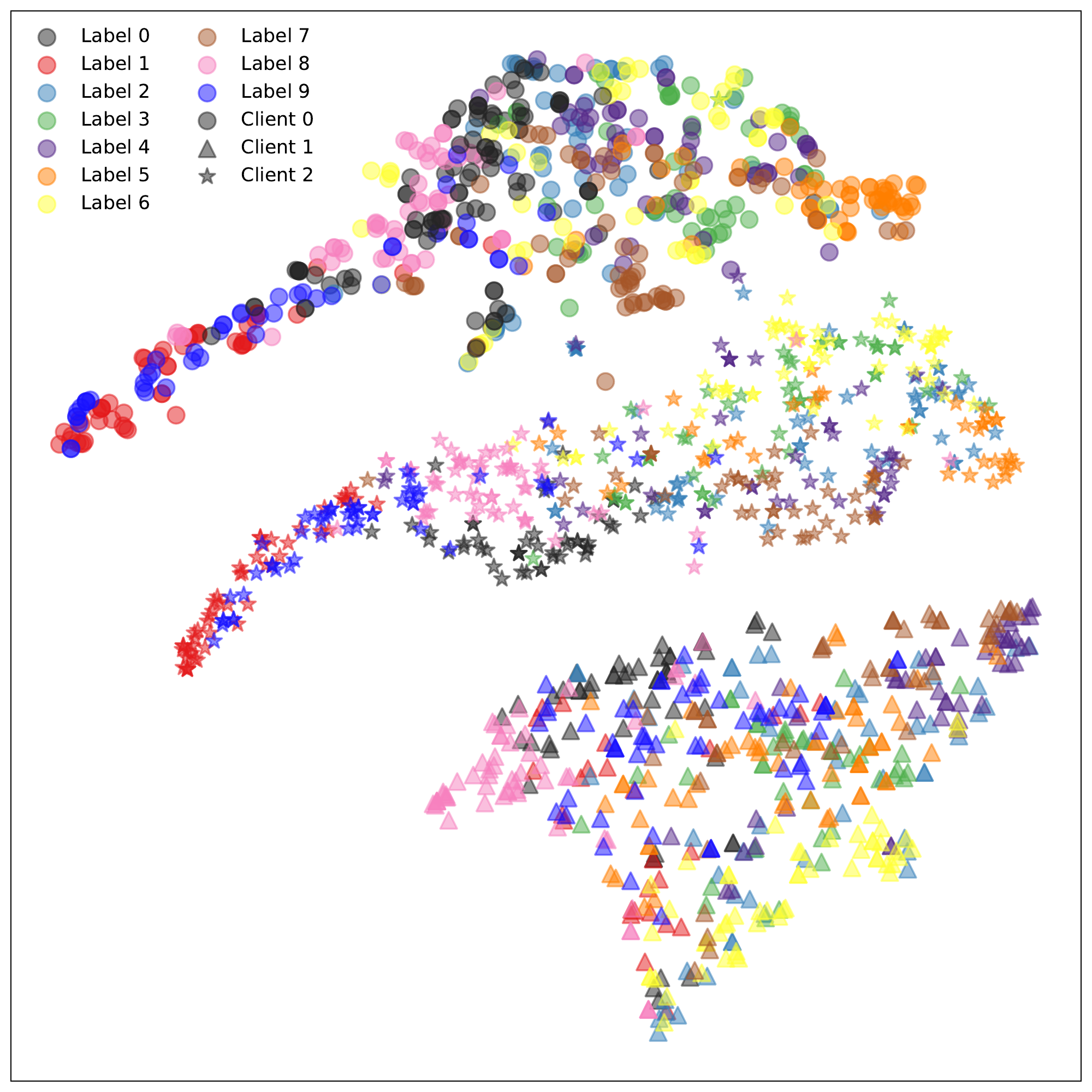}}
    \subfigure[$299$-th round. ]{\includegraphics[width=0.24\linewidth]{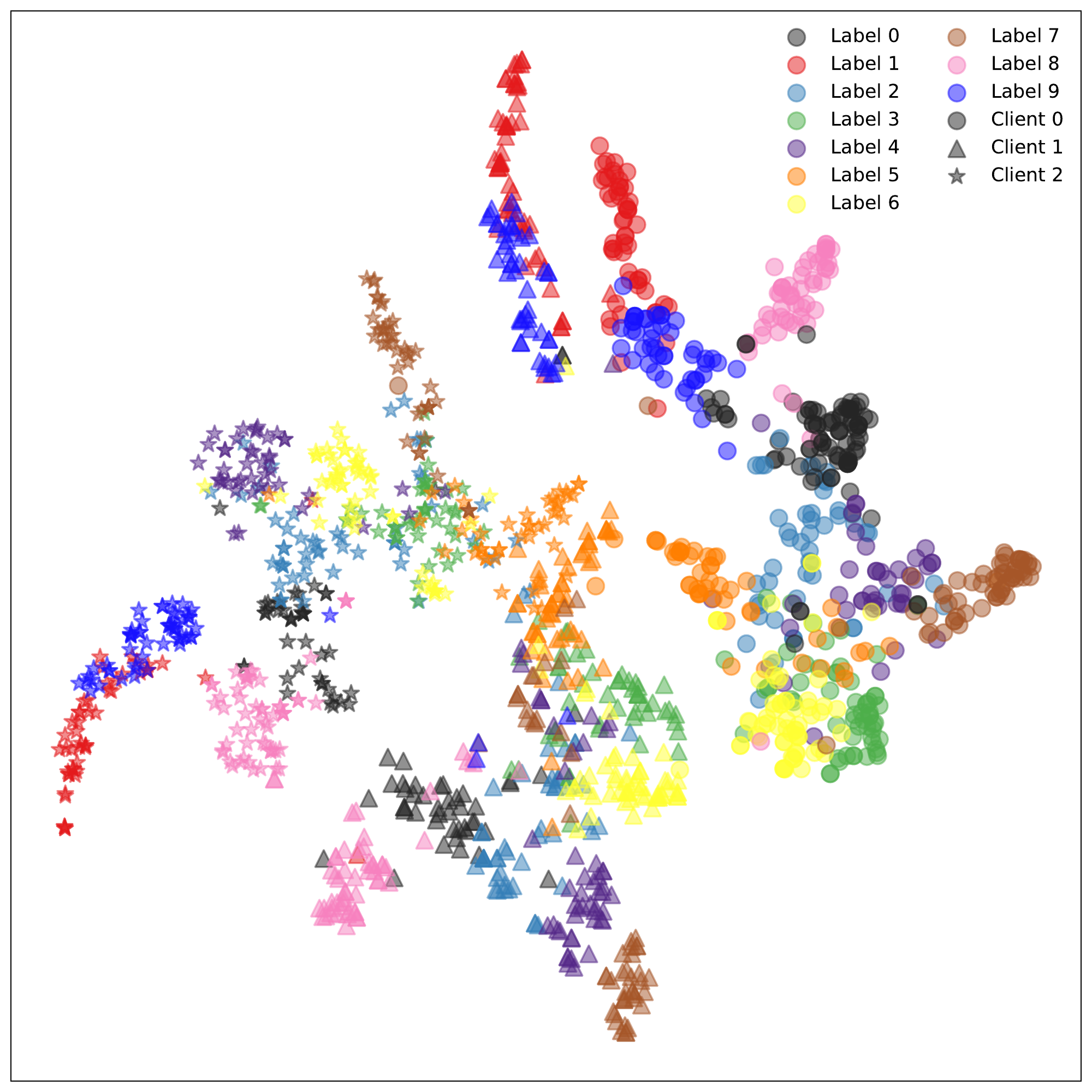}}
    \subfigure[$999$-th round. ]{\includegraphics[width=0.24\linewidth]{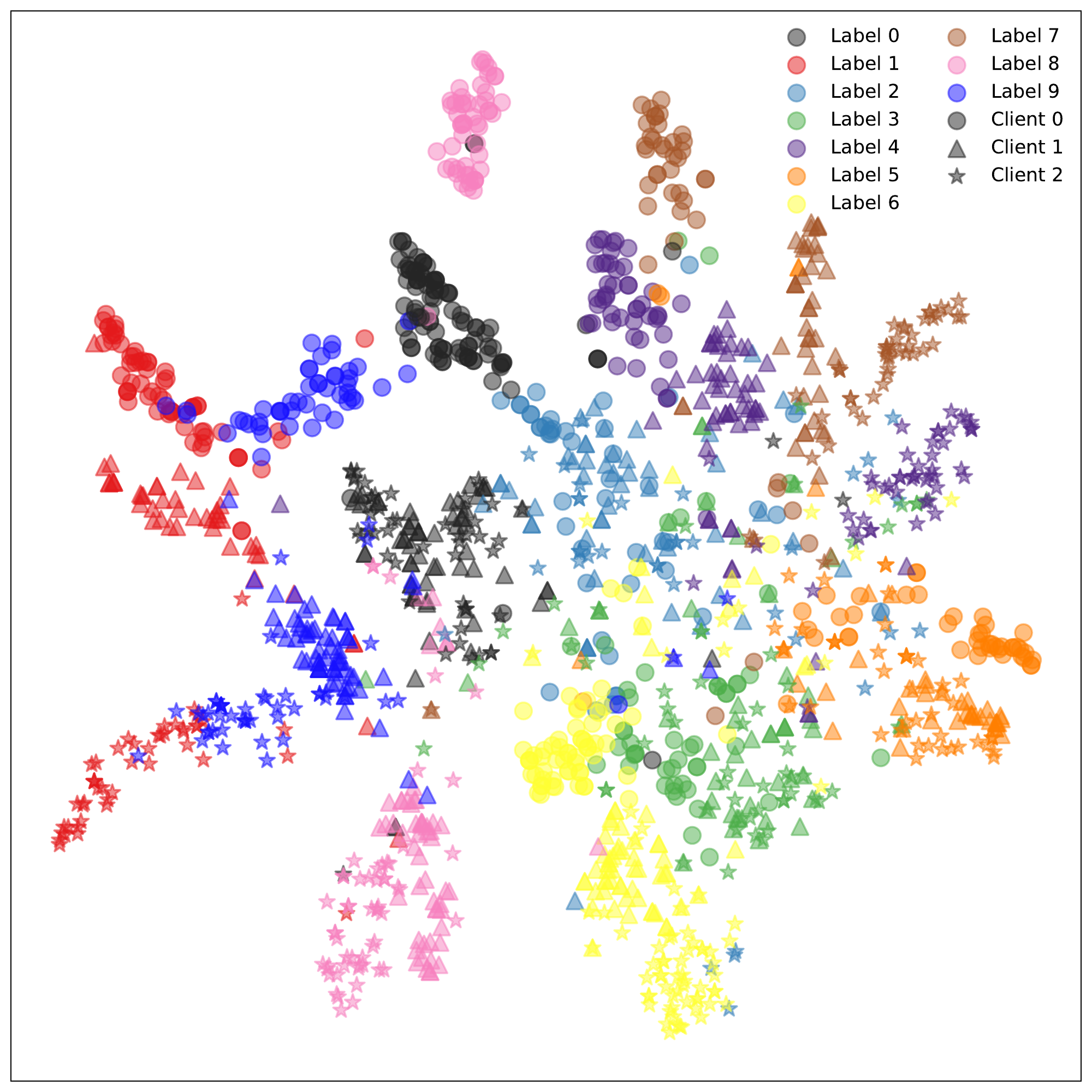}}
   \caption{Features of CIFAR10 test data of different client models \textbf{without} VHL.}
\label{fig:Apppendix-Clients-Feature-fedavg} 
\vspace{-0.3cm}
\end{figure*}
% \vspace{-0.3cm}

\begin{figure*}[htb!] 
    \setlength{\abovedisplayskip}{-2pt}
    \setlength{\abovecaptionskip}{-2pt}
    \subfigbottomskip=-1pt
    \subfigcapskip=1pt
  \centering
    \subfigure[$9$-th round. ]{\includegraphics[width=0.24\linewidth]{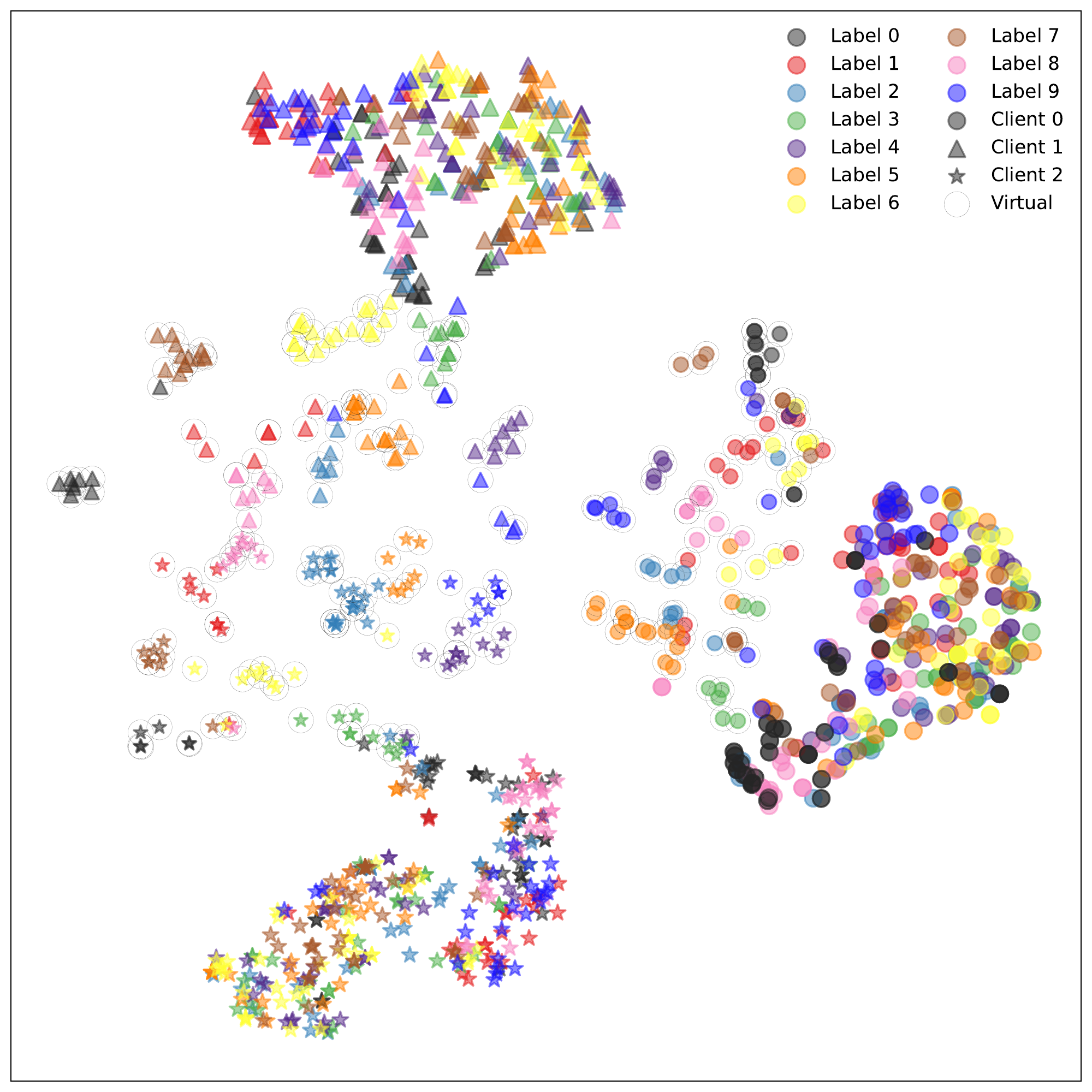}}
    \subfigure[$99$-th round. ]{\includegraphics[width=0.24\linewidth]{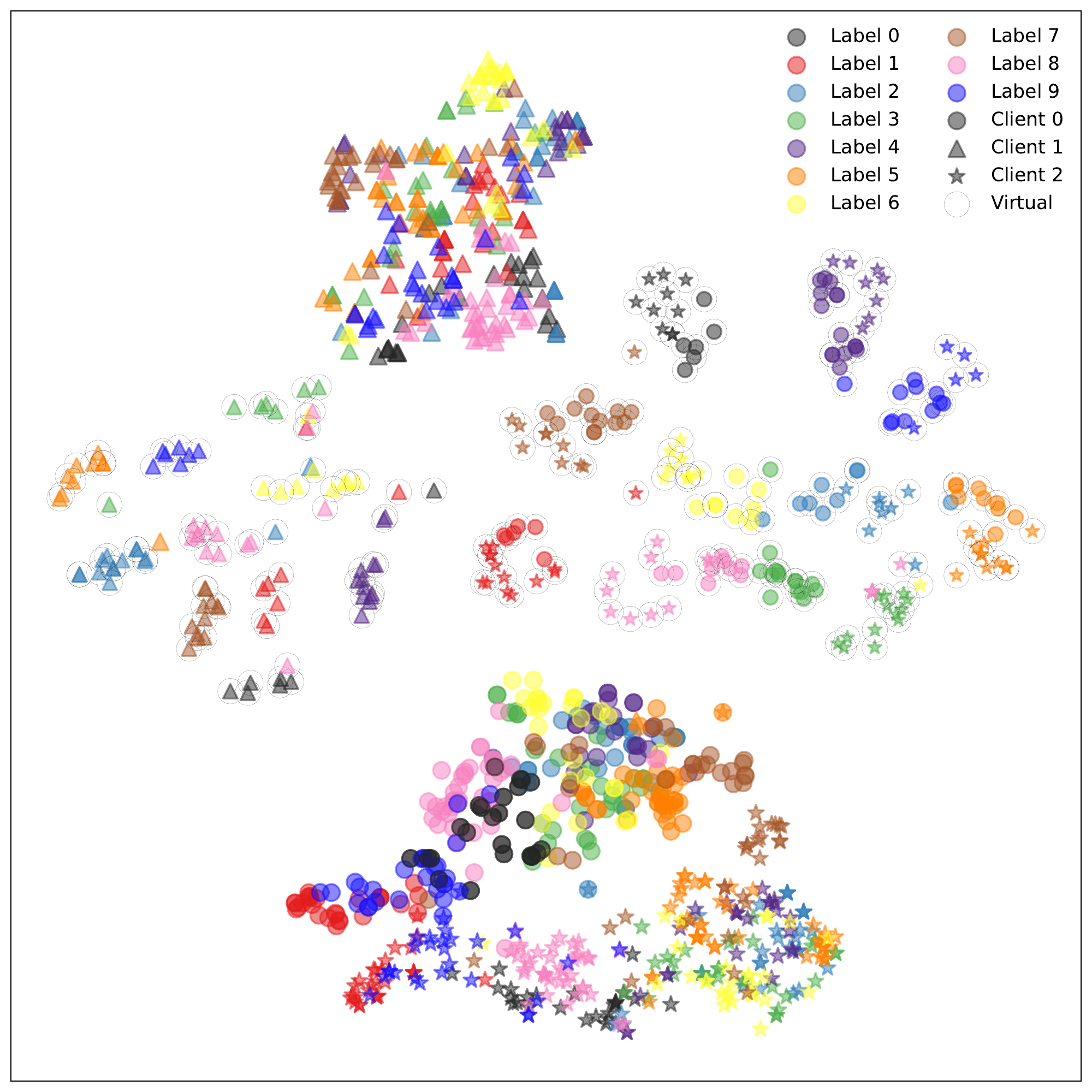}}
    \subfigure[$299$-th round. ]{\includegraphics[width=0.24\linewidth]{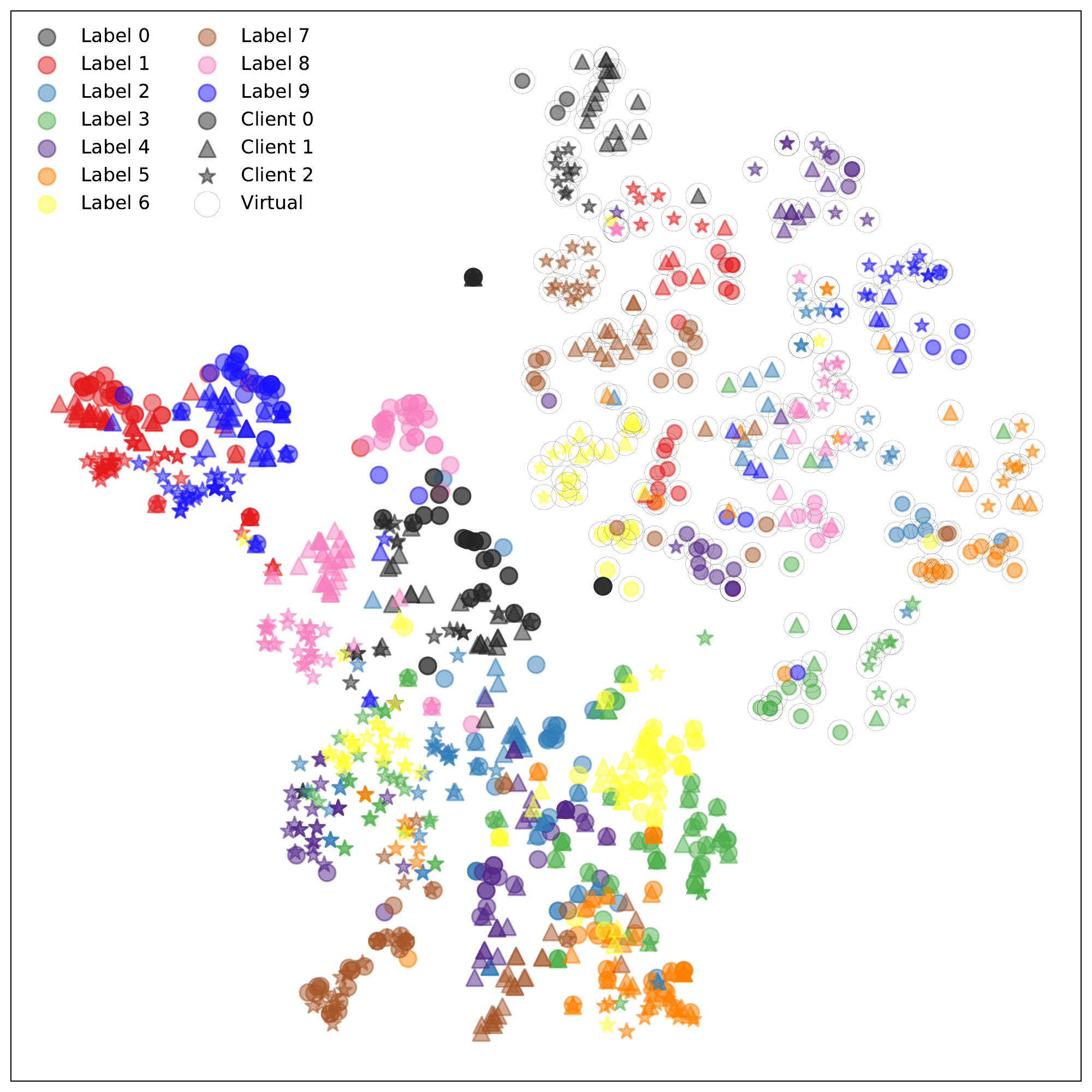}}
    \subfigure[at $999$-th round. ]{\includegraphics[width=0.24\linewidth]{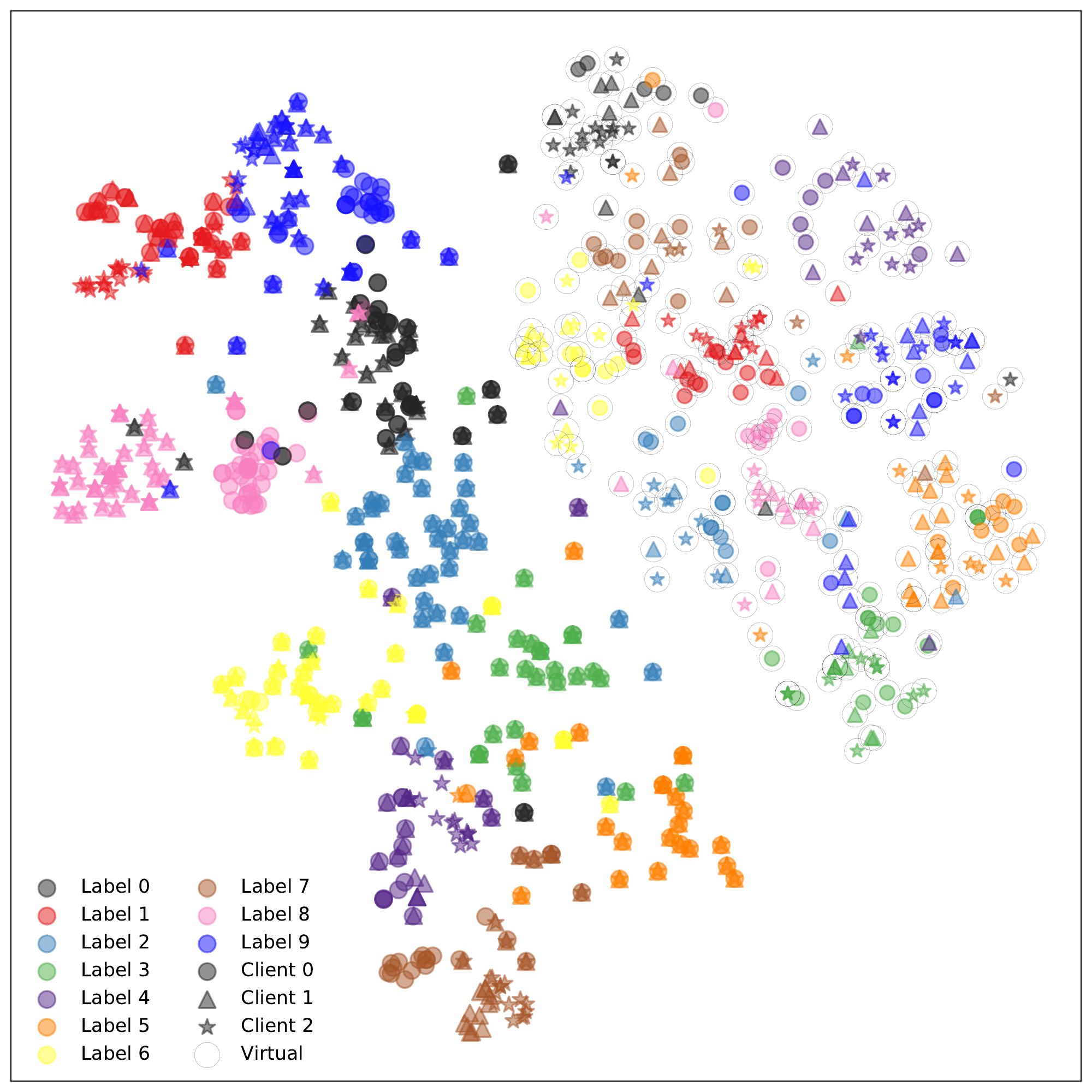}}
   \caption{Features of CIFAR10 test data of different client models \textbf{with} Naive VHL.}
\label{fig:Apppendix-Clients-Feature-NaiveVHL} 
\vspace{-0.3cm}
\end{figure*}

\begin{figure*}[htb!] 
    \setlength{\abovedisplayskip}{-2pt}
    \setlength{\abovecaptionskip}{-2pt}
    \subfigbottomskip=-1pt
    \subfigcapskip=1pt
  \centering
    \subfigure[$9$-th round. ]{\includegraphics[width=0.24\linewidth]{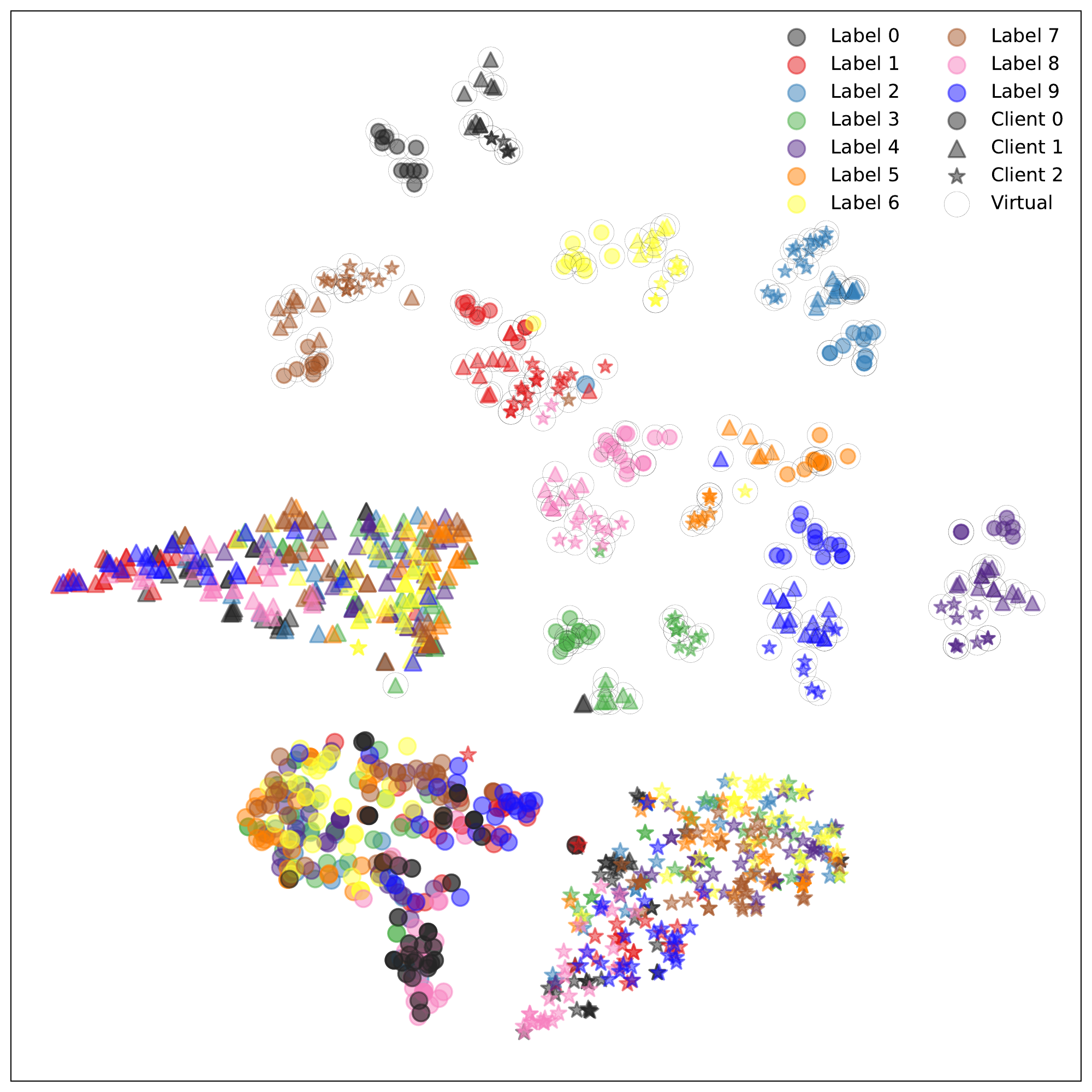}}
    \subfigure[$99$-th round. ]{\includegraphics[width=0.24\linewidth]{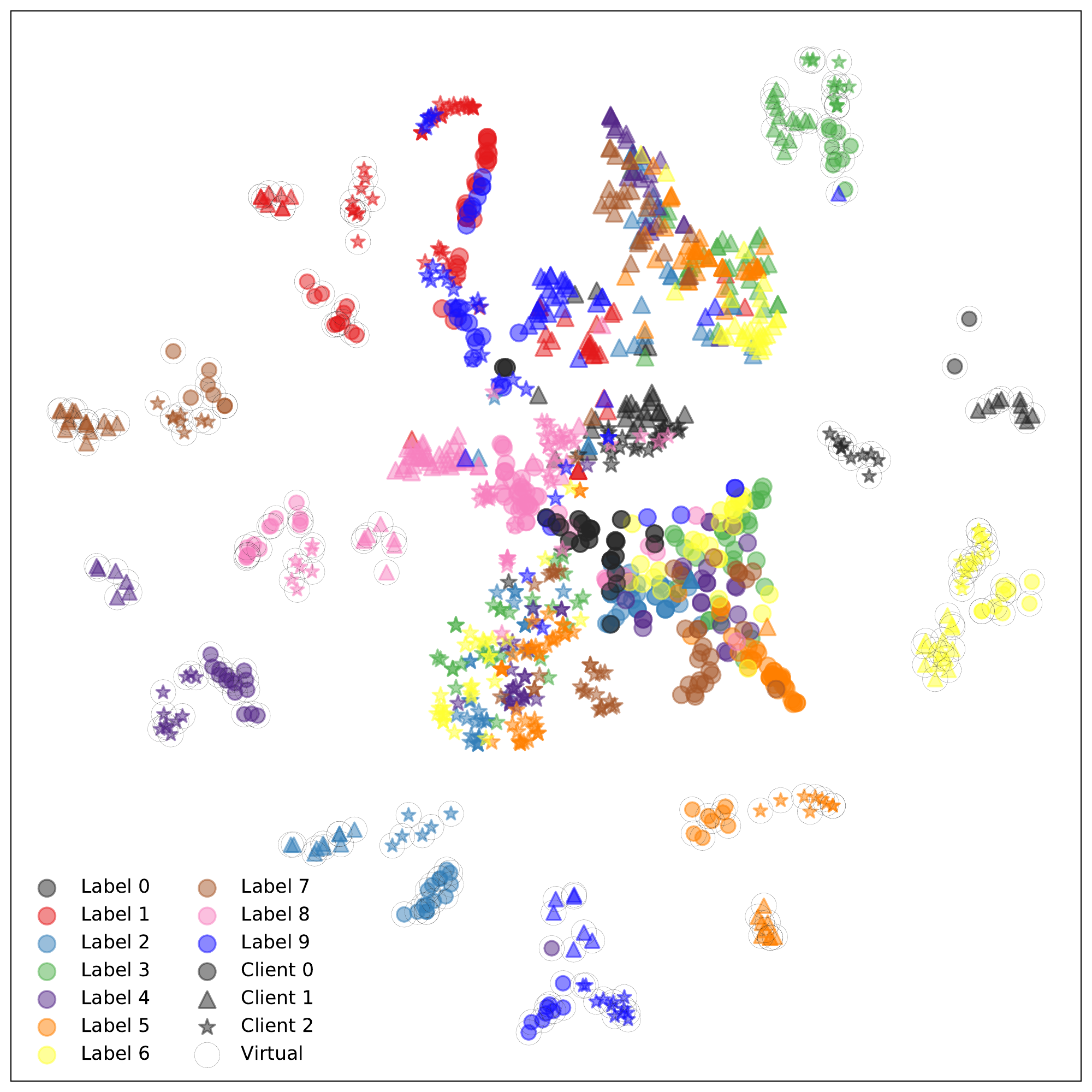}}
    \subfigure[$299$-th round. ]{\includegraphics[width=0.24\linewidth]{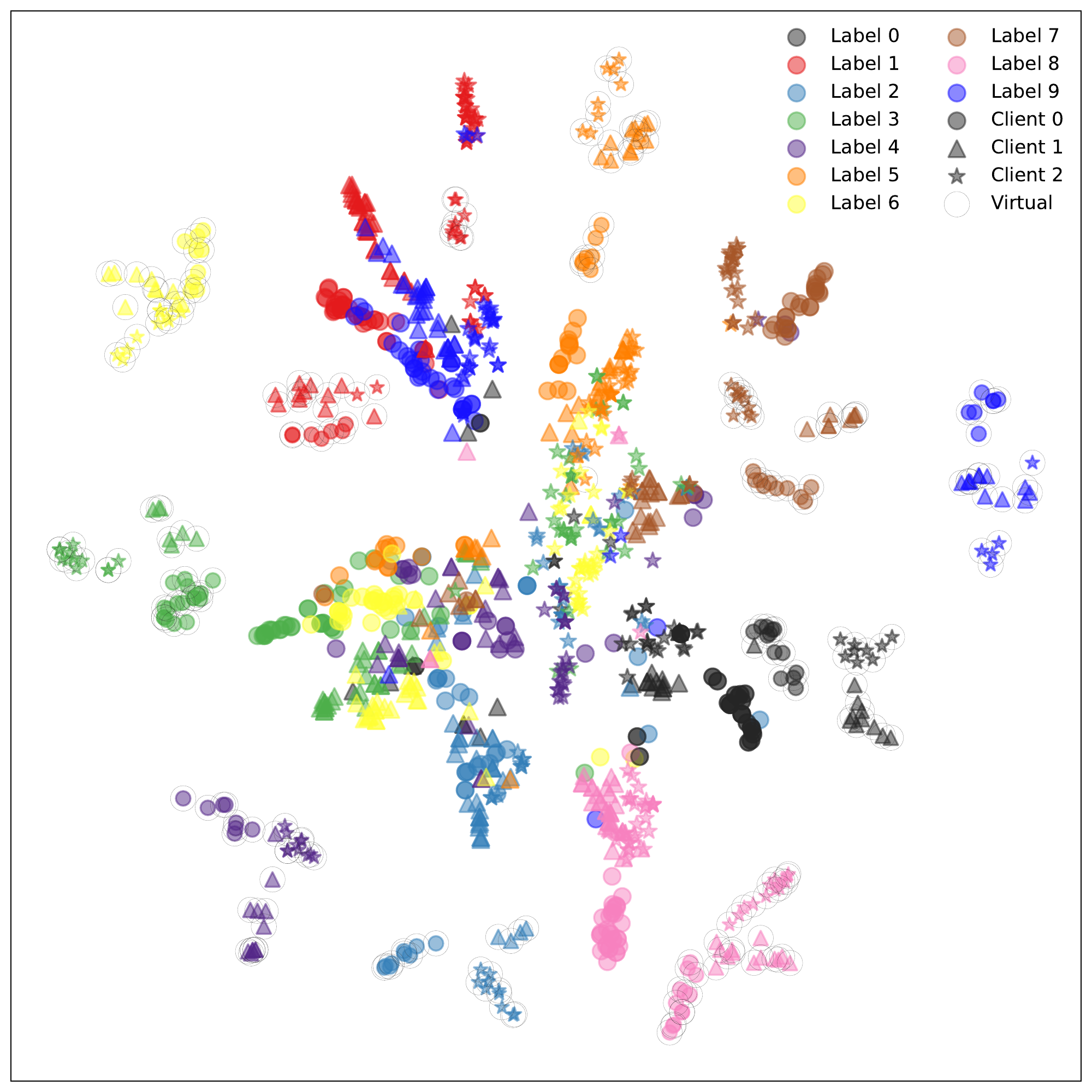}}
    \subfigure[at $999$-th round. ]{\includegraphics[width=0.24\linewidth]{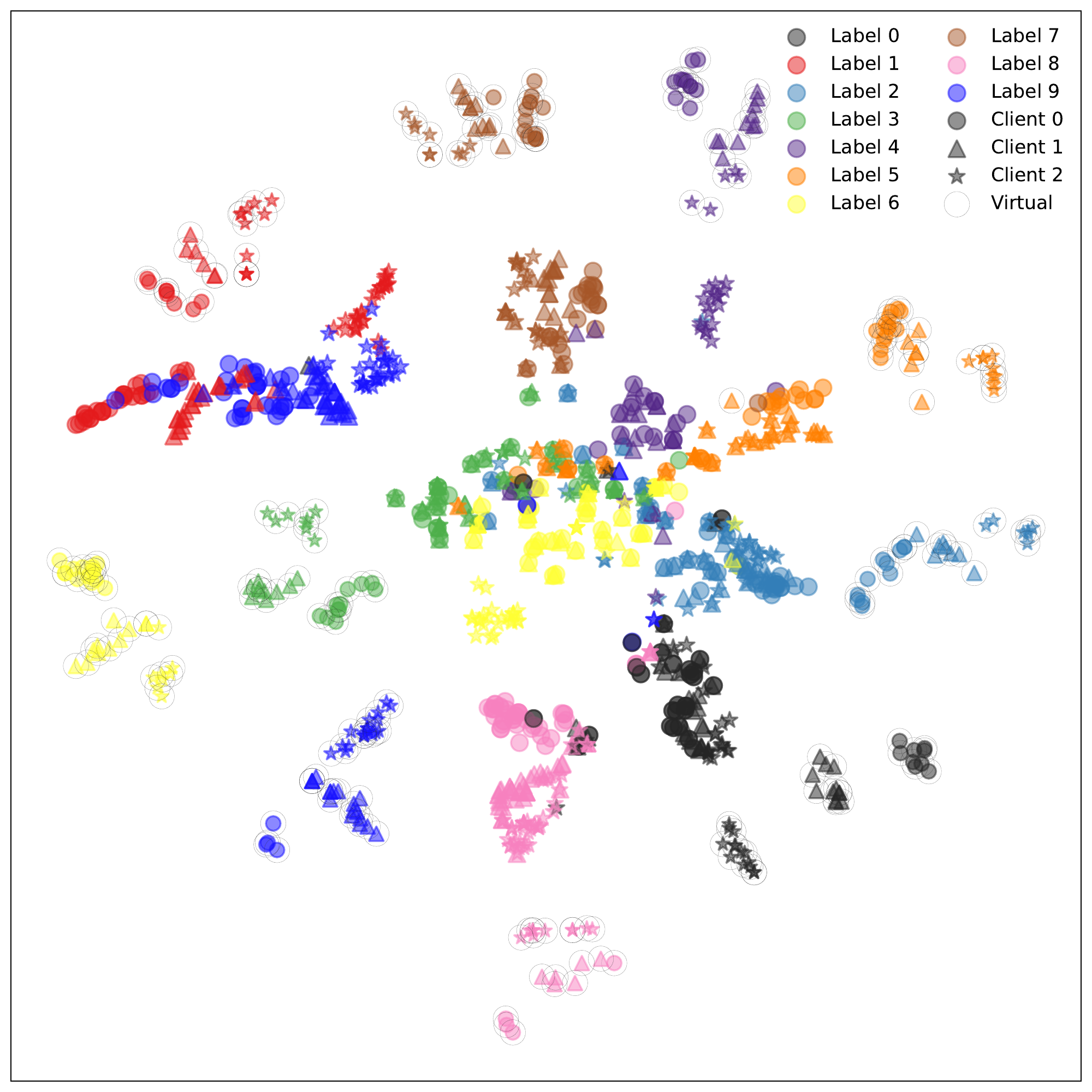}}
   \caption{Features of CIFAR10 test data of different client models \textbf{with} VHL.}
\label{fig:Apppendix-Clients-Feature-VHL} 
\vspace{-0.3cm}
\end{figure*}

\begin{figure*}[htb!] 
    \setlength{\abovedisplayskip}{-2pt}
    \setlength{\abovecaptionskip}{-2pt}
    \subfigbottomskip=-1pt
    \subfigcapskip=1pt
  \centering
    \subfigure[$9$-th round. ]{\includegraphics[width=0.24\linewidth]{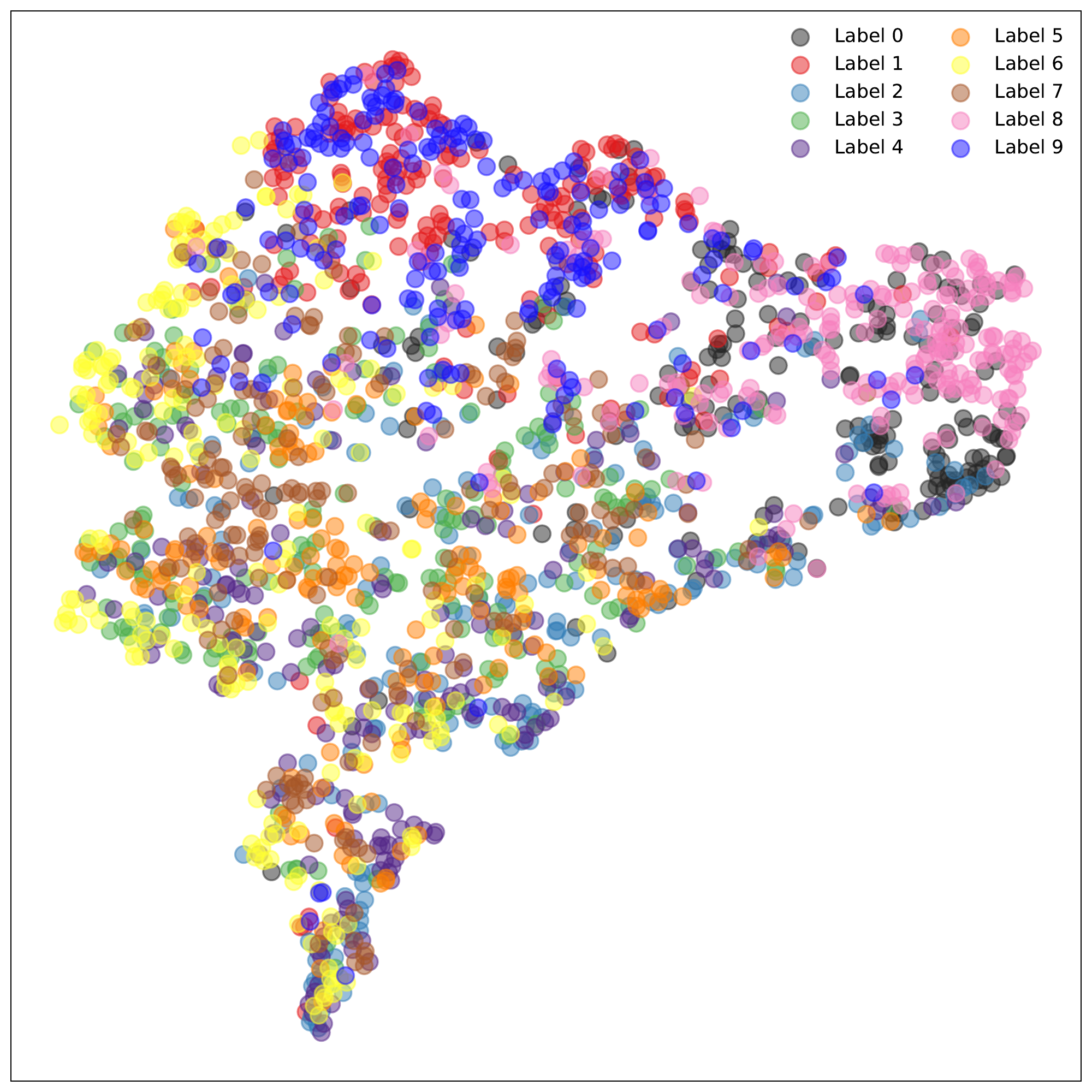}}
    \subfigure[$99$-th round. ]{\includegraphics[width=0.24\linewidth]{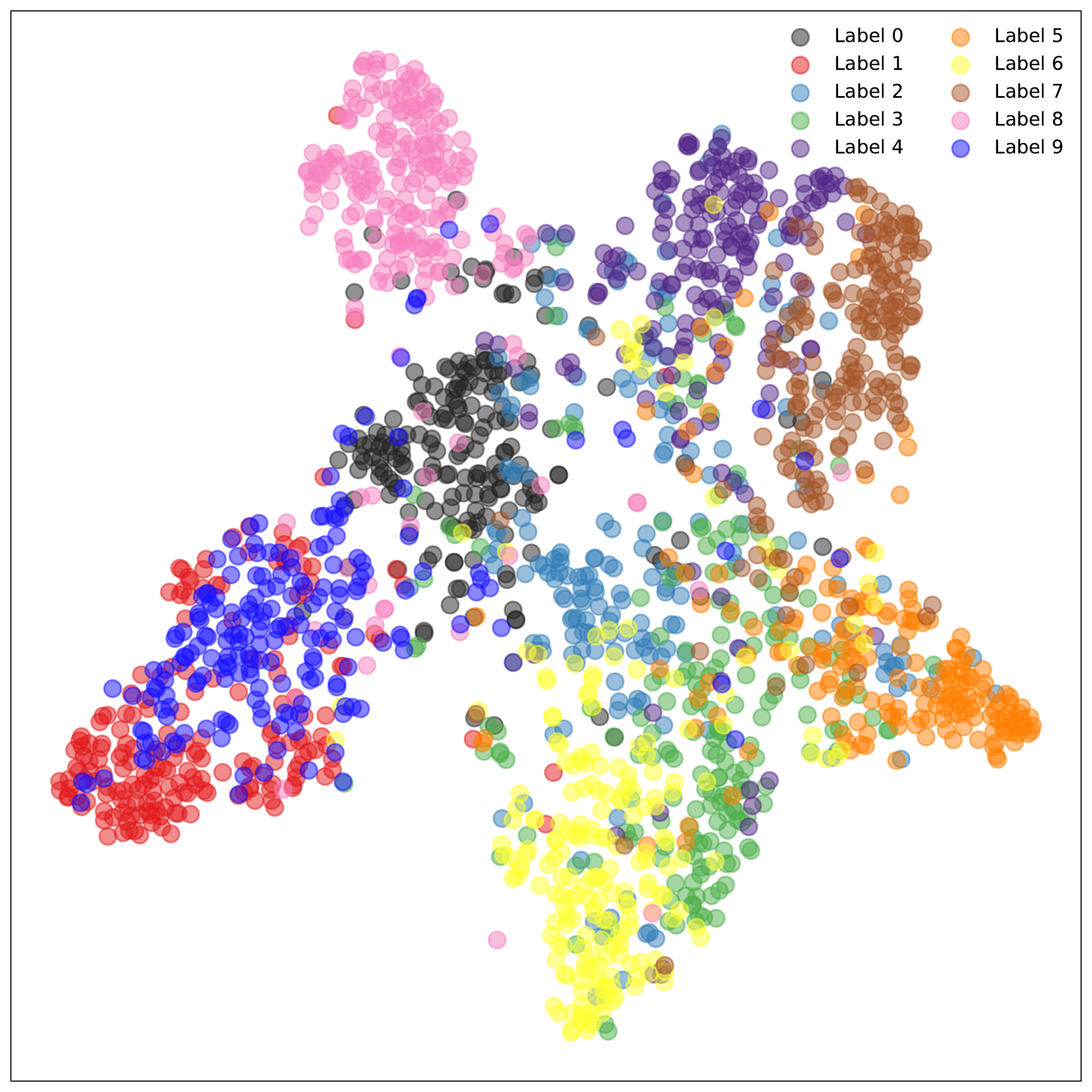}}
    \subfigure[$299$-th round. ]{\includegraphics[width=0.24\linewidth]{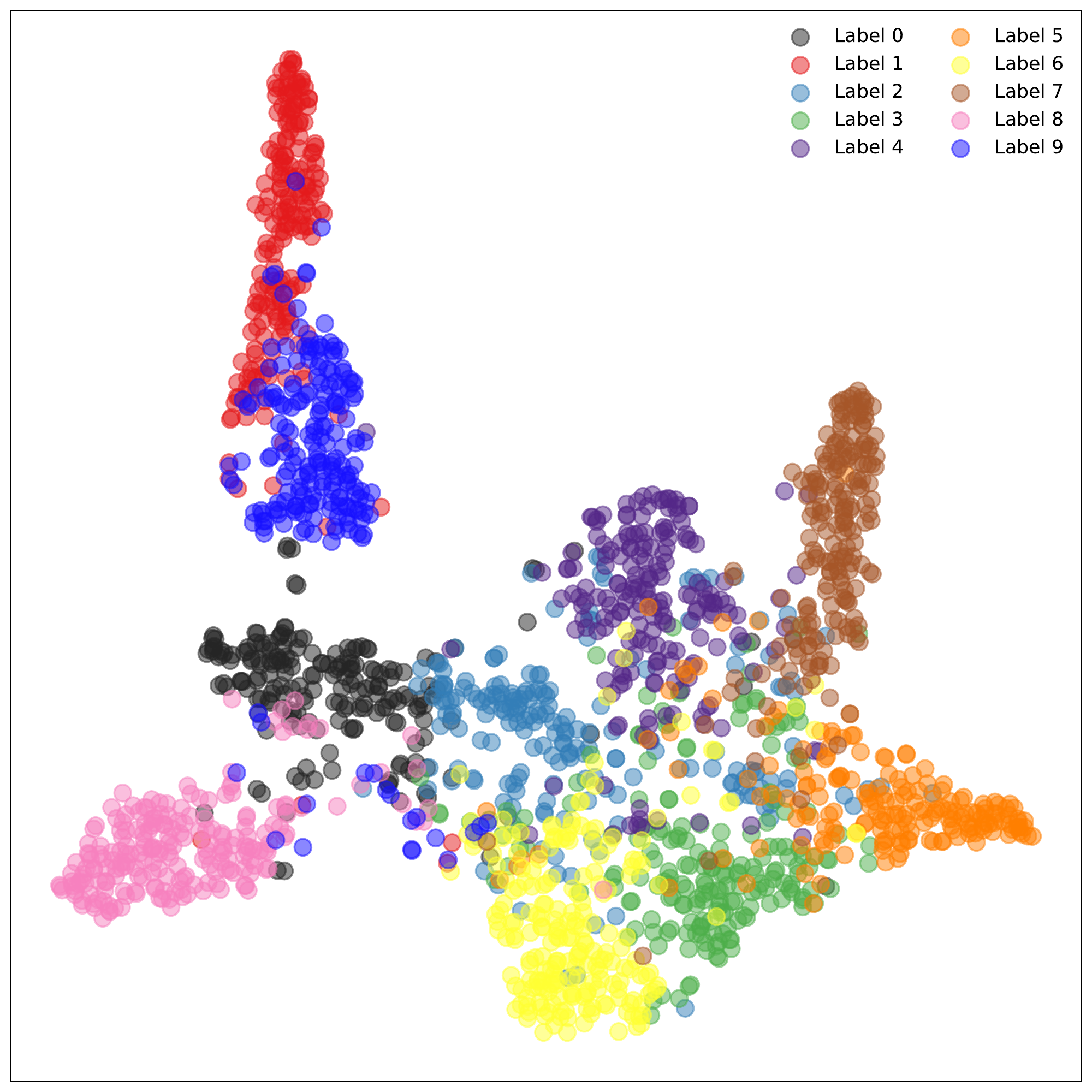}}
    \subfigure[$999$-th round. ]{\includegraphics[width=0.24\linewidth]{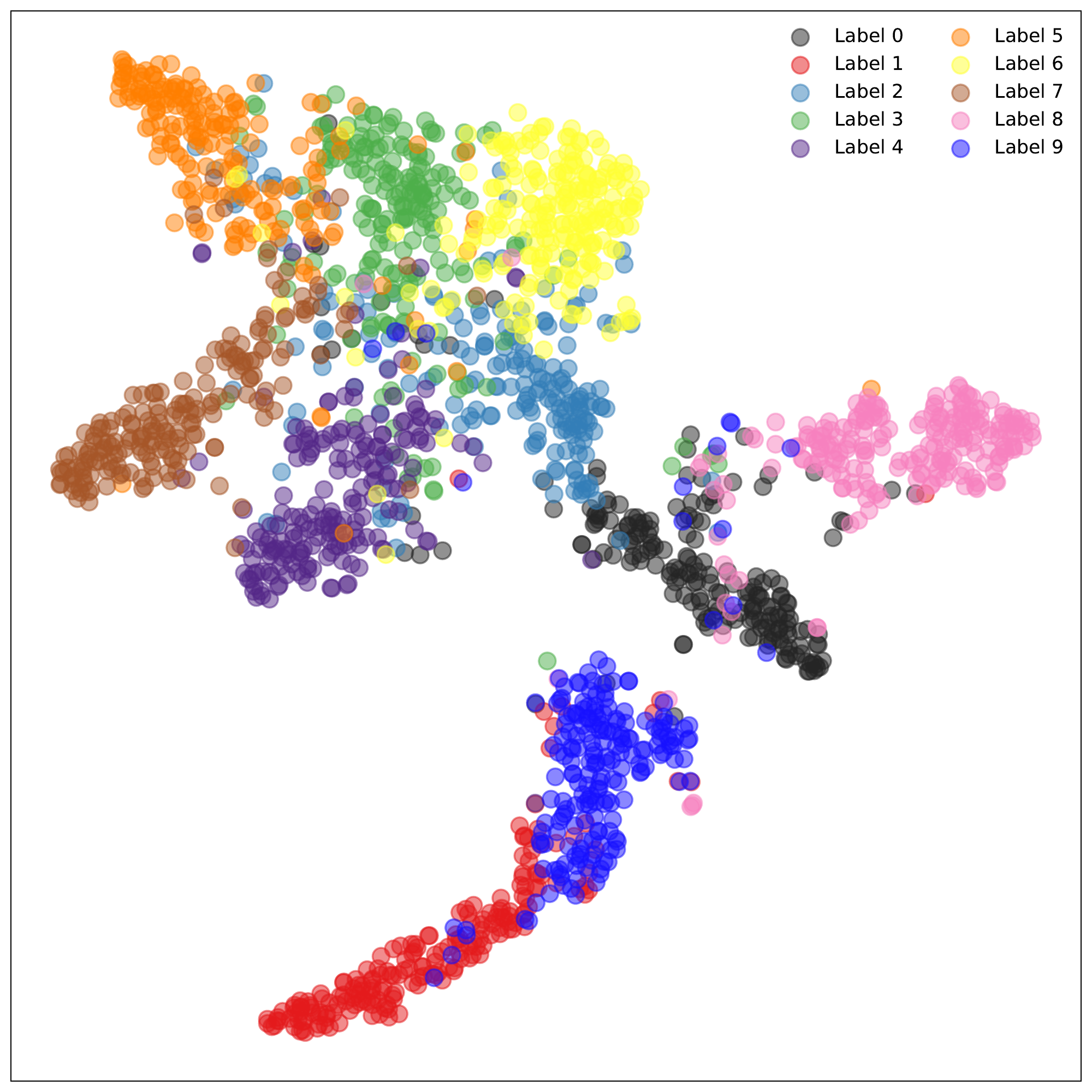}}
   \caption{Features of CIFAR10 test data of server model \textbf{without} VHL.}
\label{fig:Apppendix-Server-Feature-fedavg} 
\vspace{-0.3cm}
\end{figure*}
% \vspace{-0.3cm}

\begin{figure*}[htb!] 
    \setlength{\abovedisplayskip}{-2pt}
    \setlength{\abovecaptionskip}{-2pt}
    \subfigbottomskip=-1pt
    \subfigcapskip=1pt
  \centering
    \subfigure[$9$-th round. ]{\includegraphics[width=0.24\linewidth]{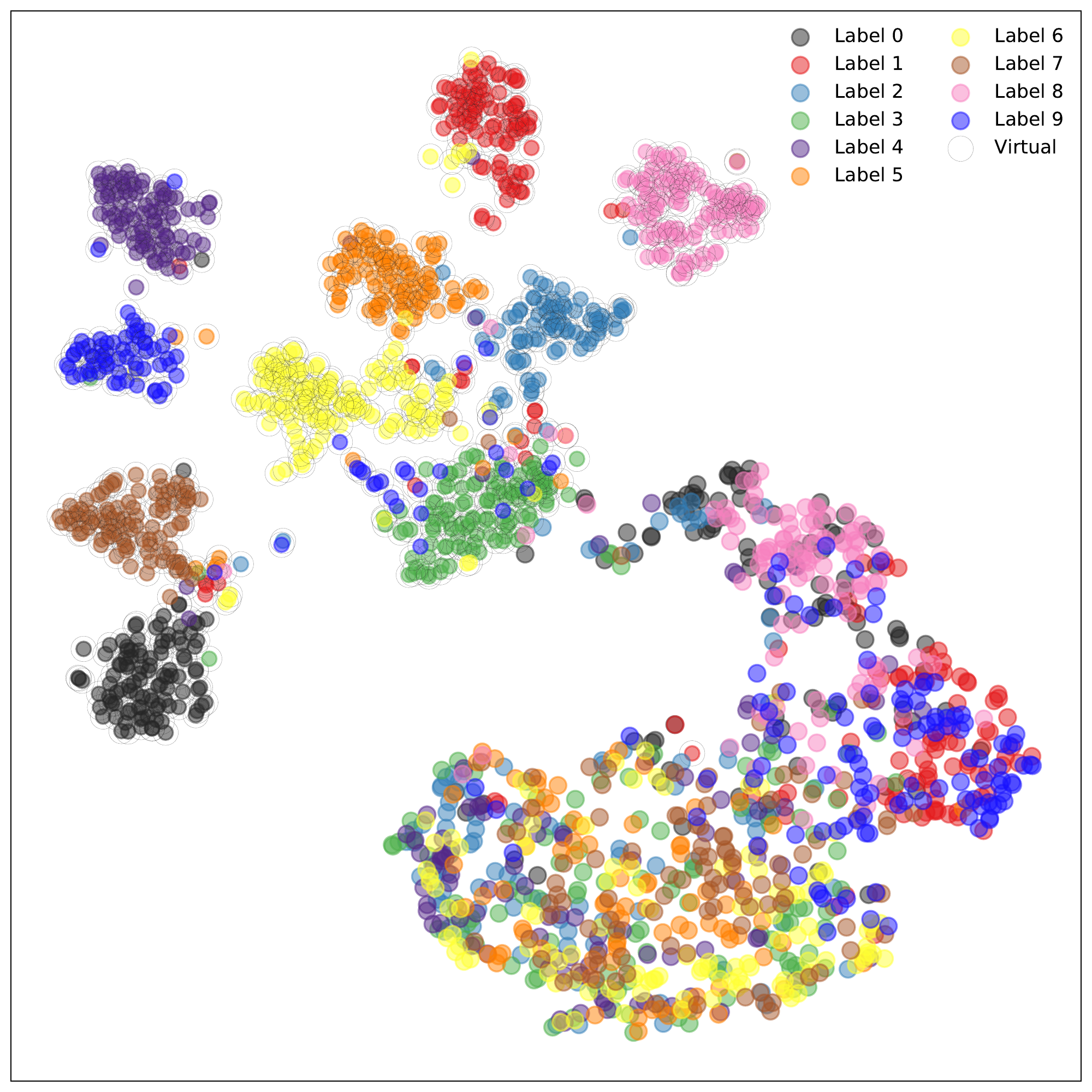}}
    \subfigure[$99$-th round. ]{\includegraphics[width=0.24\linewidth]{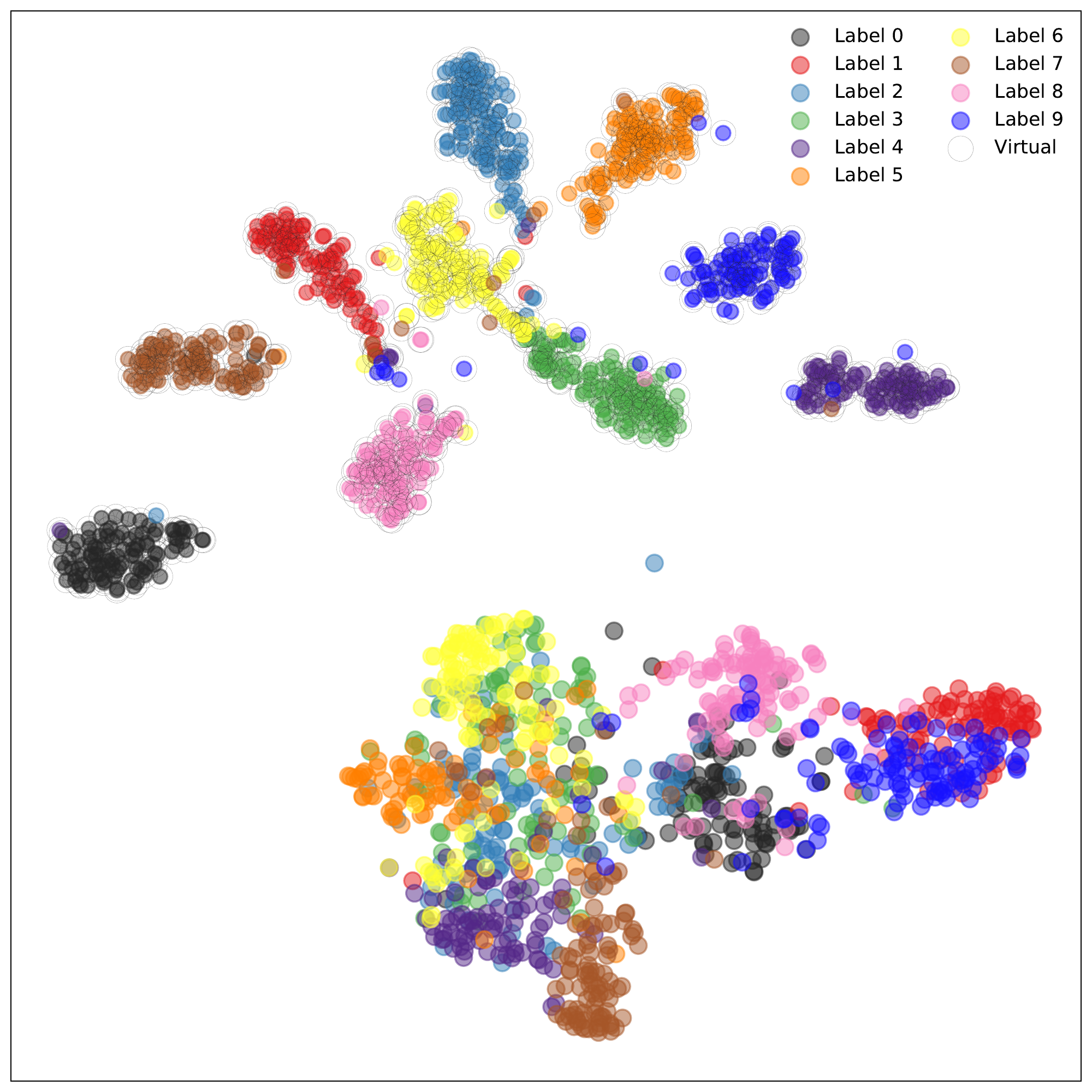}}
    \subfigure[$299$-th round. ]{\includegraphics[width=0.24\linewidth]{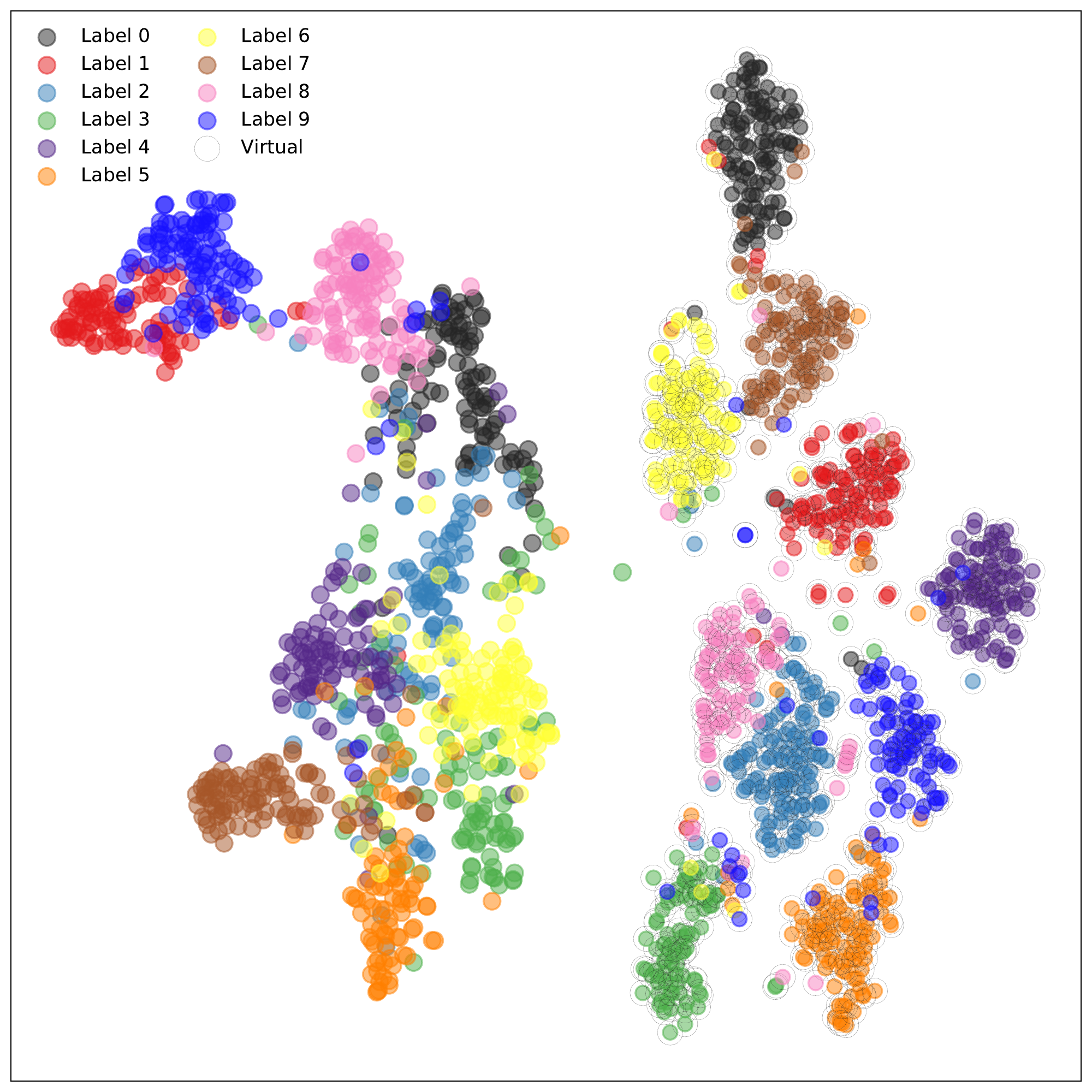}}
    \subfigure[at $999$-th round. ]{\includegraphics[width=0.24\linewidth]{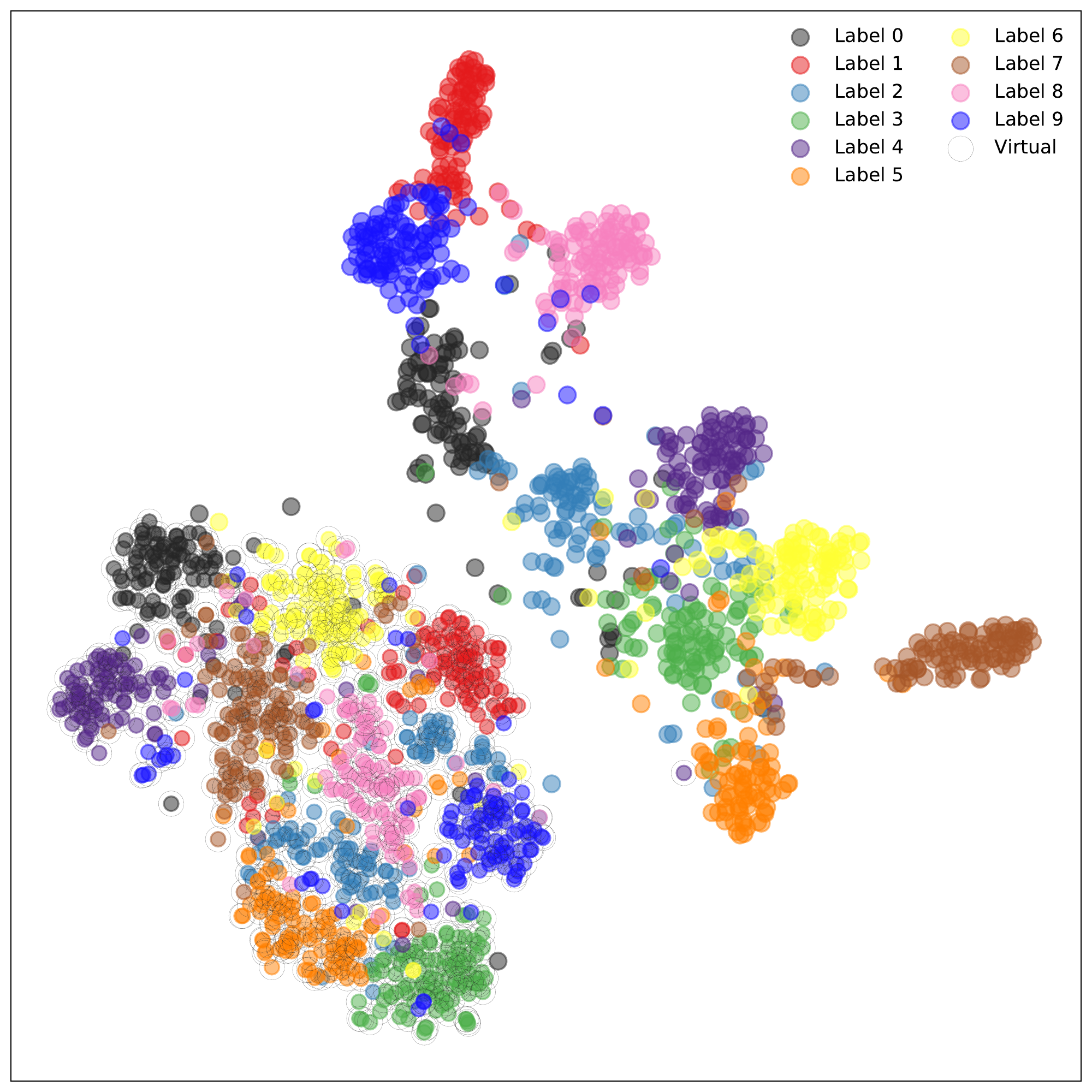}}
   \caption{Features of CIFAR10 test data of server model \textbf{with} Naive VHL.}
\label{fig:Apppendix-Server-Feature-NaiveVHL} 
\vspace{-0.3cm}
\end{figure*}

\begin{figure*}[htb!] 
    \setlength{\abovedisplayskip}{-2pt}
    \setlength{\abovecaptionskip}{-2pt}
    \subfigbottomskip=-1pt
    \subfigcapskip=1pt
  \centering
    \subfigure[$9$-th round. ]{\includegraphics[width=0.24\linewidth]{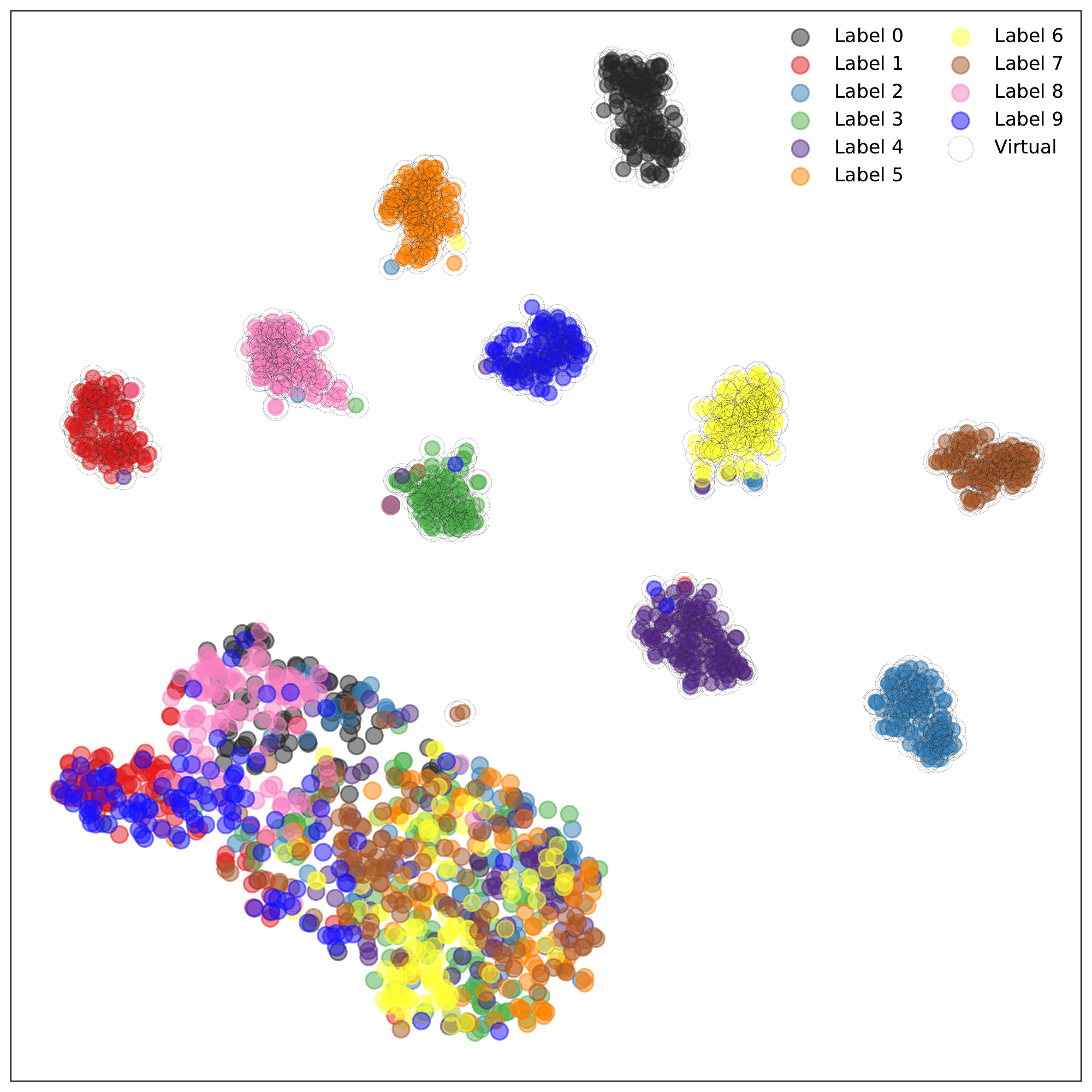}}
    \subfigure[$99$-th round. ]{\includegraphics[width=0.24\linewidth]{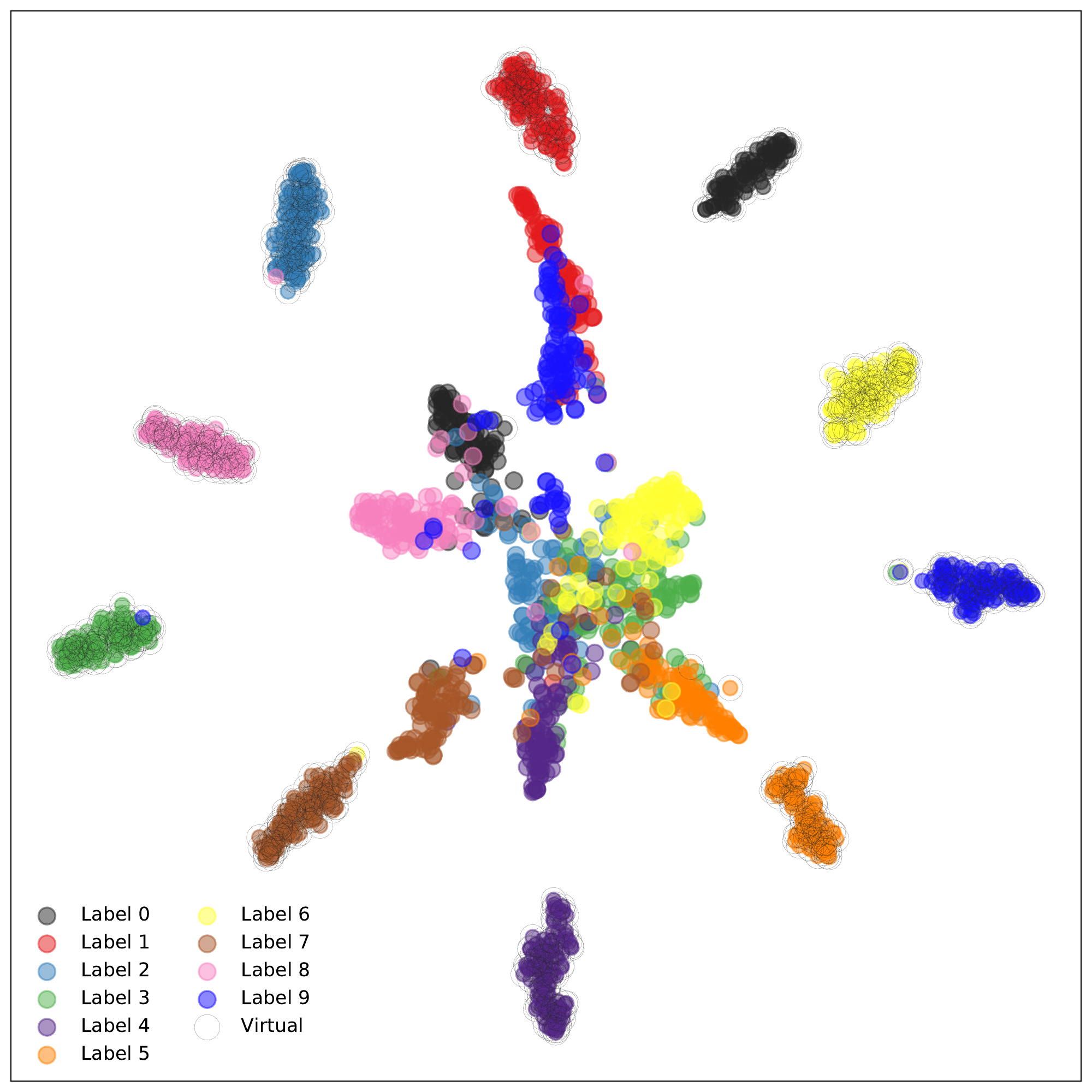}}
    \subfigure[$299$-th round. ]{\includegraphics[width=0.24\linewidth]{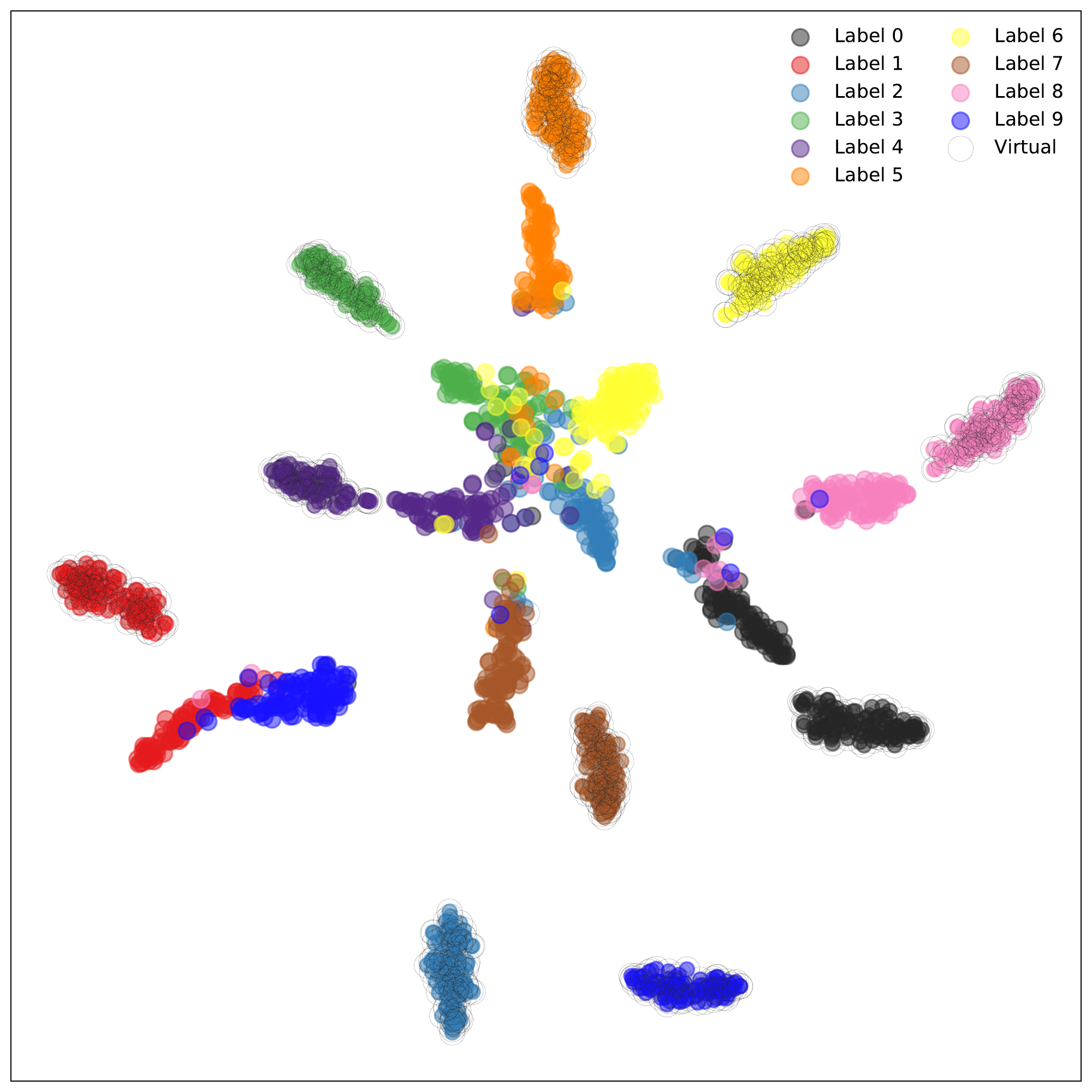}}
    \subfigure[at $999$-th round. ]{\includegraphics[width=0.24\linewidth]{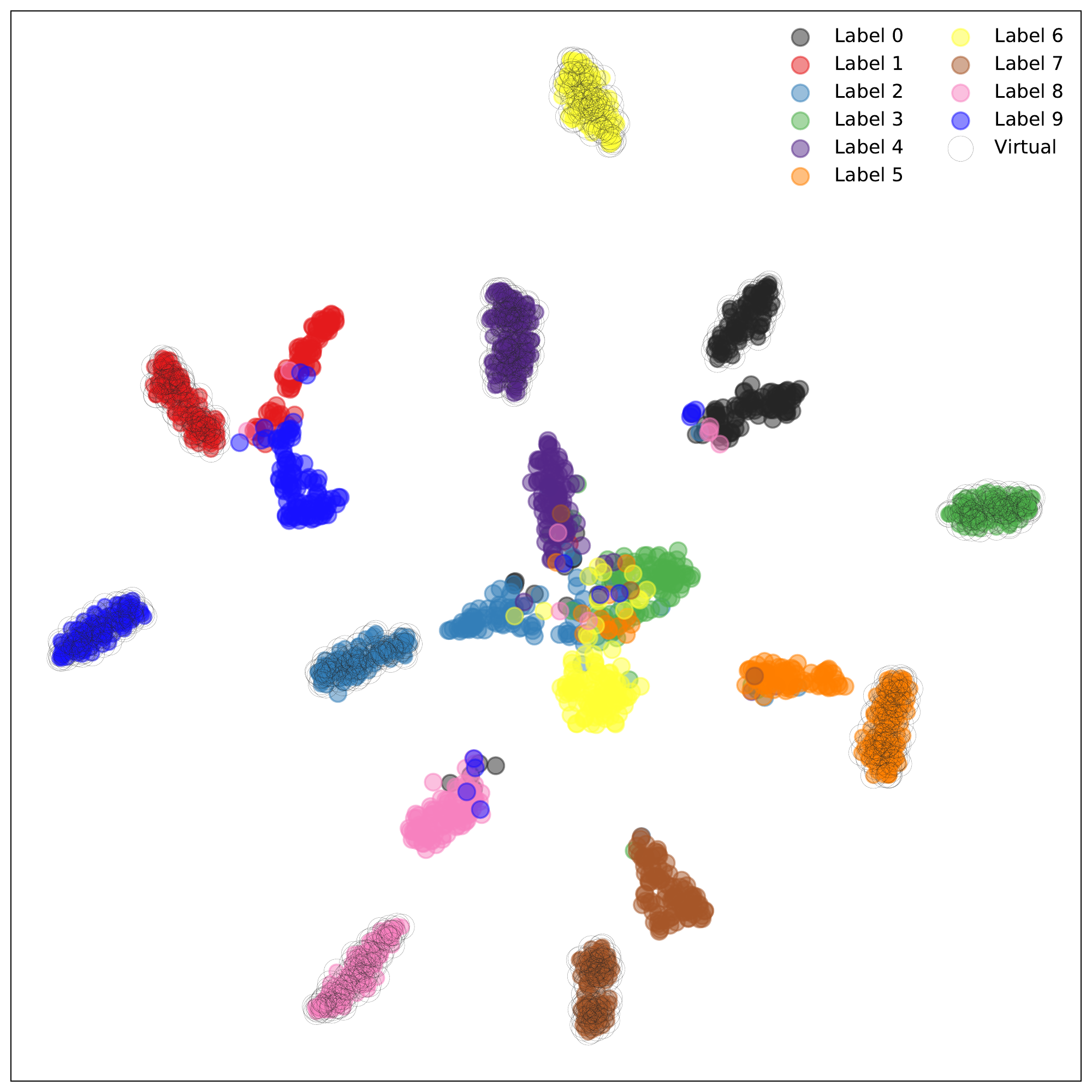}}
   \caption{Features of CIFAR10 test data of server model \textbf{with} VHL.}
\label{fig:Apppendix-Server-Feature-VHL} 
\vspace{-0.3cm}
\end{figure*}

\begin{figure*}[h!]
    \setlength{\abovedisplayskip}{-2pt}
    \setlength{\abovecaptionskip}{-2pt}
    \subfigbottomskip=-1pt
    \subfigcapskip=1pt
  \centering
% \!\!\!\!\!\!\!\!
     \subfigure[$a=0.1$, $K=10$, $E=1$ ]{\includegraphics[width=0.24\textwidth]{Convergence/normal-resnet18_v2-cifar10-0.1-10-1.pdf}}
     \subfigure[$a=0.1$, $K=10$, $E=5$ ]{\includegraphics[width=0.24\textwidth]{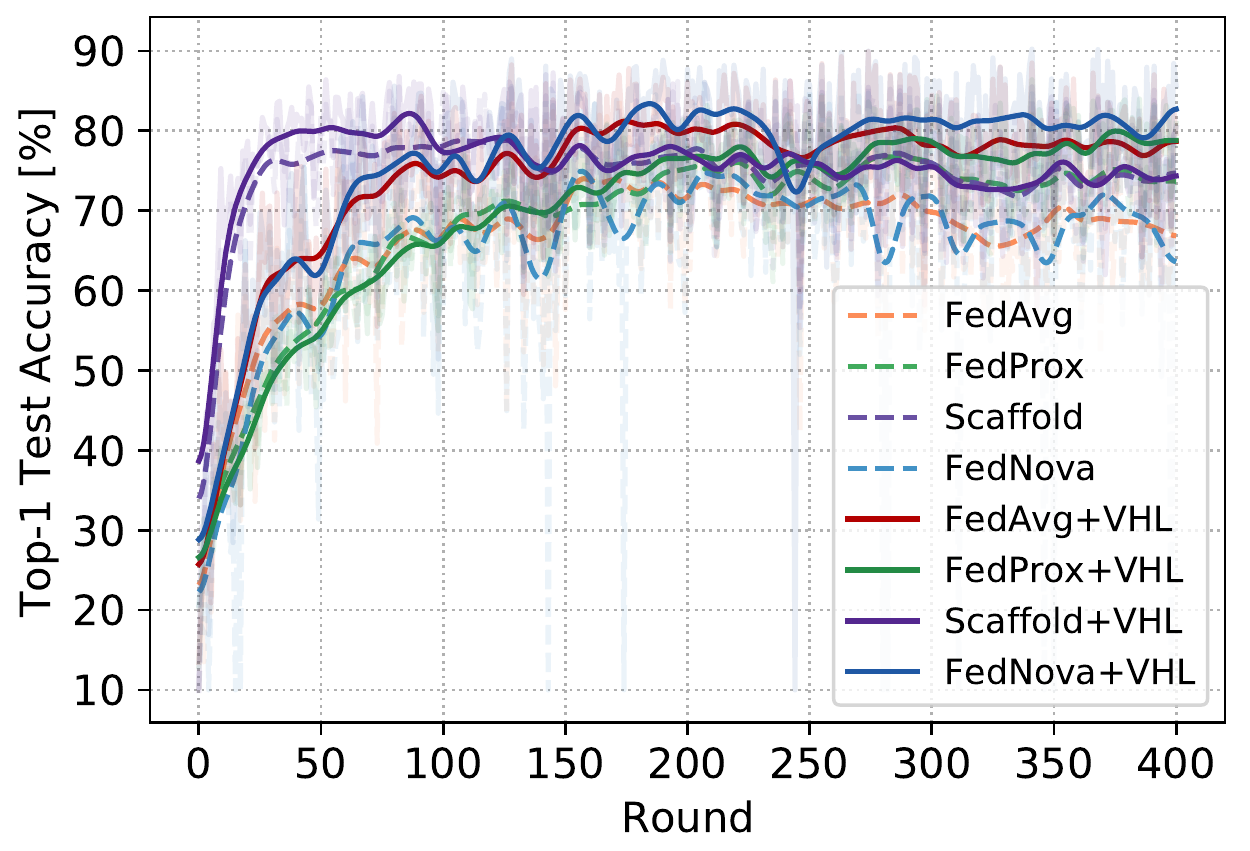}}
     \subfigure[$a=0.1$, $K=100$, $E=1$ ]{\includegraphics[width=0.24\textwidth]{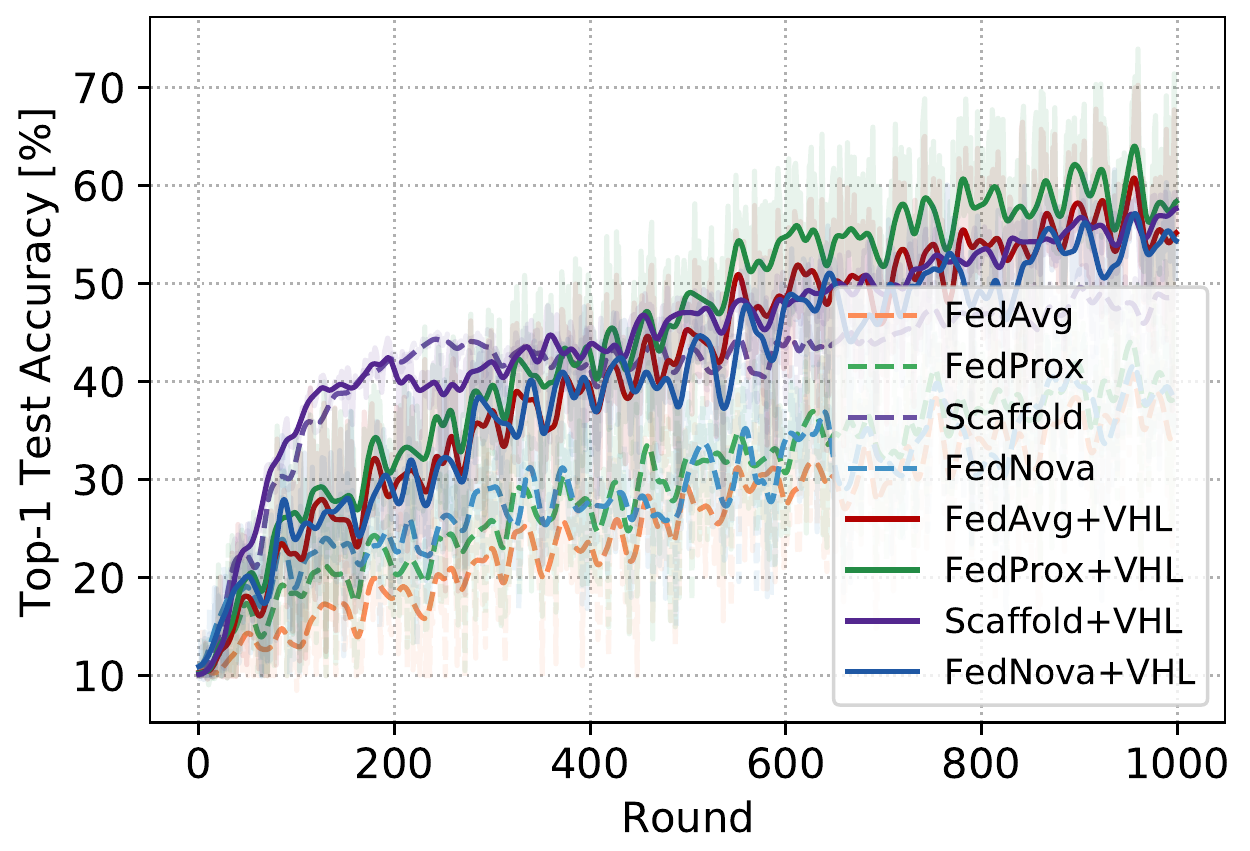}}
     \subfigure[$a=0.05$, $K=10$, $E=1$ ]{\includegraphics[width=0.24\textwidth]{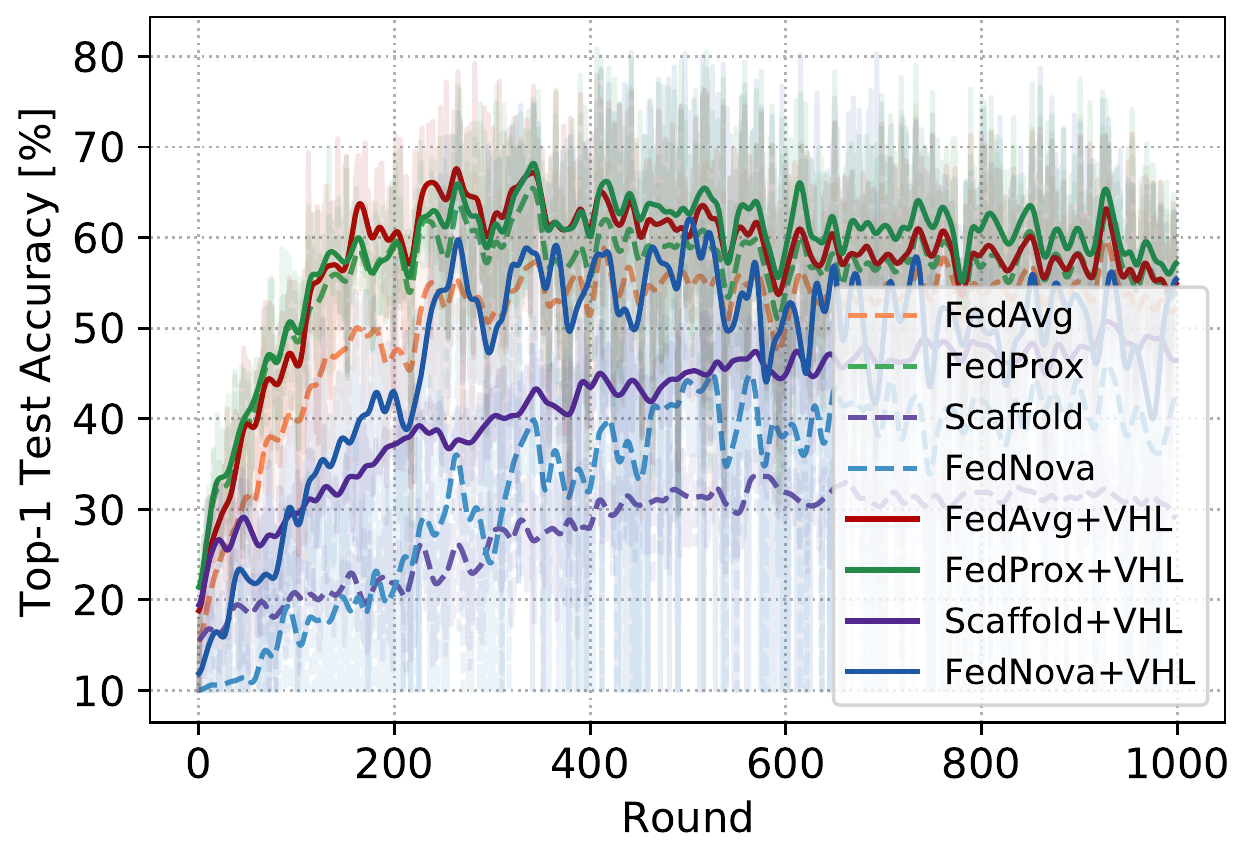}}
    \caption{Convergence comparison of CIFAR-10.}
    \label{fig:Convergence-cifar10}
\vspace{-0.3cm}
% \vspace{-0.5cm}
\end{figure*}

\begin{figure*}[h!]
    \setlength{\abovedisplayskip}{-2pt}
    \setlength{\abovecaptionskip}{-2pt}
    \subfigbottomskip=-1pt
    \subfigcapskip=1pt
  \centering
% \!\!\!\!\!\!\!\!
     \subfigure[$a=0.1$, $K=10$, $E=1$ ]{\includegraphics[width=0.24\textwidth]{Convergence/normal-resnet18_v2-fmnist-0.1-10-1.pdf}}
     \subfigure[$a=0.1$, $K=10$, $E=5$ ]{\includegraphics[width=0.24\textwidth]{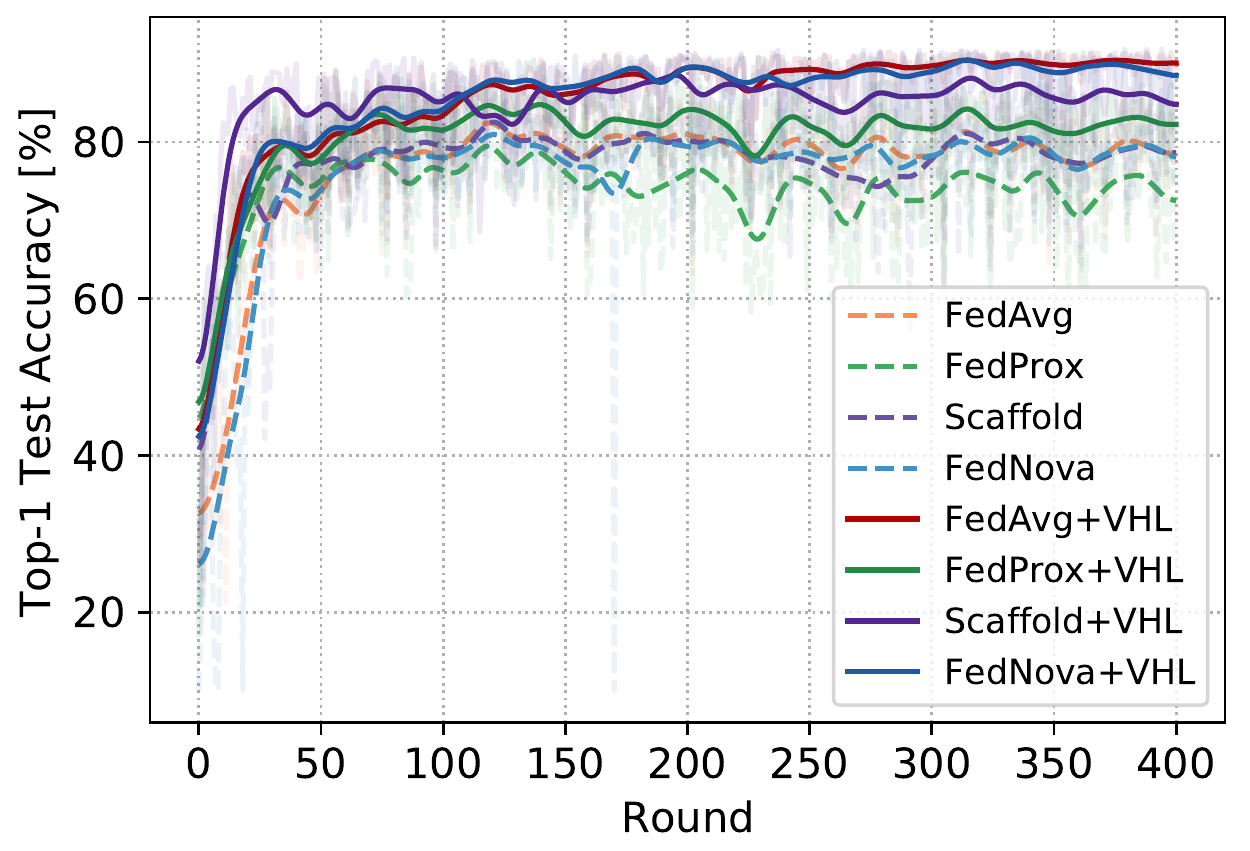}}
     \subfigure[$a=0.1$, $K=100$, $E=1$ ]{\includegraphics[width=0.24\textwidth]{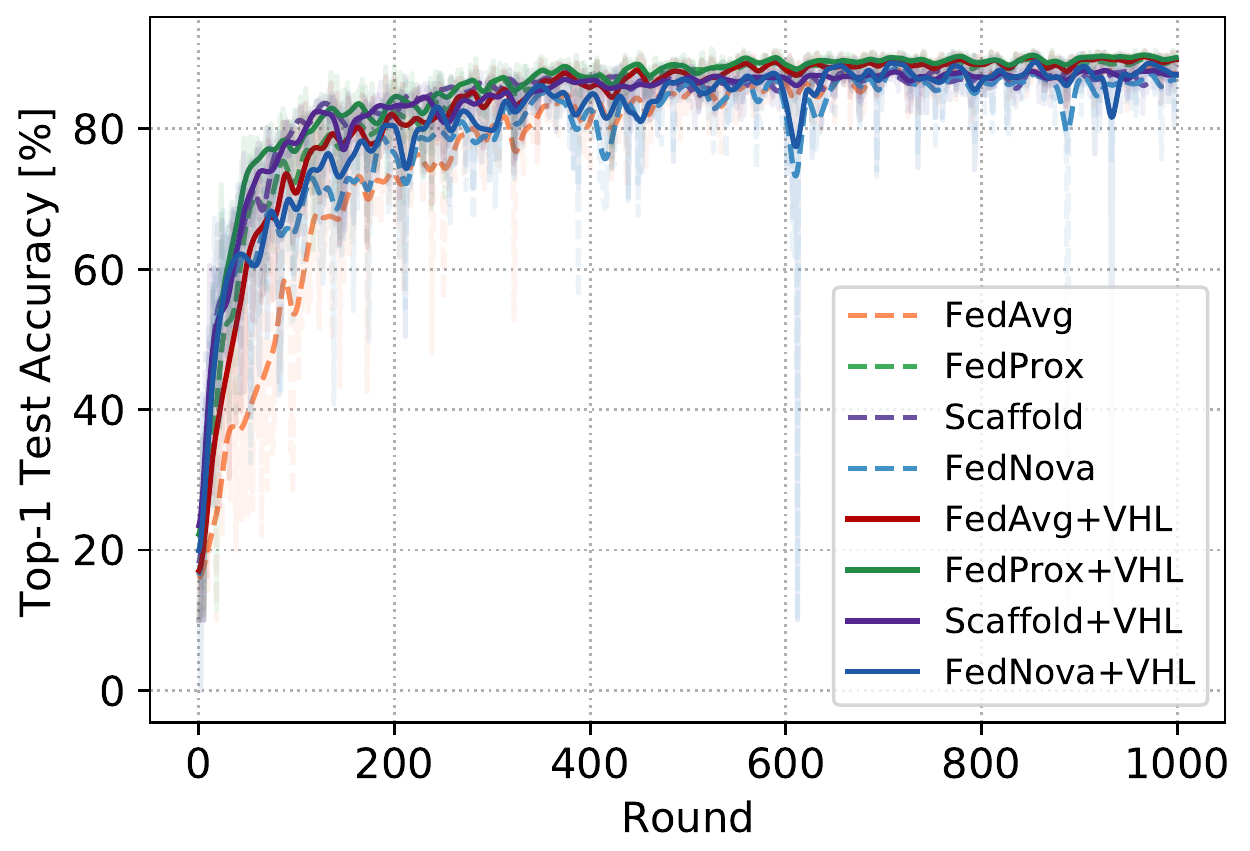}}
     \subfigure[$a=0.05$, $K=10$, $E=1$ ]{\includegraphics[width=0.24\textwidth]{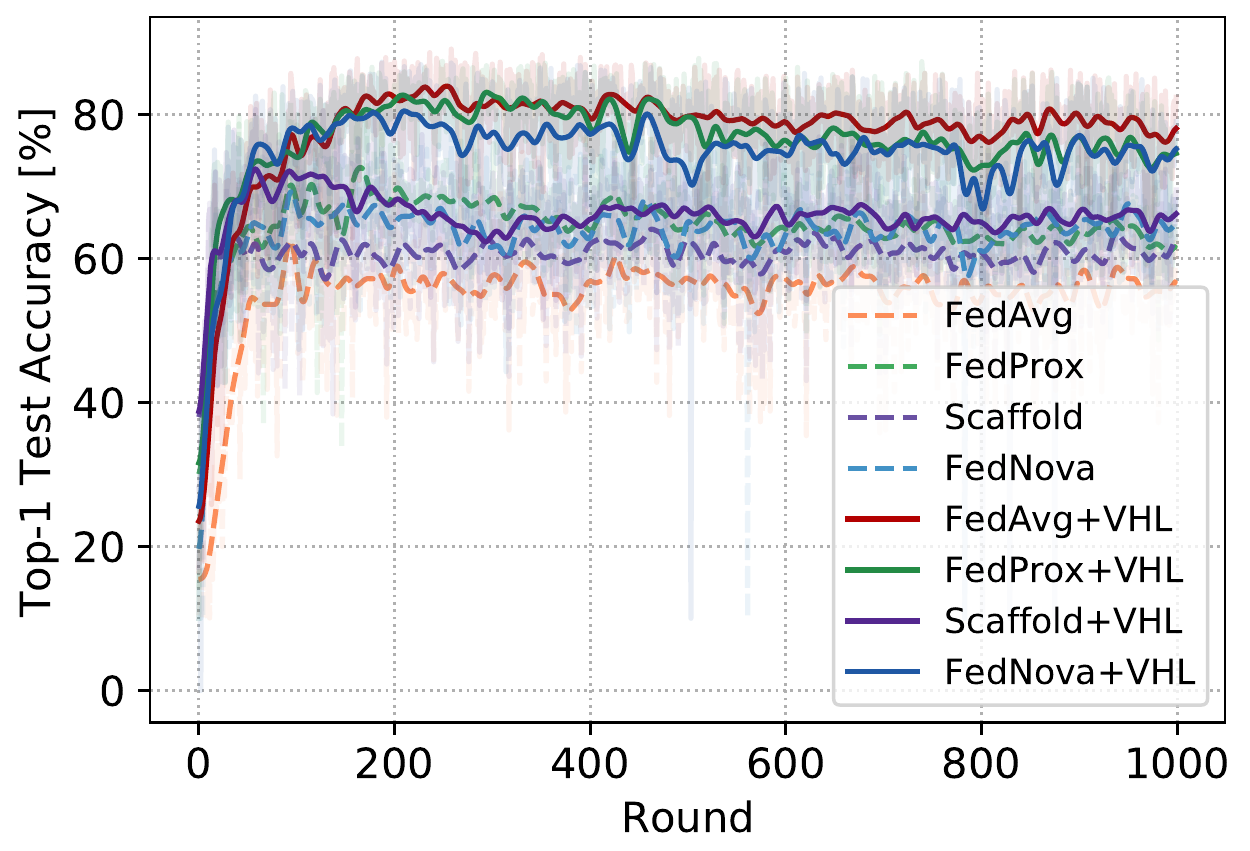}}
    \caption{Convergence comparison of FMNIST.}
    \label{fig:Convergence-fmnist}
\vspace{-0.3cm}
% \vspace{-0.5cm}
\end{figure*}

\begin{figure*}[h!]
    \setlength{\abovedisplayskip}{-2pt}
    \setlength{\abovecaptionskip}{-2pt}s
    \subfigbottomskip=-1pt
    \subfigcapskip=1pt
  \centering
% \!\!\!\!\!\!\!\!
     \subfigure[$a=0.1$, $K=10$, $E=1$ ]{\includegraphics[width=0.24\textwidth]{Convergence/normal-resnet18_v2-SVHN-0.1-10-1.pdf}}
     \subfigure[$a=0.1$, $K=10$, $E=5$ ]{\includegraphics[width=0.24\textwidth]{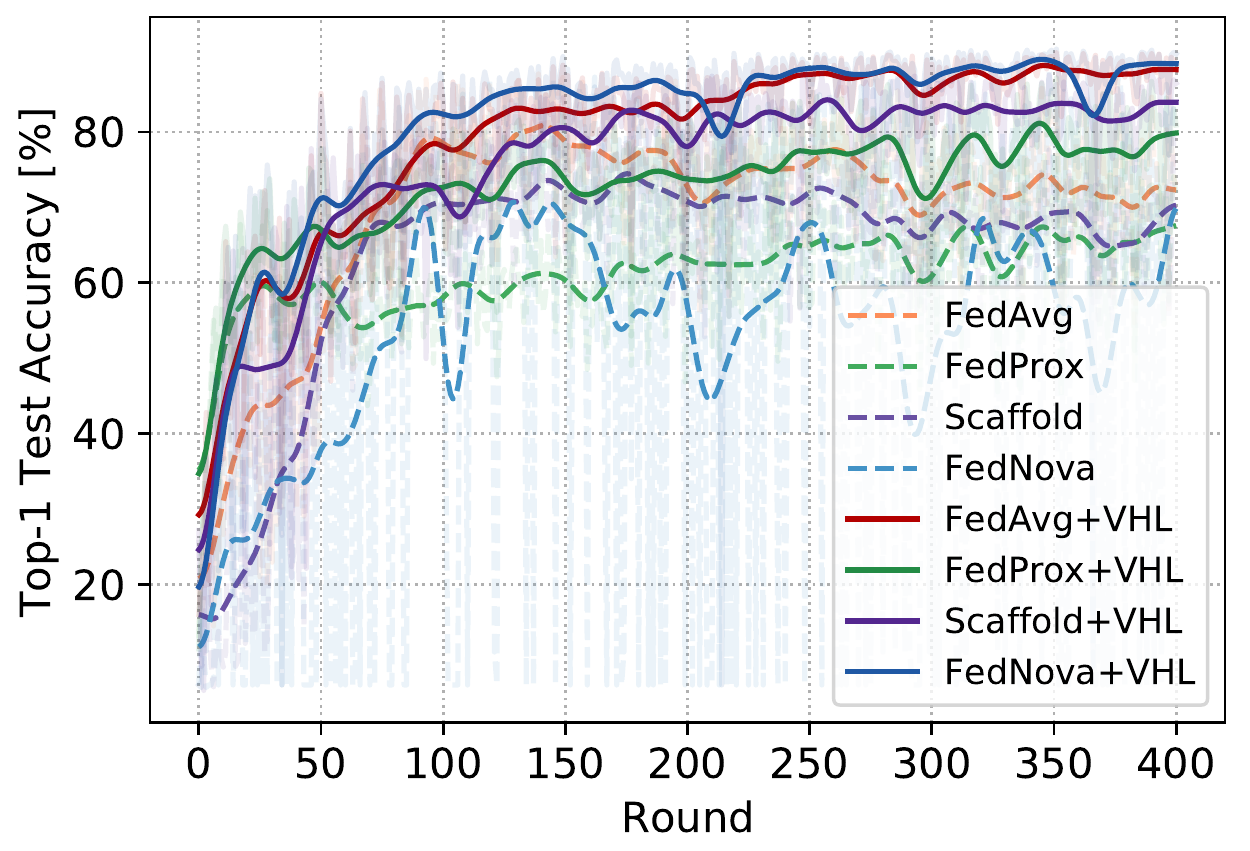}}
     \subfigure[$a=0.1$, $K=100$, $E=1$ ]{\includegraphics[width=0.24\textwidth]{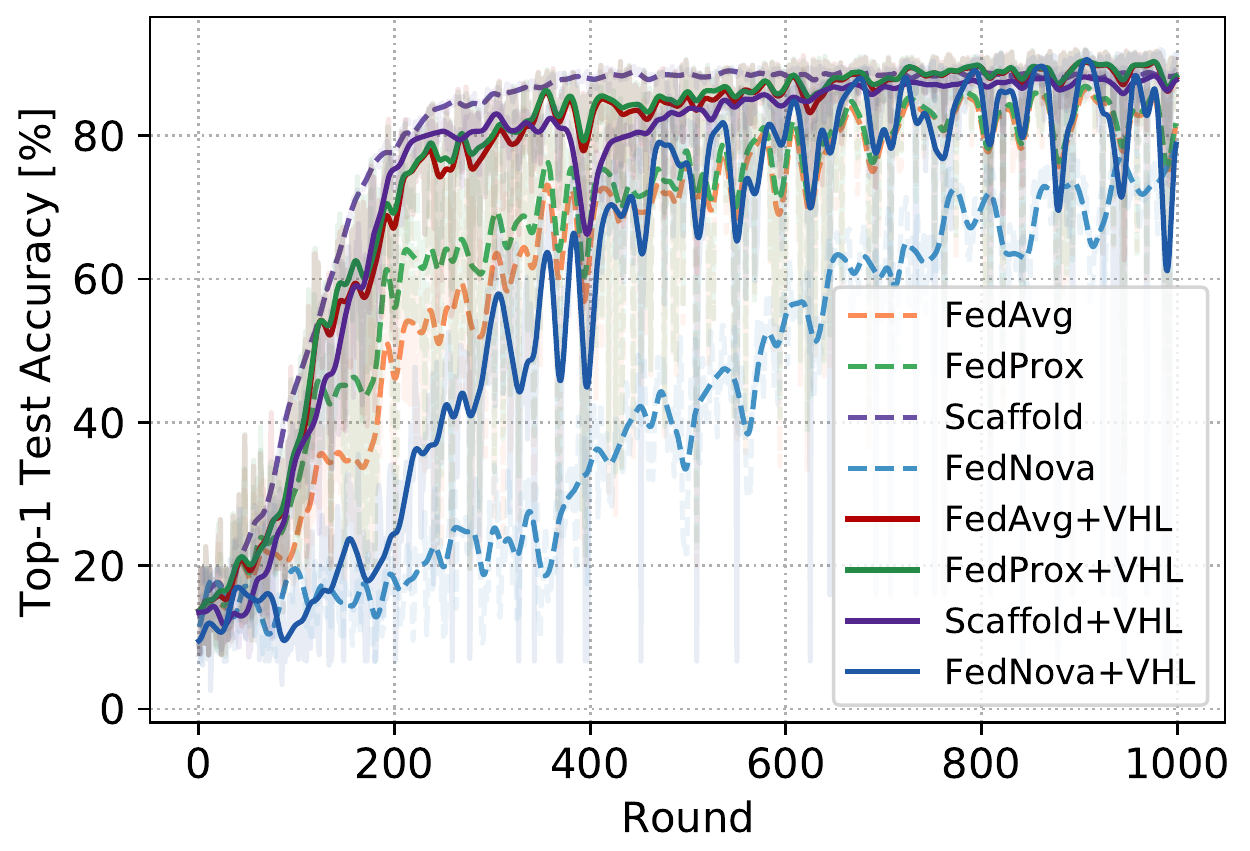}}
     \subfigure[$a=0.05$, $K=10$, $E=1$ ]{\includegraphics[width=0.24\textwidth]{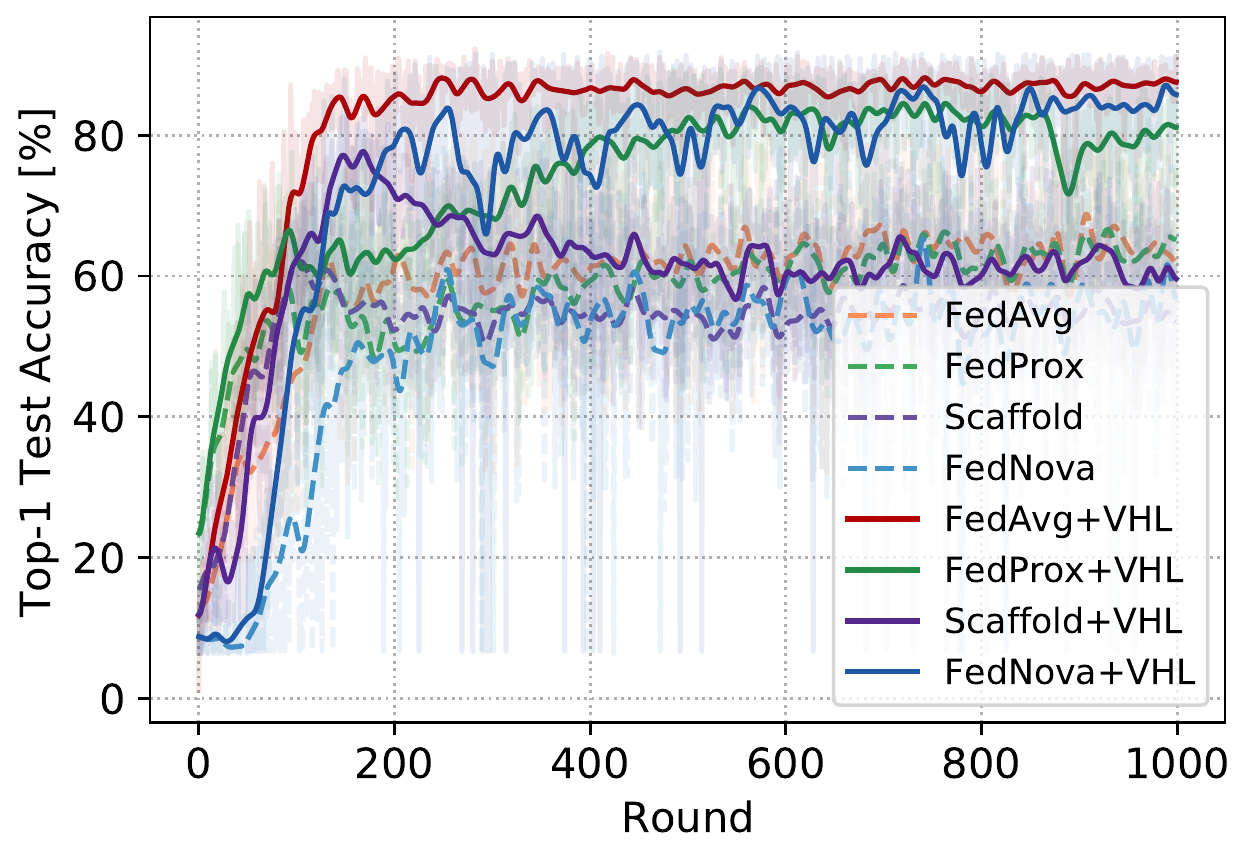}}
    \caption{Convergence comparison of SVHN.}
    \label{fig:Convergence-SVHN}
\vspace{-0.3cm}
% \vspace{-0.5cm}
\end{figure*}

\begin{figure*}[h!]
    \setlength{\abovedisplayskip}{-2pt}
    \setlength{\abovecaptionskip}{-2pt}
    \subfigbottomskip=-1pt
    \subfigcapskip=1pt
  \centering
% \!\!\!\!\!\!\!\!
     \subfigure[$a=0.1$, $K=10$, $E=1$ ]{\includegraphics[width=0.24\textwidth]{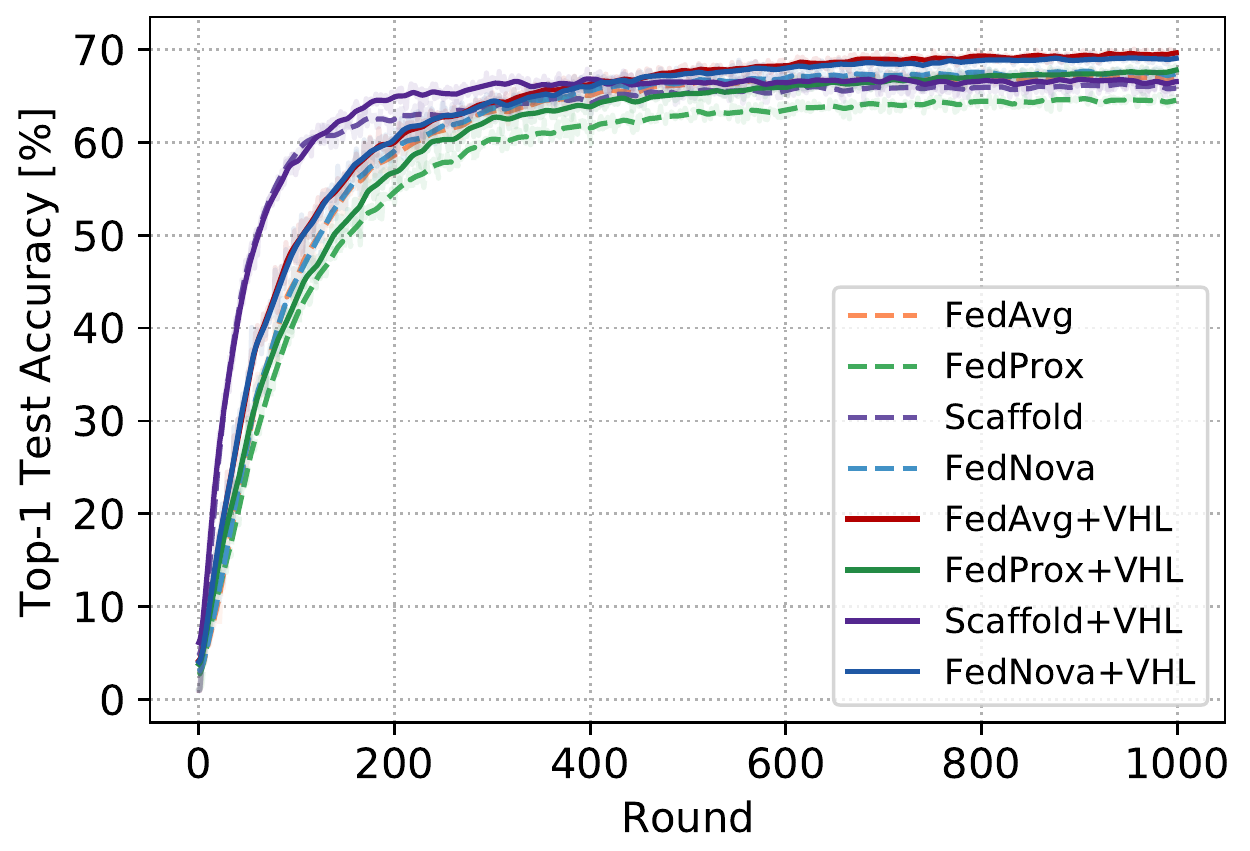}}
     \subfigure[$a=0.1$, $K=10$, $E=5$ ]{\includegraphics[width=0.24\textwidth]{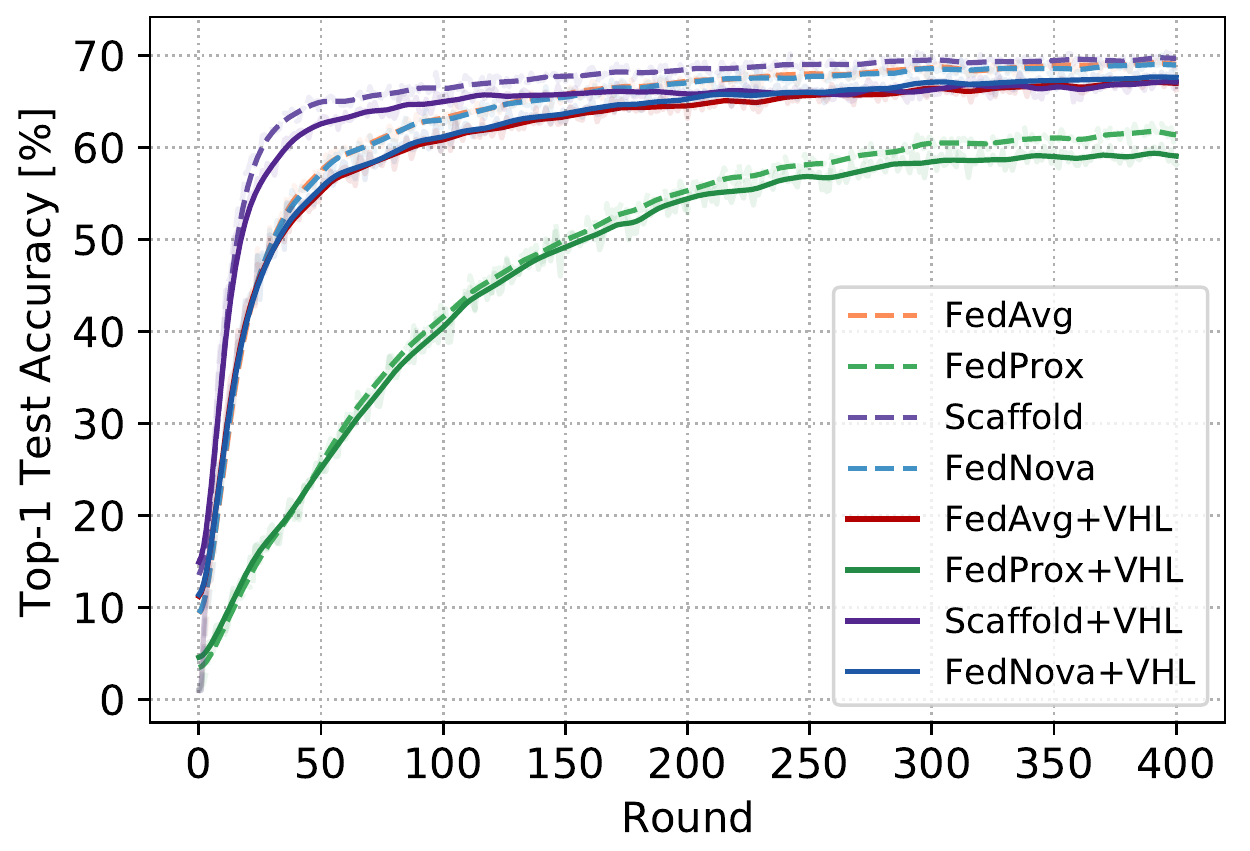}}
     \subfigure[$a=0.1$, $K=100$, $E=1$ ]{\includegraphics[width=0.24\textwidth]{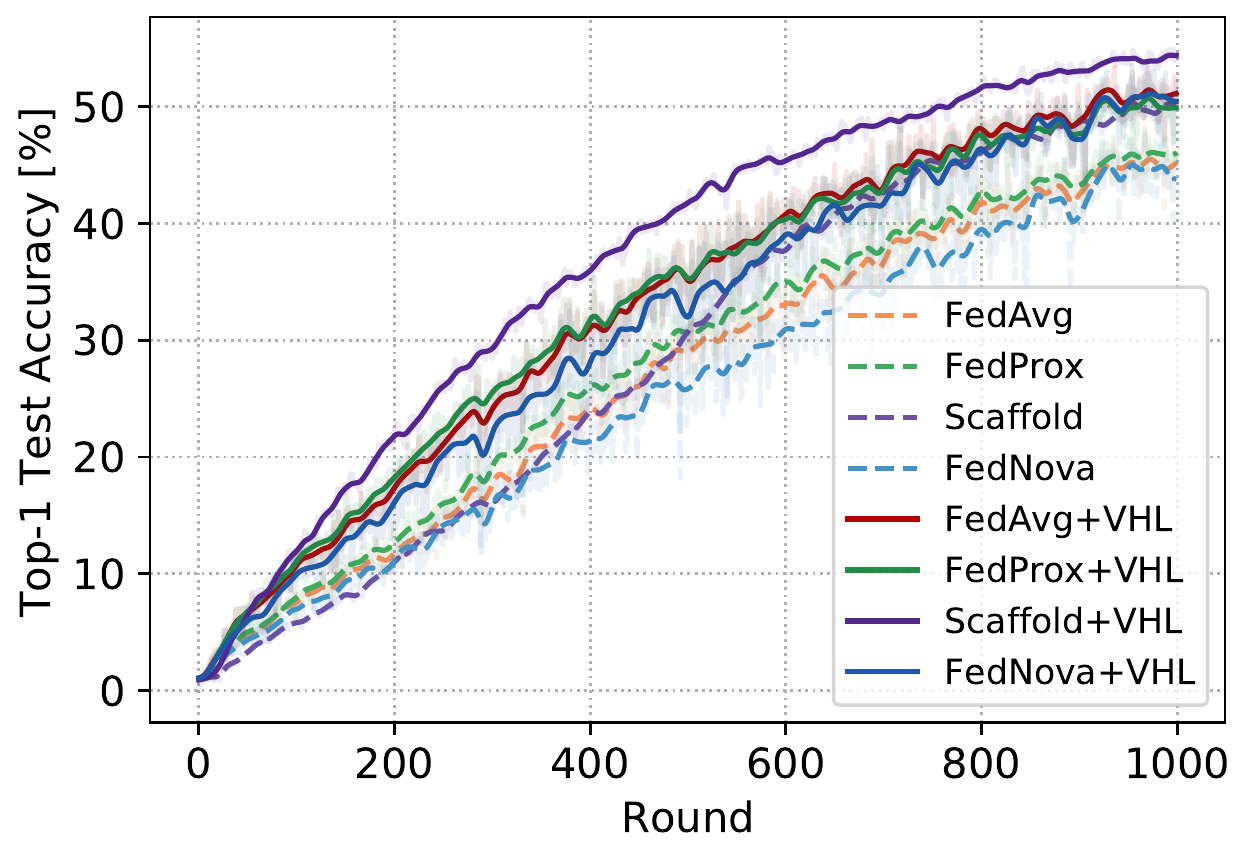}}
     \subfigure[$a=0.05$, $K=10$, $E=1$ ]{\includegraphics[width=0.24\textwidth]{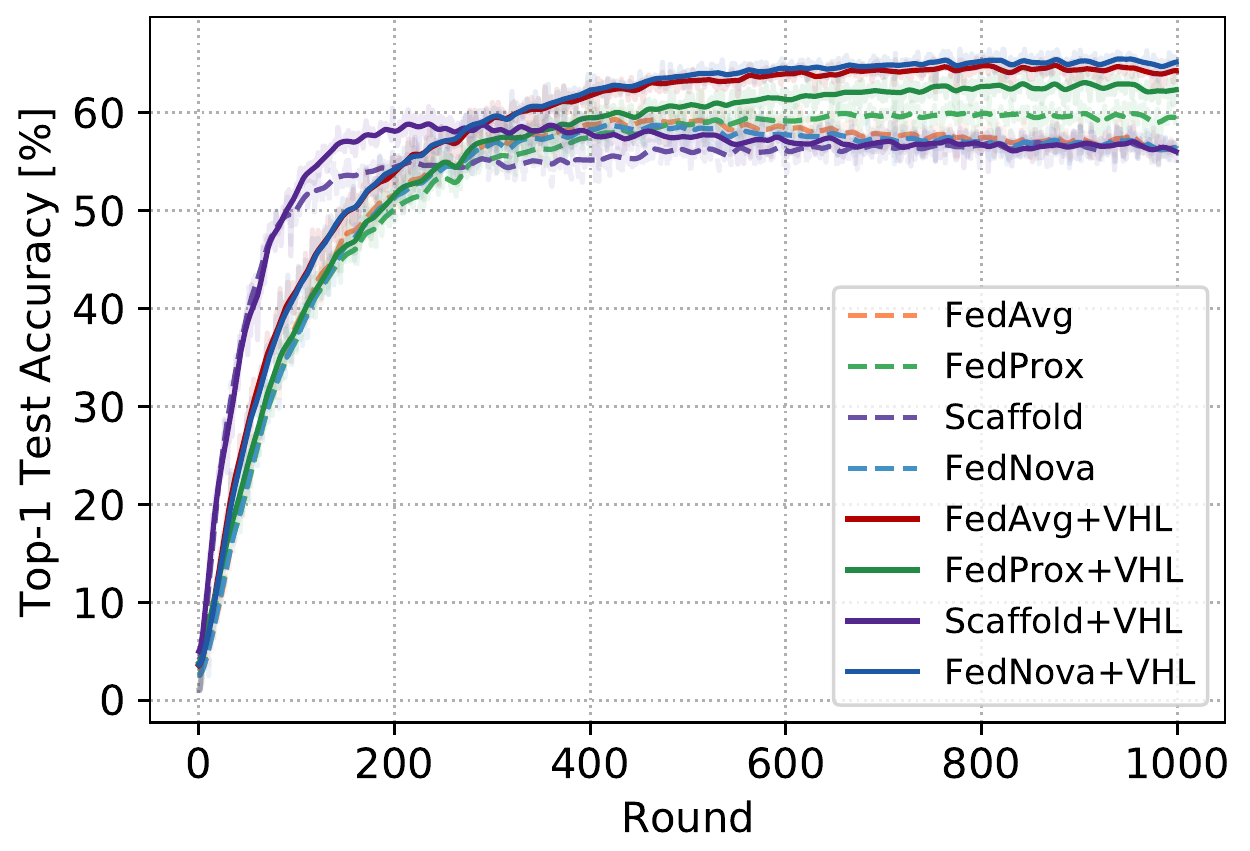}}
    \caption{Convergence comparison of CIFAR100.}
    \label{fig:Convergence-CIFAR100}
\vspace{-0.3cm}
% \vspace{-0.5cm}
\end{figure*}

\subsection{More Facets of VHL}\label{appendix:MoreFacets}

We provide more comparisons of convergence trends of three experiments, including loading pretrained models (Figure~\ref{fig:Ablation-Convergence} (a)), VHL with different batch size (Figure~\ref{fig:Ablation-Convergence} (b)), training accuracy with VHL but different virtual data (Figure~\ref{fig:Ablation-Convergence} (c)).

\textbf{Naive VHL.} As summarized in Sec.~\ref{sec:DiffFacetsVHL}, the naive VHL also could benefit FL. We discuss reasons of this in detail here. As pointed by~\citep{luo2021no, Fed2}, local training on local severe imbalanced dataset would lead to the severe preference of the classifer towards the local dominant classes. Due to this bias of classifer, the shallow layers of the model cannot learn good representations of the raw data. Therefore, introducing more common virtual data could alleviate such imbalance situation, even these virtual data has no relationship with the raw data. And the vision information owned by the virtual data could help clients learn better feature extractor, like~\citep{baradad2021learning}.

\textbf{Batch Size of VHL.} Similar with the contrastive learning~\citep{khosla2020supervised}, because VHL needs to pull data of the same class together, the batch size impacts the effect of VHL. As shown in Table~\ref{tab:FurtherStudy} and Figure~\ref{fig:Ablation-Convergence} (b), larger batch size of virtual data improve both convergence and model performance. However, larger batch size means the extra calculation cost, which may be improved in the future.

\textbf{Using Realistic Data.} To our surprise, using the Tiny-ImageNet as the virtual data to conduct VHL cannot outperform using the virtual noise data. According to the classic transfer learning~\citep{da1, baradad2021learning}, transferring knowledge from the realistic data should benefit generalization performance better than from the virtual images that seem like noise (see Figure~\ref{fig:NoiseStyleGan32x32c10}, ~\ref{fig:NoiseStyleGan32x32c100}, ~\ref{fig:Gaussian_Noise} and ~\ref{fig:cifar_conv_decoder}). We suspect the reason is that the Tiny-ImageNet is more difficult to be separated than the generated virtual dataset, leading to the inconsistent features of the Tiny-ImageNet. Then the calibration loses its effect. To validate this conjecture, We plot the training accuracy of the virtual dataset in Figure~\ref{fig:Ablation-Convergence} (c).

\begin{figure}[htb!]
    \setlength{\abovedisplayskip}{-2pt}
    \setlength{\abovecaptionskip}{-2pt}
    \subfigbottomskip=-1pt
    \subfigcapskip=1pt
    \small
    \centering
     \subfigure[Using Model Pretrained on Virtual Dataset. ]{\includegraphics[width=0.32\textwidth]{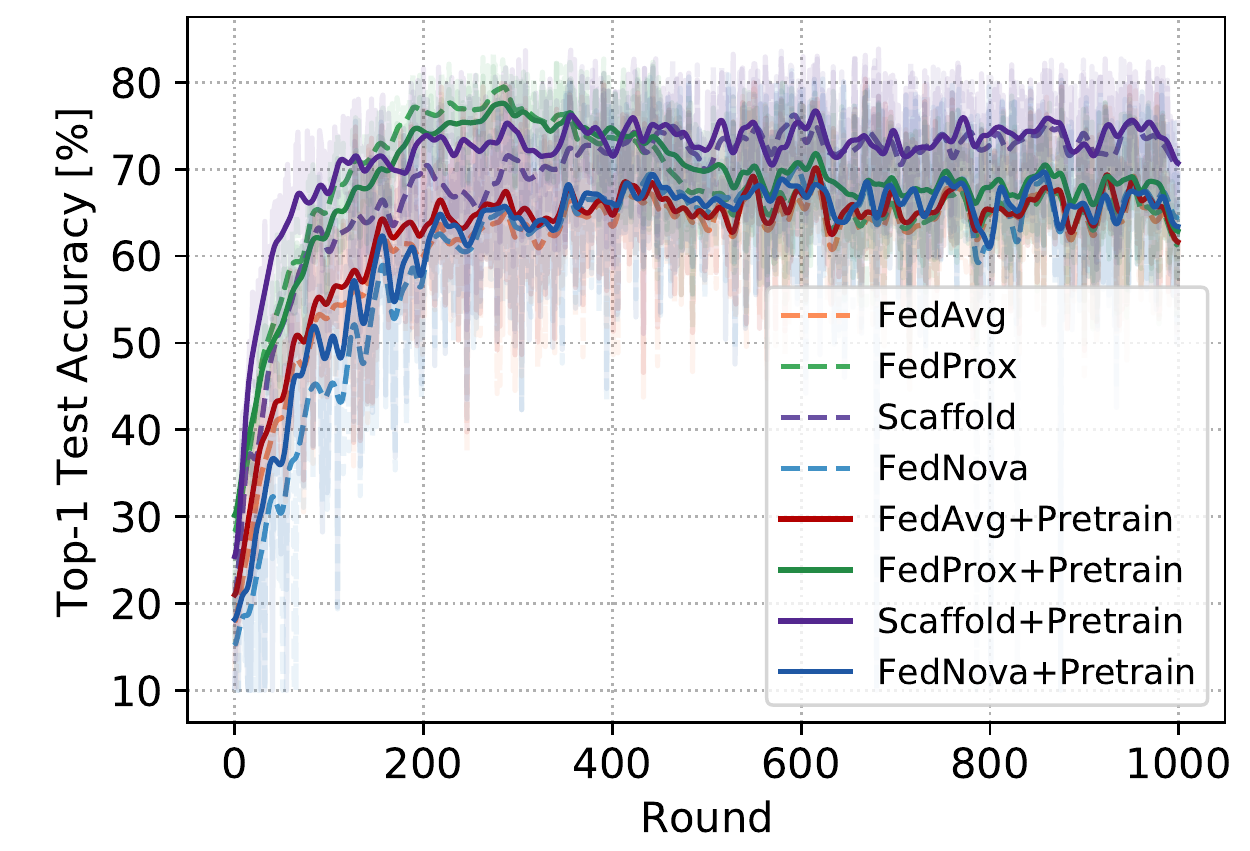}}
     \subfigure[VHL with different $B_v$ of virtual dataset. ]{\includegraphics[width=0.32\textwidth]{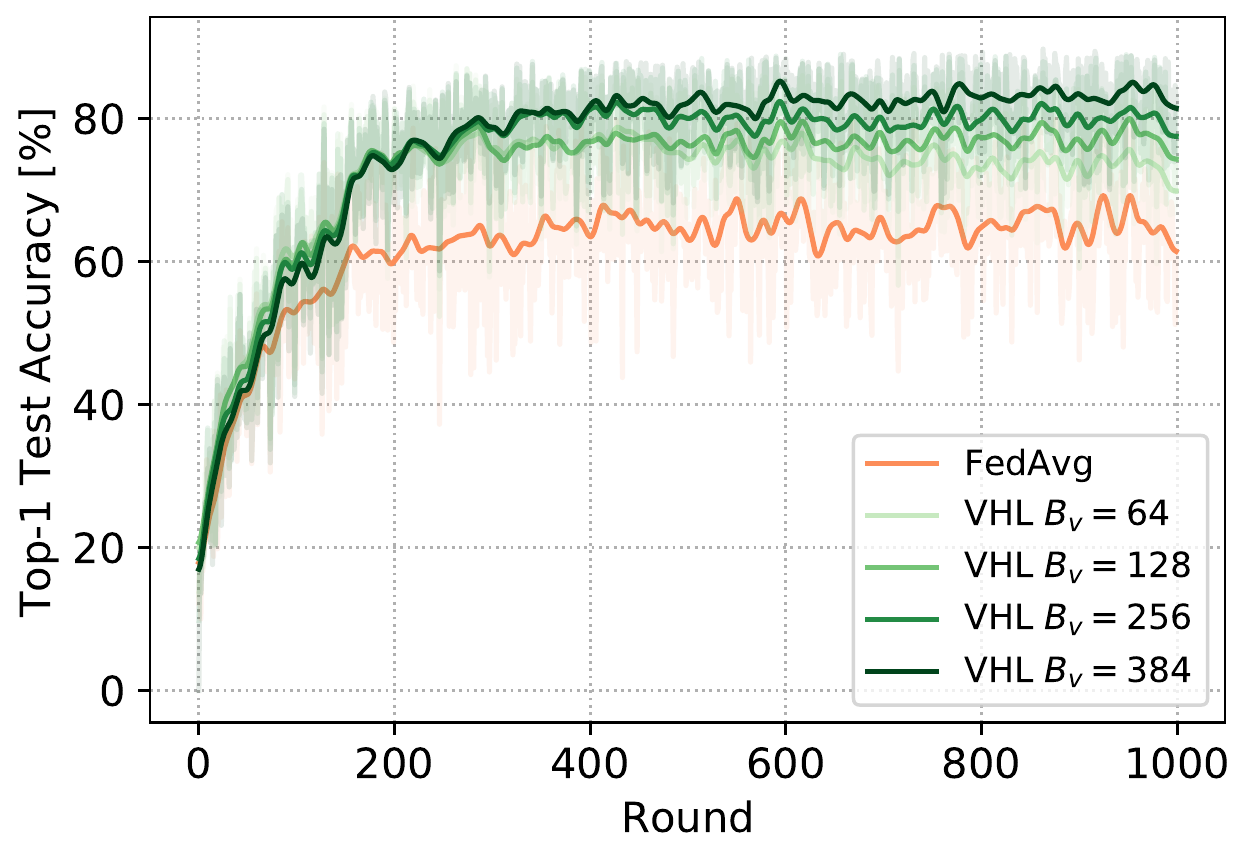}}
     \subfigure[Training ACC. on Virtual Dataset. ]{\includegraphics[width=0.32\textwidth]{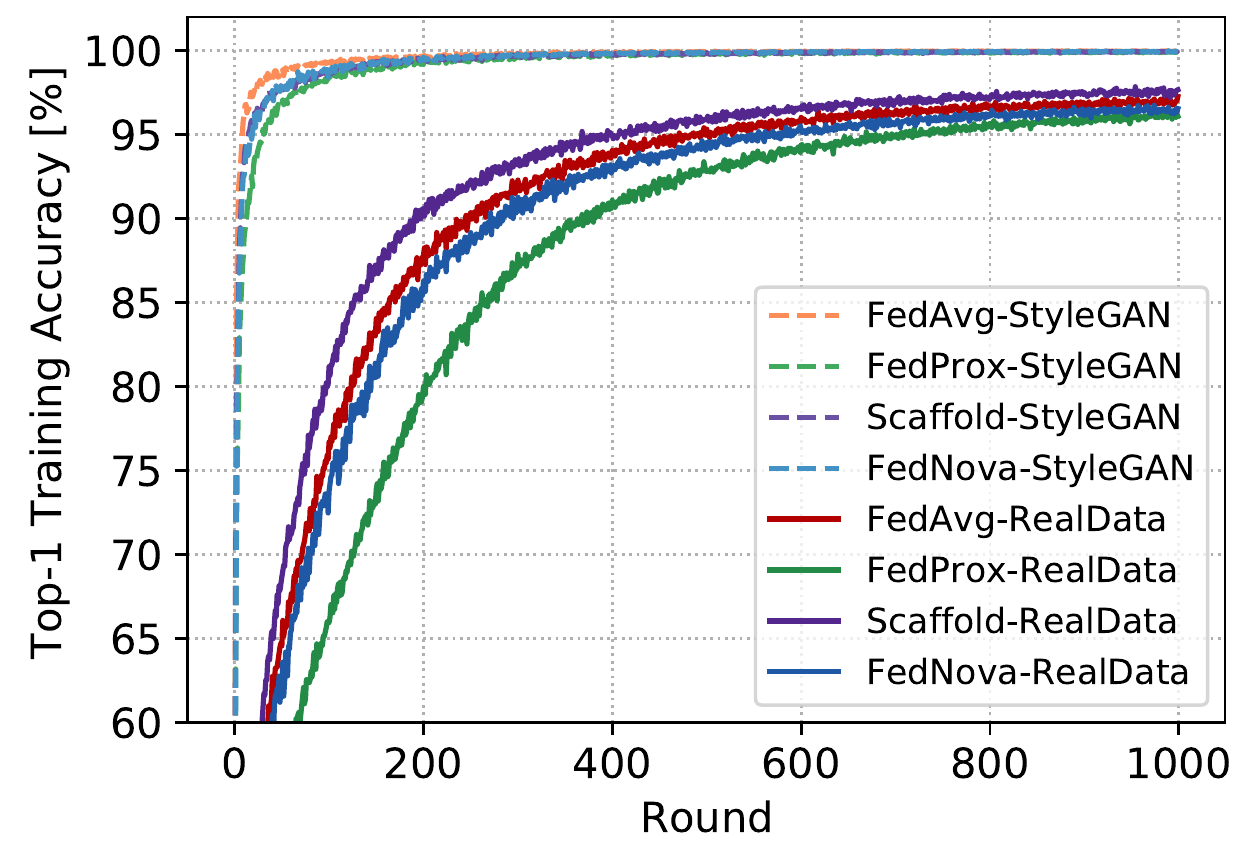}}
\caption{Convergence of more experiments.}
\label{fig:Ablation-Convergence}
\vspace{-0.1cm}
% \vspace{-0.5cm}
\end{figure}

% \begin{figure}[htb!]
% \small
% \setlength{\abovedisplayskip}{-2pt}
% \subfigbottomskip=-1pt
% \subfigcapskip=1pt
% \centering
% {\includegraphics[width=0.5\linewidth]{Ablation/UsingPretrained-resnet18_v2-cifar10-0.1-10-1.pdf}}
% \caption{Convergence of Using Model Pretrained on Virtual Dataset.}
% \label{fig:Ablation-Pretrain}
% % \vspace{-0.5cm}
% \end{figure}

% \begin{figure}[htb!]
% \small
% \setlength{\abovedisplayskip}{-2pt}
% \subfigbottomskip=-1pt
% \subfigcapskip=1pt
% \centering
% {\includegraphics[width=0.5\linewidth]{Ablation/AuxDataAcc-DifDataset-resnet18_v2-cifar10-0.1-10-1.pdf}}
% \caption{Training Accuracy on Virtual Dataset.}
% \label{fig:Ablation-TrainAcc-VirtualDataset}
% % \vspace{-0.5cm}
% \end{figure}

% \begin{figure}[htb!]
% \small
% \centering
% \setlength{\abovedisplayskip}{-2pt}
% \subfigbottomskip=-1pt
% \subfigcapskip=1pt
% {\includegraphics[width=0.9\linewidth]{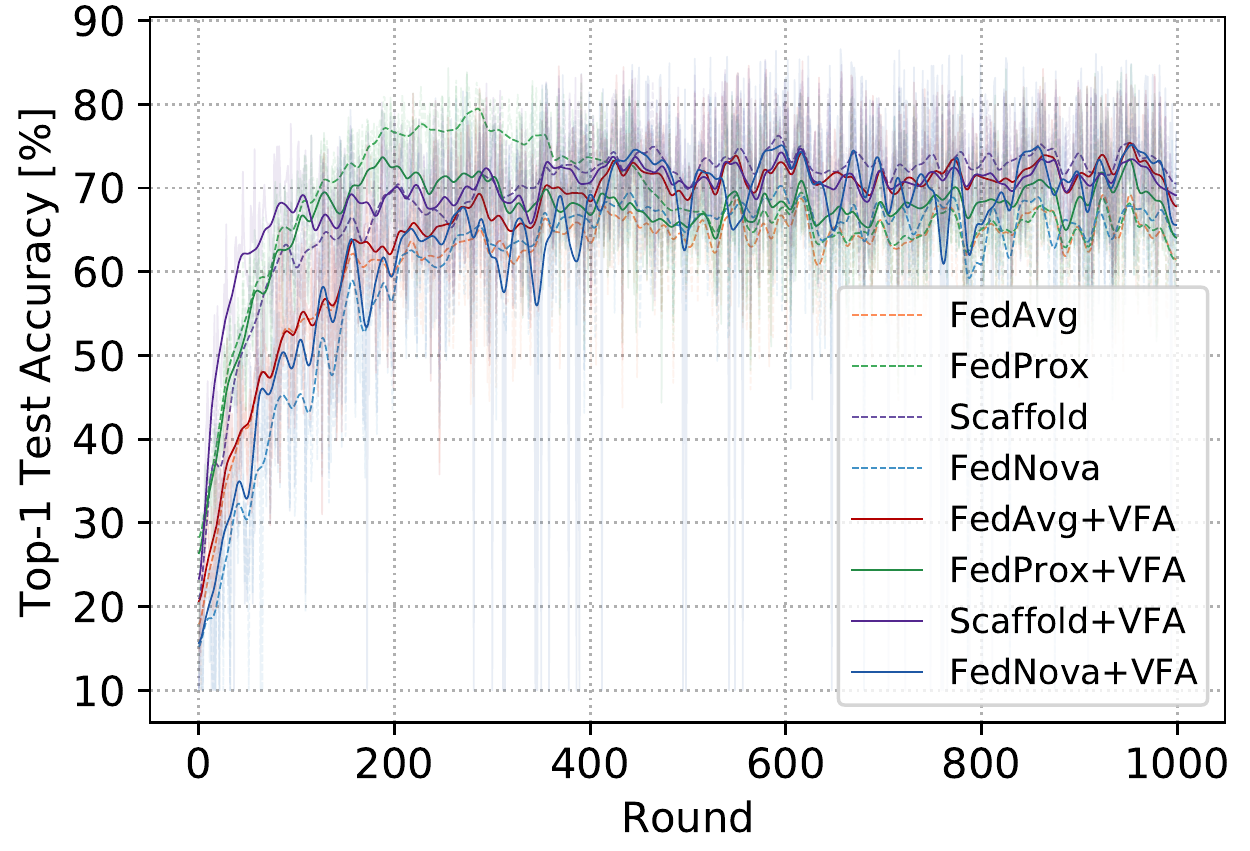}}
% \caption{Virtual Feature Alignment.}
% \label{fig:Ablation-VFA}
% % \vspace{-0.5cm}
% \end{figure}

% \begin{figure}[htb!]
% \small
% \centering
% \setlength{\abovedisplayskip}{-2pt}
% \subfigbottomskip=-1pt
% \subfigcapskip=1pt
% {\includegraphics[width=0.9\linewidth]{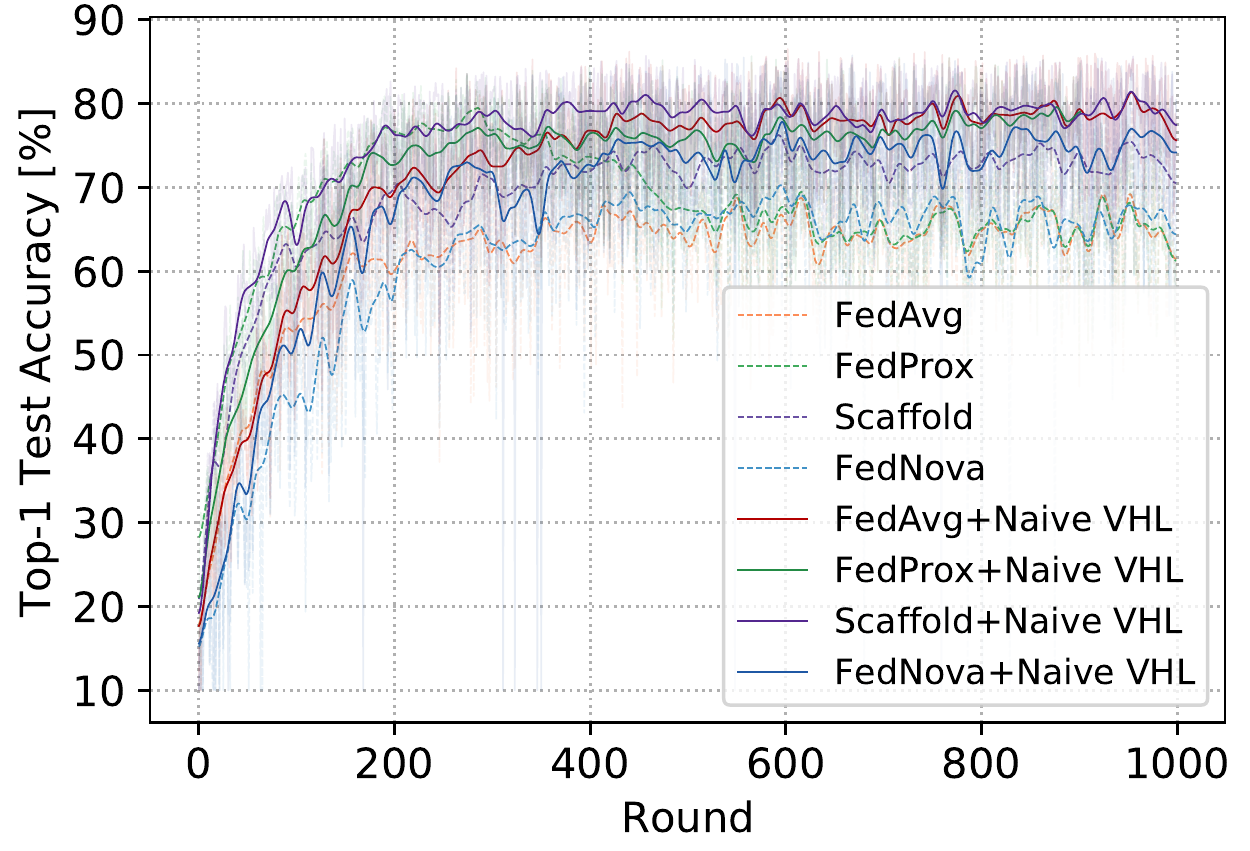}}
% \caption{Simply Training with Virtual Dataset.}
% \label{fig:Ablation-NaiveVHL}
% % \vspace{-0.5cm}
% \end{figure}

\subsection{Comparisons of Calculation Cost}\label{appendix:CalculationCost}

As mentioned in section~\ref{sec:limitations}, training on the virtual data causes the extra calculation cost. Thus, similar with comparing the convergence speed, we compare the total calculation cost of different algorithms to achieve the target accuracy in Table~\ref{tab:CalculationCost}. It shows how many data samples that each algorithm needs to process to achieve the target accuracy. We use $N_{total}$ and $R_{total}$ to represent the number of processed samples and the number of communication rounds to achieve the target accuracy, $R_{total}$ are shown in Table~\ref{tab:MainResult-cifar10},~\ref{tab:MainResult-fmnist},~\ref{tab:MainResult-SVHN} and ~\ref{tab:MainResult-cifar100}. And the $D_{data}$ represents the total size of all datasets from clients.

Note that for $E=1$ and $K=10$ setting, the $N_{total}= D_{data} \times R_{total} \times 0.5$, as the server randomly selects 5 clients each round. For $E=5$ and $K=10$ setting, $N_{total}= D_{data} \times R_{total} \times 0.5 \times 5$, as each client conducts 5 local epochs before communication. For $E=1$ and $K=100$ setting, $N_{total}= D_{data} \times R_{total} \times 0.1 $, as the server randomly selects 10 clients out of 100 clients each round. Because we sample virtual data with the same batch size as the natural data, the extra calculation cost is doubled for VHL.

Table~\ref{tab:CalculationCost} shows that for most of cases, VHL needs to process similar or less number of data samples to attain the target test accuracy. However, for CIFAR-100 dataset, VHL shows more calculation cost. This trend aligns with the communication, i.e. VHL shows less performance gains on CIFAR-100. To address this problem, one may consider reducing the sampling size of virtual data. Or, we may explore more interesting properties of VHL to make it computation-friendly, like the VFA discussed in section~\ref{sec:DiffFacetsVHL}.

\begin{table*}[t]
\centering
\caption{Calculation cost ($\times D_{data} $ ) to achieve the target accuracy.} 
\vspace{1pt}
\scriptsize{
\begin{tabular}{cc|c|c|cccc}
\toprule[1.5pt]
  \!\! &  & Target ACC  & w/(w/o) \textbf{VHL} & {\small FedAvg} & \small FedProx &\small SCAFFOLD &\small FedNova \\
\cmidrule[1.5pt]{1-8}
\multirow{8}{*}{CIFAR-10} &  \multirow{2}{*}{$a=0.1$, $E=1$, $K=10$} &  \multirow{2}{*}{100}  & without          & 143.5 & 94 & 145.5 & 175.5  \\ 
        &    &      &  with          &  128 & 128 & \textbf{90} & 128 \\
\cmidrule[1pt]{2-8}
        &  \multirow{2}{*}{$a=0.05$, $E=1$, $K=10$} &  \multirow{2}{*}{100} &  without          & 205.5 & \textbf{100.5} & Nan & Nan \\ 
        &     &     &  with          &  112 & 151 & Nan & 247 \\
\cmidrule[1pt]{2-8}
        &  \multirow{2}{*}{$a=0.1$, $E=5$, $K=10$} &  \multirow{2}{*}{100}  &  without          & 637.5 & Nan & \textbf{165} & 317.5 \\ 
        &    &      &  with          &  455 & 1275 & 225 & 335 \\
\cmidrule[1pt]{2-8}
        &  \multirow{2}{*}{$a=0.1$, $E=1$, $K=100$}  &  \multirow{2}{*}{100}  &  without          & 95.7 & 84.2 & 66.4 & Nan \\ 
        &      &    &  with          & 77 & \textbf{65} & 95.8 & 110.8 \\
\cmidrule[1.5pt]{1-8}
\multirow{8}{*}{FMNIST} &  \multirow{2}{*}{$a=0.1$, $E=1$, $K=10$}  &  \multirow{2}{*}{100} &  without          & 59.5 & 67.5 & 71.5 & 41.5 \\ 
        &   &       &  with          &   52 & 31 & \textbf{14} & 52 \\
\cmidrule[1pt]{2-8}
        &  \multirow{2}{*}{$a=0.05$, $E=1$, $K=10$}  &  \multirow{2}{*}{100} &  without          &  212.5 & \textbf{20.5} & Nan & 269   \\
        &    &      &  with          &  53 & 30 & 58 & 30 \\
\cmidrule[1pt]{2-8}
        &  \multirow{2}{*}{$a=0.1$, $E=5$, $K=10$} &  \multirow{2}{*}{100}  &  without          & 695 & Nan & 262.5 & 482.5 \\
        &    &      &  with          &   255 & 370 & \textbf{100} & 255 \\
\cmidrule[1pt]{2-8}
        &  \multirow{2}{*}{$a=0.1$, $E=1$, $K=100$} &  \multirow{2}{*}{100}  &  without          &65.8 & \textbf{49.1} & Nan & Nan \\
        &      &    &  with          &  87.2 & 56.6 & Nan & Nan \\
\cmidrule[1.5pt]{1-8}
\multirow{8}{*}{SVHN} &  \multirow{2}{*}{$a=0.1$, $E=1$, $K=10$} &  \multirow{2}{*}{100}  &  without          & 125.5 & Nan & Nan & 125.5 \\
        &    &      &  with          & \textbf{75} & 271 & Nan & \textbf{75} \\
\cmidrule[1pt]{2-8}
        &  \multirow{2}{*}{$a=0.05$, $E=1$, $K=10$} &  \multirow{2}{*}{100}  &  without          & 178.5 & Nan & Nan & 370.5 \\
        &     &     &  with          &  \textbf{94} & 320 & 147 & 128 \\
\cmidrule[1pt]{2-8}
        &  \multirow{2}{*}{$a=0.1$, $E=5$, $K=10$} &  \multirow{2}{*}{100}  &  without          & \textbf{327.5} & Nan & Nan & 405 \\ 
        &     &     &  with          & 725 & 1755 & 1050 & 375 \\
\cmidrule[1pt]{2-8}
        &  \multirow{2}{*}{$a=0.1$, $E=1$, $K=100$}  &  \multirow{2}{*}{100} &  without          & \textbf{61.8} & \textbf{61.8} & 64.3 & Nan \\ 
        &    &      &  with          &  72.4 & 71.2 & 193.6 & 135.2 \\
\cmidrule[1.5pt]{1-8}
\multirow{8}{*}{CIFAR-100} &  \multirow{2}{*}{$a=0.1$, $E=1$, $K=10$} &  \multirow{2}{*}{100}  &  without          & 248.5 & Nan & 383 & \textbf{236} \\ 
        &   &       &  with          & 384 & 617 & 294 & 384 \\
\cmidrule[1pt]{2-8}
        &  \multirow{2}{*}{$a=0.05$, $E=1$, $K=10$} &  \multirow{2}{*}{100}  &  without          & \textbf{257} & Nan & Nan & Nan \\ 
        &  &        &  with         & 354 & 482 & Nan & 320 \\
\cmidrule[1pt]{2-8}
        &  \multirow{2}{*}{$a=0.1$, $E=5$, $K=10$} &  \multirow{2}{*}{100}  &  without          & 707.5 & Nan & \textbf{427.5} & 730 \\
        &    &      &  with          &   1635 & Nan & 1455 & Nan \\
\cmidrule[1pt]{2-8}
        &  \multirow{2}{*}{$a=0.1$, $E=1$, $K=100$} &  \multirow{2}{*}{100}  &  without &    96.7 & 95.5 & \textbf{82.7} & 96.7 \\ 
        &   &       &  with          &  143.4 & 143.4 & 131.2 & 159.4 \\
\bottomrule[1.5pt] 
\end{tabular}
}
\label{tab:CalculationCost}
\end{table*}

\end{document}